\newtheorem{theorem}{Theorem}
\newtheorem{definition}{Definition}
\newtheorem{lemma}{Lemma}
\title{k-HyperEdge Medoids for Clustering Ensemble}
\author{
    Feijiang Li, Jieting Wang, Liuya zhang, Yuhua Qian, Shuai jin, Tao Yan, Liang Du
}
\begin{document}

\maketitle

\begin{abstract}
Clustering ensemble has been a popular research topic in data science due to its ability to improve the robustness of the single clustering method. Many clustering ensemble methods have been proposed, most of which can be categorized into clustering-view and sample-view methods. The clustering-view method is generally efficient, but it could be affected by the unreliability that existed in base clustering results. The sample-view method shows good performance, while the construction of the pairwise sample relation is time-consuming. In this paper, the clustering ensemble is formulated as a k-HyperEdge Medoids discovery problem and a clustering ensemble method based on k-HyperEdge Medoids that considers the characteristics of the above two types of clustering ensemble methods is proposed. In the method, a set of hyperedges is selected from the clustering view efficiently, then the hyperedges are diffused and adjusted from the sample view guided by a hyperedge loss function to construct an effective k-HyperEdge Medoid set. The loss function is mainly reduced by assigning samples to the hyperedge with the highest degree of belonging. Theoretical analyses show that the solution can approximate the optimal, the assignment method can gradually reduce the loss function, and the estimation of the belonging degree is statistically reasonable. Experiments on artificial data show the working mechanism of the proposed method. The convergence of the method is verified by experimental analysis of twenty data sets. The effectiveness and efficiency of the proposed method are also verified on these data, with nine representative clustering ensemble algorithms as reference.
\end{abstract}

%
\section{Introduction}

Data clustering is an interesting technique to discover the group structure inherent in a set of data samples without labels \cite{vega2011survey}. The general objective of data clustering is discovering a set of clusters in which the samples in the same cluster share some kind of high similarity while the samples in different clusters share high diversity. With the complexity of the data development \cite{li2020}, many traditional clustering methods suffer from the robust problem. To handle this, the clustering ensemble technique that combines multiple diverse clustering results has been illustrated as an effective strategy to improve the robustness of a single clustering result.

The clustering ensemble problem is first proposed by Strehl and Ghosh \cite{Strehl}, which is described as combining multiple clustering results of a set of objects into a single consolidated clustering that shares most information with the ensemble base without accessing the features of the data samples or the algorithms that produce these clustering results. Following this description, many methods have been proposed to solve the clustering ensemble problem. Most of the existing clustering ensemble methods can be roughly categorized into clustering-view and sample-view.

As for the clustering-view methods, their main idea is to generate a clustering result that is similar to the clustering results in the ensemble base. To generate such a clustering result, a simple strategy is to first align the labels of the different clustering results and then merge them using a fusion strategy, such as voting. Another strategy is to give a clustering similarity measure and then use a dynamic programming method to produce the integrated result. Generally, due to that the number of clustering results is much less than that of samples, most of the clustering-view methods are efficient. However, the unreliability that existed in the base clustering results brings a negative effect on the ensemble performance \cite{ZHOU2022171, wang2022generalization}.

As for the sample-view methods, their main idea is to generate a clustering result that the samples in the same cluster share high similarity. There are two issues for this type of method, one is how to evaluate the similarity between samples and the other one is how to discover such a clustering result. As for the first issue, the co-association relation that evaluates the times that two samples appear in the same cluster is one of the most utilized sample pair similarity measures \cite{li2021}. As for the second issue, many partition methods have been proposed, in which the clustering strategies \cite{liu2017spectral,cheng2024deep}, graph partition strategies \cite{Strehl, li2023fuzzy}, or heuristic strategies \cite{li2019clustering} are utilized. This type of method has shown excellent effectiveness in handling the clustering ensemble problem. However, calculating the pairwise sample relationships and optimizing a clustering loss is often time-consuming.

With consideration of the characteristics of the above two types of clustering ensemble methods, in this paper, a method based on k-HyperEdge Medoids is developed, which integrates clustering-view and sample-view. First, the description of k-HyperEdge Medoids is given. For a hypergraph, the k-HyperEdge Medoids are $k$ nonoverlapping hyperedges that each hyperedge in the hypergraph could find a similar hyperedge in the k-HyperEdge Medoids. Then the clustering ensemble problem is formulated as a k-HyperEdge Medoids discovering problem. We propose a method to solve the problem and conduct theoretical and experimental analysis about the method.

The main work of this paper in terms of methods, theory, and experiments includes:

\textbf{Method.} A k-HyperEdge Medoids discovery method is proposed \footnote{Demo code at https://github.com/FeijiangLi/Code-k-HyperEdge-Medoids-for-Clustering-Ensemble-AAAI-24}. This method mainly contains three steps, which are k-HyperEdge initialization, k-HyperEdge diffusion, and k-HyperEdge adjustion. The k-HyperEdge initialization step is from the clustering view, which utilizes the k-mediods algorithm to discover a set of sub-hyperedges mediods. The k-medoids algorithm is suitable for arbitrary distance measurement and robust to noise \cite{Tiwari2020}. The k-HyperEdge diffusion and k-HyperEdge adjusting improve the quality of the initial hyperedges methods from the sample view. The improvement is realized by updating the sample assignment guided by a hyperedge loss function based on cluster quality. The loss function is mainly reduced by assigning samples to the hyperedge with the highest degree of belonging.

\textbf{Theory.} Three main issues in the proposed method are theoretically analyzed. Firstly, given the initial hyperedge set, the solution has been proven to be able to approximate to the optimal, indicating its ability to guide model learning. Secondly, assigning samples to the hyperedge with the highest belonging degree has been proven to be able to reduce the value of the loss function. Finally, the method for estimating the belonging degree has been proven to be statistically reasonable.

\textbf{Experiment.} The proposed method is experimentally analyzed from three aspects. The working mechanism of the proposed method is illustrated on artificial data. The effectiveness of the method in solving the clustering ensemble problem is illustrated on twenty real-world data sets compared with the other nine representative clustering ensemble methods. The time complexity of the proposed method is analysed and the time costs of the compared methods on real data sets are reported. The results show the effectiveness and efficiency of the proposed method.

\section{Preliminaries and Related Work}

\subsection{Clustering Ensemble}

As for clustering ensemble, what we can obtain is a set of clustering results $\Pi=\{\pi^1, \pi^2,\ldots, \pi^l\}$ on a data set $\mathcal{X}=\{x_1, x_2,\ldots, x_n\}$, where $l$ is number of base clustering result and $n$ is the number of data point. The $j$th clustering result contains $k_j$ clusters $\pi^j=\{c^j_1, c^j_2, \ldots, c^j_{k_j}\}$ and $\mathcal{X}=\bigcup_{i=1}^{k_j} c^j_i$. The cluster label of $x_i$ in $\pi^j$ is $\pi^j(x_i)$. For $\Pi$, there exists $n_c=\sum_{i=1}^{h}k_i$ base clusters.
The task of the clustering ensemble is to discover a clustering result $\pi^*=\{c^*_1, c^*_2,\ldots c^*_k\}$ that shares the most information with $\Pi$, where $k$ is the excepted number of clusters. In the original description of the clustering ensemble, the generation of $\pi^*$ does not access the features or algorithms that determine these clustering results \cite{Strehl}. This paper follows this description.

Many clustering ensemble methods have been proposed, accompanied by various classification schemes for these methods \cite{vega2011survey}. From the view of processing, most of the clustering ensemble methods can be roughly categorized into clustering-view and sample-view.

\subsection{Clustering-view Clustering Ensemble}

The clustering ensemble methods from the clustering-view are mainly guided by the objective that the integrated clustering should be similar to the base clustering results. Then, the objective can be expressed as follows,
\begin{align}\label{obj1}
\pi^*=\mathop{\arg \max}_{\pi }\sum_{\pi' \in \Pi} \text{sim}(\pi, \pi'),
\end{align}
where $\text{sim}(\pi, \pi')$ is a similarity between $\pi$ and $\pi'$.

The optimization of the objective is an NP-hard problem, most of the proposed methods generally obtain an approximate solution. One direct approach is to first align the cluster labels and then merge them using a voting strategy \cite{zhou2006clusterer}. Ayad et al. \cite{ayad2008cumulative, ayad2010on} introduced an idea of cumulative voting that utilizes a probabilistic mapping method for cluster label alignment, and computed empirical probability summarizing the ensemble. Carpineto et al. \cite{carpineto2012consensus} proposed a probabilistic rand index and cast clustering ensemble as an optimization problem of the PRI between a target result and the ensemble base. Li et al. \cite{Li2017} proposed a Dempster-Shafer evidence theory-based clustering ensemble method that substitutes voting with evidence theory. Huang et al. \cite{HUANG2023109255} proposed an ensemble hierarchical clustering algorithm based on merits at the clustering level, the main idea of which is to re-cluster the selected clusters to generate a consensus clustering. Guilbert et al. \cite{Guilbert2022} proposed an anchored constrained clustering ensemble method that constructs an allocation matrix and uses an integer linear programming model to find the consensus partition.

Generally, these methods are simple and efficient. However, the performance of these methods relies on the reliability of the underlying clustering ensemble. Due to the diversity among the base clustering results, low-reliability results exist in the ensemble inevitably, which could affect the performance of many clustering-view clustering ensemble methods \cite{wang2023rss}. 

\subsection{Sample-view Clustering Ensemble}

The clustering ensemble methods from the sample view are mainly guided by the objective that the intra-cluster samples in the integrated clustering should be similar to each other. This objective is equivalent to finding a clustering result with high cluster quality, where cluster quality is measured by the consistency among intra-cluster samples. The objective can be expressed as follows
\begin{equation}\label{obj2}
\pi^*=\mathop{\arg\max}_{\pi}\sum_{c_i \in \pi} Q(c_i),
\end{equation}
where $Q(c_i)$ indicates the quality of $c_i$. From the sample view, $Q(c_i)$ is generally evaluated by the consistency between samples in $c_i$.

This type of methods are generally realized by optimizing a clustering loss function. Fred et al. \cite{Fred} first proposed the concept of the co-association matrix (CA matrix) and proposed an evidence accumulation method that utilizes the idea of hierarchical clustering. The spectral ensemble clustering method proposed by Liu utilizes the spectral clustering method to optimize the clustering loss \cite{liu2017spectral}. Iam-On proposed a type of link-based approach to refine the cluster-based matrix, and k-means, PAM, and graph partitioning algorithms are used to generate the final result. Graph partitioning techniques have been used in many clustering ensemble methods to optimize the clustering loss, such as CSPA, HGPA, MCLA \cite{Strehl}, HBGF \cite{fern2004solving}, PTGP \cite{huang2016robust}, U-SENC \cite{Huang2020},  SCCBG \cite{zhou2020p295}, etc. Hao et al. \cite{Hao10173506} proposed a clustering ensemble method with attentional representation that jointly optimizes the clustering loss and the reconstruction loss. In addition, many enriching methods are proposed to improve the relation between samples, such as the probability trajectory proposed by Huang et al. \cite{huang2016robust}, the propagating cluster-wise similarities with meta-cluster proposed by Huang et al. \cite{huang2021}, co-association matrix self-enhancement proposed by Jia et al. \cite{Jia2023}, dense representation proposed by Zhou et al. \cite{Zhou2019Ensemble}, stable clustering ensemble method based on evidence theory proposed by Fu et al. \cite{Fu2022}, etc.

This type of method has shown excellent ensemble performance. However, for such methods, the construction of the co-association relationship is time-consuming, so as with many enriching methods, the time cost of which is generally at least $O(n^2)$, where $n$ is the number of samples. Although some methods utilize representative point selection or downsampling strategies for acceleration, this affects the stability and effectiveness of the methods.


In this paper, considering the characteristics of the above two types of clustering ensemble methods, we propose a clustering ensemble method based on k-HyperEdge Medoids. 

\section{k-HyperEdge Medoids}

It is natural to represent a set of clustering results as a hypergraph \cite{ZHOU2022171}. In this section, we introduce the k-HyperEdge Medoids.

A hypergraph $\mathcal{G}$ can be represented as $\mathcal{G}= \{\mathcal{V}, \mathcal{E}, \mathcal{W}\}$, where $\mathcal{V}$ is the node set, $\mathcal{E}$ is the hyperedge set, and $\mathcal{W}$ is the set of hyperedge weight. Mainly different from the traditional edge in the graph, the hyperedge links a set of nodes in the graph that can represent complex multivariate relationships more flexibly. For example, in a hypergraph with a node set $\mathcal{V}=\{x_1, x_2, x_3, x_4, x_5\}$, a hyperedge $e$ could connect multiple nodes, such as $e=\{x_1, x_2, x_5\}$, representing a common relationship or connection among $x_1$, $x_2$, and $x_5$. 

Given a set of clustering results $\Pi=\{\pi^1, \pi^2,\ldots, \pi^l\}$ on a data set $\mathcal{X}=\{x_1, x_2,\ldots, x_n\}$, from cluster view, $\Pi$ can be expressed as $\Pi_c=\{c^1_1, c^1_2,\ldots, c^1_{k_1},\ldots, c^l_1, c^l_2,\ldots, c^l_{k_l}\}$. For convenience, we denote
\begin{align}\label{pic}
\Pi_c=\{c_1, c_2, \ldots, c_{n_c}\}.
\end{align}
The hypergraph representation of $\Pi_c$ is $\mathcal{G}(\Pi_c)= \{\mathcal{V}(\Pi_c), \mathcal{E}(\Pi_c), \mathcal{W}(\Pi_c)\}$, where $\mathcal{V}$ is the set of data points $\mathcal{V}(\Pi_c)=\mathcal{X}$, $\mathcal{E}$ is the set of clusters $\mathcal{E}(\Pi_c)= \Pi_c$, and $\mathcal{W}$ is the weight of each cluster. The cluster weight can be quantified by cluster quality evaluation. In this paper, the cluster weight is not a focus and is set equally.

\begin{definition}\label{defkhm}
\textbf{(k-HyperEdge Medoids)} For a hypergraph $\mathcal{G}= \{\mathcal{V}, \mathcal{E}, \mathcal{W}\}$, the k-HyperEdge Medoids $\mathcal{E}_m(\mathcal{G})$ are $k$ hyperedges
\begin{align}\label{khm}
&\mathcal{E}_m(\mathcal{G})=\mathop{\arg \max}_{\mathcal{E}'}\sum_{e \in \mathcal{E}} \max_{e' \in \mathcal{E}} \text{sim}(e, e'),\\ \notag
\text{s.t. }& \bigcup_{e\in \mathcal{E}_m}=\mathcal{V}, \quad \forall e_i, e_j \in \mathcal{E}_m, i\neq j, e_i\cap e_j=\emptyset.
\end{align}
\end{definition}

From Definition \ref{defkhm}, it is known that the k-HyperEdge Medoids are k representative hyperedges that maximize overall similarity with all hyperedges in the hypergraph. Additionally, the k-HyperEdge Medoids are non-overlapping and collectively cover all nodes. From the view of clustering ensemble, each cluster can be treated as a hyperedge, and multiple clusters form a hypergraph. Then the k-HyperEdge Medoids of $\mathcal{G}(\Pi_c)$ can be treated as a clustering ensemble result of $\Pi_c$, and the clustering ensemble problem can be formulated as a k-HyperEdge Medoids discovering problem.

\section {A k-HyperEdge Medoids Discovery Method}

In this section, a heuristic k-HyperEdge Medoids discovery method is proposed to approximatively solve Formula (\ref{khm}). The method mainly contains three steps, which are k-hyperedge initialization, k-hyperedge diffusion, and k-hyperedge adjustion. The main framework of the k-HyperEdge Medoids discovers method is shown as Figure \ref{frame}. Then processes of each step are introduced.

\begin{figure}[t] %
\begin{center}
\includegraphics[width=0.45\textwidth]{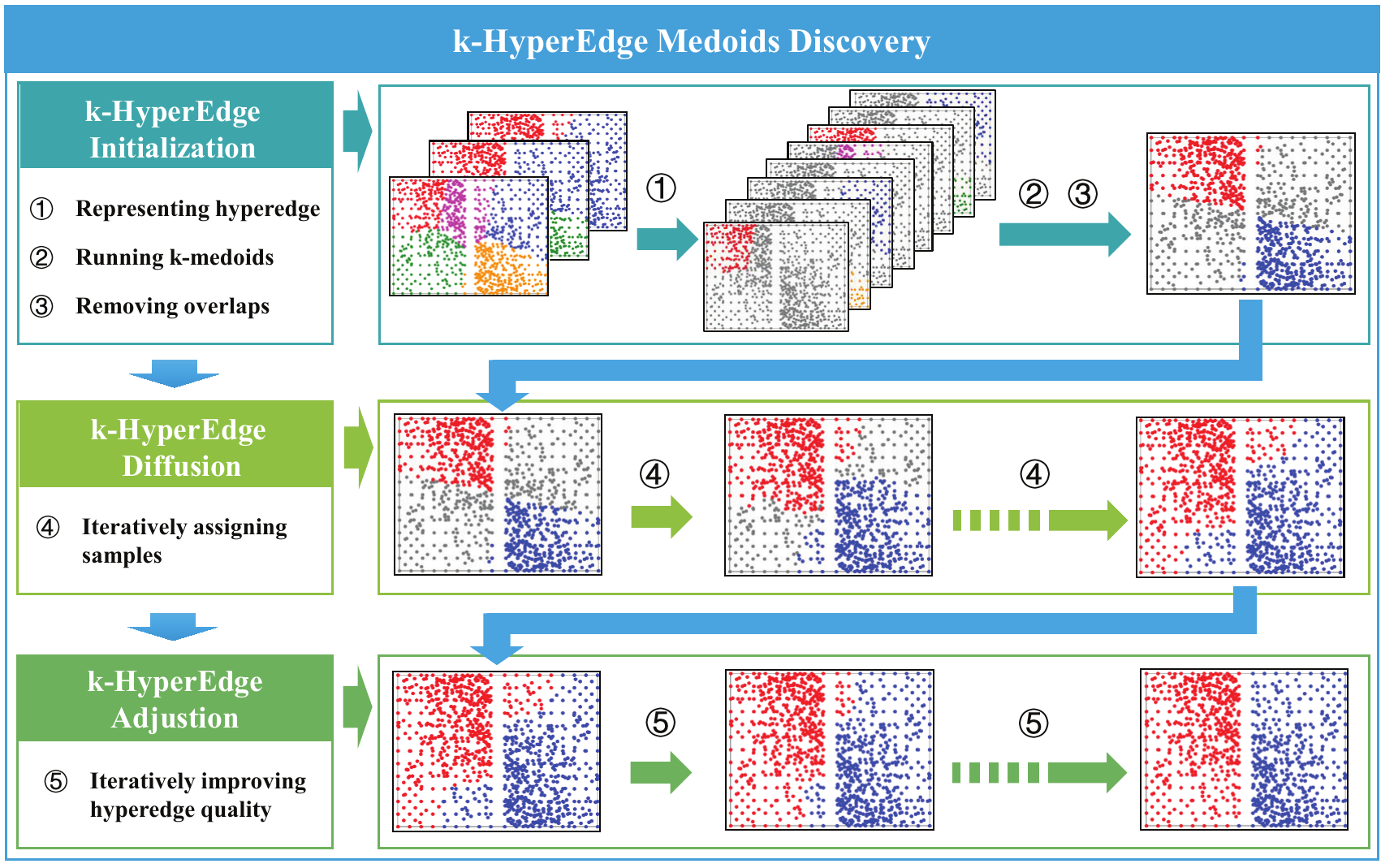}
\caption{The framework of a k-HyperEdge Medoids discovery method}
\label{frame}
\end{center}
\end{figure}

\subsection{k-HyperEdge Initialization}

The objective of the k-hyperedge initialization step is finding a set of $k$ hyperedges from $\mathcal{E}(\Pi_c)=\{c_1, c_2, \ldots, c_{n_c}\}$, i.e. $\mathcal{E}i=\{ei_1, ei_2, \ldots ei_k\}\subset \mathcal{E}$, that minimizes the following loss:
\begin{equation}\label{h_i}
\mathcal{L}_{\text{ini}}(\mathcal{E}i)=\sum_{i=1}^{n_c} \min_{ei\in \mathcal{E}i} \left(1-\text{sim}(ei, c_i)\right),
\end{equation}
where $sim(ei, c_i)$ is a similarity measure between $ei$ and the $i$th hyperedge $c_i$. To prevent the discovery of an excessively large hyperedge, in this step, we use the following similarity measure
\begin{equation}\label{jc}
\text{sim}(ei, c_i)=\frac{|ei \cap c_i|}{|c_i|}.
\end{equation}

As shown by Equation (\ref{h_i}), what we want to find is $k$ hyperedges, in which each hyperedge in $\mathcal{E}(\Pi_c)$ could find a similar representation. Equation (\ref{h_i}) is equivalent to the loss function of the k-medoids algorithm. Then the k-medoids algorithm can be directly utilized to solve (\ref{h_i}). Meanwhile, the k-medoids algorithm has several advantages in this step. Firstly, the medoids are real points, which are interpretable. Secondly, k-medoids are not sensitive to noise, which could mitigate the negative effects of unreliability clusters. Thirdly, the distance or consistency function in k-medoids can be arbitrary, which makes it possible to adopt Equation (\ref{jc}) \cite{Tiwari2020}.

After run k-medoids algorithm on $\mathcal{E}(\Pi_c)$, $k$ hyperedges are obtained, which is noted as $\mathcal{E}i=\{ei_1, ei_2, \ldots ei_k\}$. Due to the selected hyperedges could come from different clustering results, which may not satisfy $\bigcup_{ei \in \mathcal{E}i} ei=\mathcal{V}$ and $\forall ei_i, ei_j \in \mathcal{E}i, i\neq j, ei_i\cap ei_j=\emptyset$. To handle the first condition, we remove the intersection regions between different hyperedges and regenerate $\mathcal{E}i=\{ei_1, ei_2, \ldots ei_k\}$. To handle the second condition, we conduct the following hyperedge diffusion step. The processes of the k-HyperEdge initialization step are shown as Algorithm \ref{KHI} in Appendix A.1.

\subsection{k-HyperEdge Diffusion}

The purpose of the k-HyperEdge diffusion step is to make each sample in $\mathcal{X}$ belong to a hyperedge. This step is realized by iteratively assigning a sample to the hyperedge which it is most likely to belong. Firstly, in this step, we initialize $\mathcal{E}d = \mathcal{E}i$. In each iteration, a set of samples that with high belonging confidence will be reassigned to the k initialization hyperedges $\mathcal{E}d=\{ed_1, ed_2, \ldots ed_k\}$. With the progress of iteration, the number of samples to be assigned gradually increases. The diffusion step stops when the number of the assigned sample is equal to the number of data points. The calculation of the sample confidence and the number of samples to be assigned are then introduced.

Firstly, given $\mathcal{E}(\Pi_c)$, the belonging degree of sample $x_i$ to a hyperedge $ed$ is defined as the average similarity between $ed$ and the hyperedges that $x_i$ belongs to in $\mathcal{E}(\Pi_c)$. The equation is
\begin{equation}\label{bx}
b\left(x, ed\right)= \sum_{x\in c_x} |c_x \cap ed|,
\end{equation}
where $c_x\in\mathcal{E}(\Pi_c)$.

Based on Equation (\ref{bx}), the belongings of a sample to all the hyperedges in $\mathcal{E}d$ can be obtained. Then, the hyperedge to which sample $x_i$ belongs is obtained by
\begin{equation}\label{hx}
ed^*(x_i)= \mathop{\arg\max}_{ed \in \mathcal{E}d} b(x_i, ed).
\end{equation}

The sample $x_i$ will be assigned to $ed^*$ with a confidence degree of
\begin{align}\label{mx}
m(x_i) = b\left(x_i, ed^*(x_i)\right) - \max_{\substack{ed_j\neq ed^*(x_i), \\ ed_j \in \mathcal{E}d} } b\left(x_i, ed_j\right),
\end{align}
which is the difference between the maximum and the second maximum degree of belonging.

With Equation (\ref{mx}), the confidence of all the samples in $\mathcal{X}$ can be obtained, and the samples with high confidence will be selected and re-assigned to the hyperedge set in hand. Assuming $\mathcal{E}d$ is the present hyperedge set, the number of samples in $\mathcal{E}d$ is
\begin{align}\label{n_a}
n_a=\left|\bigcup_{ed \in \mathcal{E}d} ed \right|.
\end{align}
With the overall consideration of $n_a$ and $n$, the number of samples $n_s$ to be selected is calculated as
\begin{equation}\label{na}
n_s=\min \{( n_a+ \max \{\lceil\frac{n-n_a}{k} \rceil ,\lceil \sqrt{n}\rceil\} ), n \}.
\end{equation}

For an edge $ed$, the diffusion is realized by
\begin{align}\label{hdif}
ed =\{x | ed^*(x)=ed, m(x) \geq m_{(n_s)}(\mathcal{X}) \},
\end{align}
where $m_{(n_s)}(\mathcal{X})$ is the $n_s$-th largest element in the set $\{m(x_1)$, $m(x_2)$, $\ldots$, $m(x_n)\}$.

Repeating the above diffusion process, the number of samples in $\mathcal{E}d$ will increase. When $n_a = n$, the k-HyperEdge diffusion process stops, and the obtained hypergedge set after this step is noted as $\mathcal{E}d=\{ed_1, ed_2, \ldots, ed_k\}$, which satisfies $\bigcup_{ed\in \mathcal{E}d}=\mathcal{V}$ and $\forall ed_i, ed_j \in \mathcal{E}d, i\neq j, ed_i\cap ed_j=\emptyset$. The processes of the k-HyperEdge diffusion step are shown as Algorithm \ref{KHD} in Appendix A.1. Further, the quality of $\mathcal{E}d$ is improved by the following k-HyperEdge adjustion step.

\subsection{k-HyperEdge Adjustion}

After assigning all the samples in $\mathcal{X}$, we further improve the quality of the hyperedge set from the sample view according to Equation (\ref{obj2}). Assuming the hyperedge set is $\mathcal{E}a=\{ea_1, ea_2,\ldots, ea_k\}$, the quality of $ea_i$ is
\begin{align}\label{qed}
Q(ea_i)=\sum_{x \in ea_i} b(x, ea_i).
\end{align}

The loss function of the k-HyperEdge adjustion step is
\begin{align}\label{lka}
\mathcal{L}_{\text{adj}}(\mathcal{E}a) =& \sum_{i=1}^{k} \left( n |ea_i| - Q(ea_i) \right) \\ \notag
=&\sum_{i=1}^{n} \left(n- b(x_i, ea^*(x_i))\right).
\end{align}

The loss value can be iteratively reduced by updating $\mathcal{E}a$ until convergence.

Assuming the hyperedge set in the $t$-th iteration is $\mathcal{E}a^t=\{{ea^t}_1, {ea^t}_2, \ldots, {ea^t}_k\}$, the belongings of a sample $x_i$ to a hyperedge ${ea^t}_j$ in $\mathcal{E}a^t$ can be calculated by Equation (\ref{bx}), noted as $b(x_i, {ea^t}_j)$. Then, the hyperedge that $x_i$ belongs to in the $(t+1)$-th iteration is
\begin{align}\label{btad}
ea^{(t+1)*}(x_i)= \mathop{\arg\max}_{ea^t \in \mathcal{E}a^t} b(x_i, ea^t).
\end{align}

The $i$-th hyperedge is updated by
\begin{align}\label{upedge}
{ea^{(t+1)}}_i=\{x | ea^{(t+1)*}(x)={ea^t}_i, x \in \mathcal{X} \}.
\end{align}
The $k$ updated hyperedges constitute the hyperedge set $\mathcal{E}a^{(t+1)}$. The adjustion will stops when $\mathcal{E}a^{(t+1)} = \mathcal{E}a^{t}$. The processes of the k-HyperEdge adjustion step are shown as Algorithm \ref{KHA} in Appendix A.1.

The final hyperedge set is the discovered k-HyperEdge Medoids and noted as $\mathcal{E}_m=\{e_1, e_2, \ldots, e_k\}$. The final corresponding clustering ensemble result is
\begin{align}\label{piedge}
\pi^*=\{c_1=e_1, c_2=e_2, \ldots, c_k=e_k\}.
\end{align}
We note the above clustering ensemble method based on the construction of k-HyperEdge Medoids as CEHM. The processes of CECH is shown as Algorithm \ref{CEHM} in Appendix A.1.

\section{Theoretical Analysis}
In this section, we give three theorems about the key element in the method to show the rationality.

First, given a g-approximate initial hyperedge set $\mathcal{E}a^0$, we prove that the solution can approximate to the optimal, which is reflected by the following theorem.

\begin{theorem} \label{the_bound}
let $\mathcal{E}a^0=\{{ea^0}_1, {ea^0}_2,\ldots, {ea^0}_k\}$ be a set of hyperedges produced by a g-approximate algorithm  
based on a base cluster set $\Pi_c$, and Let 
\begin{align}
dist(ea_i, ea_j) = n- |ea_i \cap ea_j|,
\end{align}
\begin{align}
\mathcal{\phi}(ea_i; ea_j) & = \sum_{x\in ea_j} \left(n- b(x, ea_i) \right) \\ \notag
& = \sum_{x\in ea_j} \left(n- \frac{1}{l}\sum_{x\in c_x} |c_x \cap ea_i| \right),
\end{align}
then for any hyperedge set, denoted by $\mathcal{E}a=\{{ea}_1, {ea}_2,\ldots, {ea}_k\}$,  we have for any $ea_i$, there exists ${ed^0}_i$ satisfies $dist(ea_i, {ea^0}_i) \le \frac{(g+1) \phi (ea_i; *)}{|ea_i|}$.
\end{theorem}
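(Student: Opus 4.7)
The plan is to combine the g-approximation guarantee of the algorithm that produces $\mathcal{E}a^0$ with a triangle-inequality argument connecting the set-distance $dist$ and the assignment cost $\phi$. This is the standard recipe for k-medoids-style approximation proofs: first turn the approximation ratio into an assignment-cost bound against any competitor center set, then convert that assignment-cost bound into a distance bound via a triangle inequality.

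First I would unpack the g-approximate hypothesis by writing $\mathcal{E}a^0$ as a minimizer (up to factor $g$) of the belonging-based k-medoids loss $\sum_j \phi(ea^0_j; ea^0_j)$. The natural swap argument is then available: for any target hyperedge $ea_i$, the center set obtained by substituting $ea_i$ for some $ea^0_j$ is feasible, so the optimum is dominated by the cost of using $ea_i$ on its natural assignees, giving a bound of the form $\sum_j \phi(ea^0_j; ea^0_j) \le g\cdot \phi(ea_i; *)$, where $\phi(ea_i; *)$ captures the total cost contributed by $ea_i$ in that substitution.

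Next, I would establish the triangle-like inequality linking $b$, $\phi$, and $dist$. Expanding the definition of $b$, for any sample $x$ and any hyperedges $ea, ea'$ one has $|c_x\cap ea| - |c_x \cap ea'| \le |ea\setminus ea'| \le dist(ea, ea')$, so
\begin{align*}
n - b(x, ea') \le (n - b(x, ea)) + dist(ea, ea').
\end{align*}
Summing over $x \in ea_i$ with $ea = ea_i$ and $ea'= ea^0_j$ yields $\phi(ea^0_j; ea_i) \le \phi(ea_i; ea_i) + |ea_i|\,dist(ea_i, ea^0_j)$, which rearranges into a distance-versus-cost inequality.

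Finally I would combine the two pieces. Picking $j^\ast$ to be the index for which $ea^0_{j^\ast}$ is charged with the mass of $ea_i$ in the g-approximate assignment, the cost bound together with the swap inequality then yield $\phi(ea^0_{j^\ast}; ea_i) \le (g+1)\phi(ea_i; *)$, and plugging this into the triangle-inequality rearrangement produces exactly $dist(ea_i, ea^0_{j^\ast}) \le (g+1)\phi(ea_i; *)/|ea_i|$. The main obstacle I anticipate is justifying the triangle-inequality step cleanly: $dist$ is a pure symmetric-difference quantity while $\phi$ averages co-occurrences through $b$, so delivering the tight constant $(g+1)$ rather than something larger hinges on noticing that $b(x, \cdot)$ is essentially $1$-Lipschitz in its second argument with respect to $dist$, which is what the $1/l$ normalization in the definition of $\phi$ buys. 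Once that Lipschitz step is in place, the remainder of the argument is routine approximation-algorithm bookkeeping.
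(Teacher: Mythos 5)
Your overall architecture matches the paper's: a bound coming from the $g$-approximation guarantee, combined with a triangle-type inequality relating the assignment cost $\phi$ to the set distance $dist$ (the paper phrases the combination as a proof by contradiction, but that difference is cosmetic). However, your central ``triangle-inequality'' step points the wrong way and cannot deliver the theorem. From the $1$-Lipschitz property $|b(x,ea)-b(x,ea')|\le dist(ea,ea')$ you correctly obtain $\phi(ea^0_j; ea_i) \le \phi(ea_i; ea_i) + |ea_i|\,dist(ea_i, ea^0_j)$, but rearranging this gives $dist(ea_i,ea^0_j) \ge \bigl(\phi(ea^0_j;ea_i)-\phi(ea_i;ea_i)\bigr)/|ea_i|$, which is a \emph{lower} bound on $dist$. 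Plugging the upper bound $\phi(ea^0_{j^*};ea_i)\le (g+1)\phi(ea_i;*)$ into a lower bound on $dist$ yields nothing about the claimed upper bound; and since the Lipschitz property is symmetric, any inequality obtained this way can only ever bound $dist$ from below.

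What is actually needed is the reverse relation $|ea_i|\,dist(ea_i,ea^0_j) \le \phi(ea^0_j;ea_i) + \phi(ea_i;ea_i)$, which the paper proves (third inequality of its auxiliary lemma) by a different mechanism: it writes $n-b(x,ea) = \frac{1}{l}\sum_{c_x\ni x} dist(c_x, ea)$, so that $\phi(\cdot\,;ea_i)$ becomes a sum of distances from the base clusters $c_x$ to the candidate medoid, and then applies the triangle inequality $dist(ea_i,ea^0_j)\le dist(c_x,ea_i)+dist(c_x,ea^0_j)$ with the base cluster $c_x$ as the intermediate vertex, summed over the $l$ clusters containing each $x\in ea_i$. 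With that inequality in hand, your final bookkeeping ($\phi(ea^0_{j^*};ea_i)\le g\,\phi(ea_i;*)$ from the approximation guarantee together with $\phi(ea_i;ea_i)\le\phi(ea_i;*)$) does produce the stated $(g+1)\phi(ea_i;*)/|ea_i|$. So the gap is a single step, but it is the load-bearing one: the bound must be routed through the base clusters via the triangle inequality for $dist$, not through the Lipschitz continuity of $b(x,\cdot)$.
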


Secondly, we prove that assigning samples to the hyperedge with the highest belonging degree can reduce the value of the loss function.

\begin{theorem} \label{the_re}
Assuming the optimal hyperedge of sample $x$ is the $i$-th hyperedge: $ea^*(x)=ea_i$; assuming a 
hyperedge sets $\mathcal{E}a^1=\{{ea^1}_1, {ea^1}_2,\ldots, {ea^1}_k\}$ and $x$ is in the $i$-th hyperedge in $\mathcal{E}a^1$: $x\in {ea^1}_i$; assuming another hyperedge set $\mathcal{E}a^2$ based on $\mathcal{E}a^1$, and in $\mathcal{E}a^2$, $x$ is moved from the $i$-th hyperedge into the $j$-th hyperedge: for $\forall p \in \{1,2,..,k\}/\{i,j\}$, ${ea^2}_i={ea^1}_i/x$, ${ea^2}_j={ea^1}_j\cup x$. Then, we have $\mathcal{L}_{\text{adj}}(\mathcal{E}a^1)\leq \mathcal{L}_{\text{adj}}(\mathcal{E}a^2)$.
\end{theorem}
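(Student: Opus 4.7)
The plan is to compute the loss difference $\mathcal{L}_{\text{adj}}(\mathcal{E}a^2)-\mathcal{L}_{\text{adj}}(\mathcal{E}a^1)$ directly and show that it is nonnegative. Since only the single sample $x$ changes assignment and only the two hyperedges $ea^1_i$ and $ea^1_j$ are modified, the expression $\sum_y(n-b(y,ea(y)))$ from Equation~(\ref{lka}) decomposes into four disjoint groups of samples: $y=x$, $y\in ea^1_i\setminus\{x\}$, $y\in ea^1_j$, and $y$ lying in any other unchanged hyperedge. The last group contributes zero to the difference, so the whole argument is a purely local calculation around the two affected hyperedges.

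The key tool I would use is the rewriting $b(x,ed)=\sum_{y\in ed}\mathrm{co}(x,y)$, where $\mathrm{co}(x,y)$ denotes the number of base clusters in $\mathcal{E}(\Pi_c)$ that contain both $x$ and $y$; this follows from Equation~(\ref{bx}) by exchanging the order of summation. With this identity, removing $x$ from $ea^1_i$ decreases $b(y,ea_i)$ by exactly $\mathrm{co}(x,y)$ for every $y\in ea^1_i\setminus\{x\}$, while inserting $x$ into $ea^1_j$ increases $b(y,ea_j)$ by the same amount for every $y\in ea^1_j$; the sample $x$ itself changes its own contribution from $n-b(x,ea^1_i)$ to $n-b(x,ea^1_j)-\mathrm{co}(x,x)$, since $x\notin ea^1_j$ adds a fresh self-term.

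Collecting the three nontrivial groups and applying the identity once more in reverse to fold the sums $\sum_{y\in ea^1_i\setminus\{x\}}\mathrm{co}(x,y)$ and $\sum_{y\in ea^1_j}\mathrm{co}(x,y)$ back into $b$-values, the cross terms cancel in pairs and the loss difference reduces to $2\bigl[b(x,ea^1_i)-b(x,ea^1_j)-\mathrm{co}(x,x)\bigr]$. The required nonnegativity then follows from the optimality hypothesis $ea^*(x)=ea_i$. The main obstacle is the careful handling of the self-term $\mathrm{co}(x,x)$, which enters asymmetrically because $x\in ea^1_i$ but $x\notin ea^1_j$: the raw inequality $b(x,ea^1_i)\geq b(x,ea^1_j)$ leaves a slack equal to this self-coassociation that must be absorbed either in how $b$ is computed during the argmax in Equation~(\ref{btad}) or in the interpretation of the optimality condition. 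Once this bookkeeping is settled, the remaining algebra is routine.
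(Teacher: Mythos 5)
Your decomposition is the same one the paper uses: only $x$, the residents of ${ea^1}_i$, and the residents of ${ea^1}_j$ change their contribution to Equation (\ref{lka}), and each change is $\pm\,\mathrm{co}(x,y)$, where $\mathrm{co}(x,y)$ denotes the (normalized) number of base clusters containing both points. Your bookkeeping is in fact more careful than the paper's, and the closed form you reach,
\begin{equation*}
\mathcal{L}_{\text{adj}}(\mathcal{E}a^2)-\mathcal{L}_{\text{adj}}(\mathcal{E}a^1)
=2\bigl[\,b(x,{ea^1}_i)-b(x,{ea^1}_j)-\mathrm{co}(x,x)\,\bigr],
\end{equation*}
is correct (with $\mathrm{co}(x,x)=1$ under the $1/l$ normalization used in the proofs). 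But your final step does not go through, and you have essentially conceded this yourself: the hypothesis $ea^*(x)=ea_i$ only gives $b(x,{ea^1}_i)\ge b(x,{ea^1}_j)$, which bounds the difference below by $-2$, not by $0$. The slack is real, not removable bookkeeping. Take $n=3$, two base partitions $\{\{x,y\},\{z\}\}$ and $\{\{x\},\{y,z\}\}$, and ${ea^1}_i=\{x,z\}$, ${ea^1}_j=\{y\}$: then $b(x,{ea^1}_i)=1>\tfrac12=b(x,{ea^1}_j)$, so the argmax genuinely selects $ea_i$, yet moving $x$ to $ea_j$ lowers the loss from $6$ to $5$, exactly the $-1$ your formula predicts. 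So ``the remaining algebra is routine'' cannot be the end of the argument; under the literal reading of the hypothesis the claimed inequality is false.

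To close the proof you must commit to the resolution you only gesture at: the optimality of $ea_i$ for $x$ has to be read with the self-term excluded, i.e.\ $b(x,{ea^1}_i\setminus\{x\})\ge b(x,{ea^1}_j)$, equivalently $b(x,{ea^1}_i)\ge b(x,{ea^1}_j)+\mathrm{co}(x,x)$, after which your closed form is immediately nonnegative. (This is the natural reading if the argmax in Equation (\ref{btad}) compares $x$ against candidate hyperedges that do not yet contain $x$.) For what it is worth, the paper's own proof never surfaces this margin because its inequality chain silently interchanges ${ea^1}_i$ with ${ea^2}_i$ and ${ea^1}_j$ with ${ea^2}_j$ in the index sets of its sums; your more honest accounting exposes the missing $\mathrm{co}(x,x)$, but as written the proposal stops one inequality short of a proof.
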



Thirdly, we prove that using neighbors to estimate the belonging degree is statistically reasonable. To evaluate the similarity between $x$ and $ea$, Equation (\ref{bx}) uses the number of samples in the intersection set of the cluster $c_x$ and $ea$. Indeed, to estimate the posterior probability $\eta(X)=\mathbb{P}(x\in ea|X=x)$, the essence of Equation (\ref{bx}) is to use the empirical conditional posterior probability

\begin{equation}
\hat{\eta}_N(x)= \frac{1}{N_ {c_x} }\sum_{i:X_i\in { c_x}} \mathbb{I}\{X_i\in ea\},
\end{equation}
where $ N_ {c_x}$ is the number of samples in $c_x$. 

To evaluate of the estimator $\hat{\eta}_N(x)$, we explore the upper bound of the estimation error $\mathbb{E}|\hat{\eta}_N(X)-\eta(X)|$, which is reflected by the following theorem.

\begin{theorem}
\label{G-E}
Consider an estimator $\hat{\eta}_N(X)$ of $\eta(X)$ as defined above. 
Suppose that the second moment of $\eta(X)$ of $ea$ exists: 
$\mathbb{E}_{x\in ea}  \eta(X) ^2 <+\infty $ and the difference
in probability density between set $ea$ and the whole set $\mathcal{X}$ is bounded:
that is, there exists $C$, such that 
$\mathbb{E}_{x\in ea}
\left(\frac{\textbf{\textit{m}}(x|x\in\mathcal{X})}{\textbf{\textit{m}}(x|x\in ea)}\right)^2<C$.
Then $\mathbb{E}|\hat{\eta}_N(X)-\eta(X)|\rightarrow 0$ if
\begin{align}
\quad &N_{c_x}\rightarrow\infty , \\ 
\quad &\mathbb{E}_{x\in{ea}} \bigg( \frac{\textbf{\textit{m}}(x|x\in{c_x})}{\textbf{\textit{m}}(x|x\in{ea})}-1\bigg)^2\rightarrow0, \\
\quad &\mathbb{V}_{x\in{ea}} (\eta (X))\rightarrow0, 
\end{align}
where $\textbf{\textit{m}}(\cdot)$ is the probability density function and $\mathbb{V}$ is the variance operator.
\end{theorem}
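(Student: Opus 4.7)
The plan follows the classical bias--variance decomposition used for proving consistency of nonparametric and nearest-neighbour-type estimators: split the $L^{1}$ error into a stochastic fluctuation term and a deterministic bias term, and match each to the vanishing conditions, with the two finite-moment regularity bounds playing the role of companion factors in Cauchy--Schwarz. The starting point is the conditional triangle inequality
\begin{align*}
\mathbb{E}|\hat{\eta}_N(X)-\eta(X)|
 \le \mathbb{E}\bigl|\hat{\eta}_N(X) - \mathbb{E}[\hat{\eta}_N(X)\mid X]\bigr|
   + \mathbb{E}\bigl|\mathbb{E}[\hat{\eta}_N(X)\mid X] - \eta(X)\bigr|.
\end{align*}

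For the stochastic term I would observe that, conditional on $X$, the estimator $\hat{\eta}_N(X)$ is a sample mean of $N_{c_X}$ independent $[0,1]$-valued indicators, so its conditional variance is at most $1/(4 N_{c_X})$. Jensen's inequality and tower give
\begin{align*}
\mathbb{E}\bigl|\hat{\eta}_N(X) - \mathbb{E}[\hat{\eta}_N(X)\mid X]\bigr|
\le \frac{1}{2}\,\mathbb{E}\!\left[\frac{1}{\sqrt{N_{c_X}}}\right],
\end{align*}
which vanishes under the first hypothesis $N_{c_X}\to\infty$ by dominated convergence (the integrand is bounded by $1/2$).

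For the bias term I would rewrite $\mathbb{E}[\hat{\eta}_N(X)\mid X]=\int\mathbb{I}\{x'\in ea\}\,\textbf{\textit{m}}(x'\mid c_X)\,dx'$, introduce the Radon--Nikodym factor $\textbf{\textit{m}}(x'\mid c_X)/\textbf{\textit{m}}(x'\mid ea)$, and change measure to the distribution restricted to $ea$. Adding and subtracting $\mathbb{E}_{x\in ea}\eta(X)$ inside the integrand splits the bias into two pieces: a density-mismatch piece that Cauchy--Schwarz bounds by $\sqrt{\mathbb{E}_{x\in ea}(\textbf{\textit{m}}(x\mid c_x)/\textbf{\textit{m}}(x\mid ea)-1)^2}$ (vanishing by hypothesis~2) times the bounded factor $\sqrt{\mathbb{E}_{x\in ea}\eta(X)^2}$; and an $\eta$-variability piece bounded by $\sqrt{\mathbb{V}_{x\in ea}\eta(X)}$ (vanishing by hypothesis~3) times a factor kept finite by the uniform bound $C$ on $\textbf{\textit{m}}(\cdot\mid\mathcal{X})/\textbf{\textit{m}}(\cdot\mid ea)$, which is what allows transferring the $ea$-conditional expectation back to the marginal over $X$.

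The main obstacle is the bookkeeping in the bias step: the add/subtract decomposition must be arranged so that each Cauchy--Schwarz application pairs exactly one vanishing factor with exactly one bounded factor, rather than two vanishing factors (lossy) or two bounded ones (which would not vanish). The two regularity conditions in the statement are included precisely so that these bounded companions exist; once the decomposition is chosen correctly, the conclusion $\mathbb{E}|\hat{\eta}_N(X)-\eta(X)|\to 0$ follows by sending the three vanishing quantities to zero simultaneously.
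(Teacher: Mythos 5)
Your proposal is correct and follows essentially the same route as the paper: the same three-term decomposition (stochastic fluctuation about the $c_x$-conditional mean, density-mismatch between $c_x$ and $ea$, and deviation of $\eta$ from its mean over $ea$), with the same Cauchy--Schwarz pairings of each vanishing factor against the second-moment bound and the constant $C$ respectively. The only difference is in the first term, where you bound $\mathbb{E}\bigl|\hat{\eta}_N(X)-\mathbb{E}[\hat{\eta}_N(X)\mid X]\bigr|$ by $\tfrac{1}{2}\mathbb{E}[N_{c_X}^{-1/2}]$ via the variance of a mean of indicators and Jensen, whereas the paper derives the slightly sharper constant $\sqrt{1/(2\pi N_{c_x})}$ from an exact mean-absolute-deviation computation for the binomial distribution; both give the same $O(N_{c_x}^{-1/2})$ conclusion.
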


Theorem \ref{G-E} give three intuitively reasonable and principled rules to design a good estimator: 
\begin{itemize}
  \item The ensemble generation method should guarantee each $c_x$ has many samples, as suggested by Formula (21). 
  \item The sets $c_x$ and $ea$ should have a high similarity, as suggested by Formula (22). In our method, according to Equation (\ref{bx}) and Equation (\ref{btad}), there are many samples in the intersection of $ea$ and $c_x$, which follows this suggestion.
  \item For an ideal k-HyperEdge Medoids discovering problem, the posterior probabilities in $ea$ should be concentrated, as suggested by Formula (23).
\end{itemize}


%

The proofs of the above theorems are given in the Appendix A.2 to A.4, respectively.

\section{Experimental Analyses}

\begin{table}[!ht]
\begin{center}
\caption{Description of the data sets}
\renewcommand{\arraystretch}{1}
\small
\begin{tabular*}{\hsize}{@{\extracolsep{\fill}}ccccc}
\hline
Number & Data name &  n & d & k\\
\hline
1	&	iris	&	150	&	4	&	3	\\
2	&	wine 	&	178	&	13	&	3	\\
3	&	seeds 	&	210	&	7	&	3	\\
4	&	heart 	&	294	&	12	&	2	\\
5	&	soybean-train 	&	307	&	35	&	18	\\
6	&	ecoli 	&	336	&	7	&	8	\\
7	&	dermatology 	&	366	&	34	&	6	\\
8	&	low-res-spect 	&	531	&	100	&	9	\\
9	&	breast-cancer-wisc-diag 	&	569	&	30	&	2	\\
10	&	energy 	&	768	&	8	&	3	\\
11	&	lbp-riu-gris	&	1022	&	25	&	3	\\
12	&	semeion 	&	1593	&	256	&	10	\\
13	&	statlog-landsat-test 	&	2000	&	36	&	6	\\
14	&	cardiotocography-3clases 	&	2126	&	21	&	3	\\
15	&	statlog-landsat-train 	&	4435	&	36	&	6	\\
16	&	twonorm 	&	7400	&	20	&	2	\\
17	&	mushroom 	&	8124	&	21	&	2	\\
18	&	statlog-shuttle 	&	43500	&	9	&	7	\\
19	&	PenDigits 	&	10992	&	16	&	10	\\
20	&	USPS	&	11000	&	256	&	10	\\
\hline
\end{tabular*}
\label{ucidata}
\end{center}
\end{table}

The CEHM is experimentally analyzed from three aspects: the demonstration of working mechanisms on artificial data, convergence demonstration on real data, and ensemble performance comparison with representative methods. The experimental analyses are conducted on a PCWIN 64 computer with 64G memory.

For a data set, the clustering result set is generated by running the k-means algorithm multiple times with different initial centers and different cluster numbers $k$. For a data set with $n$ samples and $k$ clusters, the numbers of clusters in the base clustering results are randomly selected in the range $\left[k, \left(\max\{\min\{\sqrt{n},50\}, \lceil {\frac{3}{2}}k \rceil\}\right)\right]$. This setting could improve the diversity between the clustering results in the ensemble base.

\subsection{Working Mechanism}

To show the working mechanism of CEHM, four two-dimensional artificial data sets are utilized. The information about these data is shown in Table \ref{artdata} in Appendix A.5. 

Figure \ref{WingNut} and Figure \ref{2d2k} to Figure \ref{Flame} in Appendix A.5. show some of the process results of CEHM on these data sets. In Figure \ref{WingNut} and Figure \ref{2d2k} to Figure \ref{Flame}, the data points that are not in the hypergraph are shown by Gray dot, $\mathcal{E}i$ indicates the result of the k-HyperEdge initialization step, $\mathcal{E}d(1)$ and $\mathcal{E}d(2)$ are two results of the k-HyperEdge diffusion step, $\mathcal{E}a(1)$ and $\mathcal{E}a(2)$ are two results of the k-HyperEdge adjustion step, and $\mathcal{E}_m$ is the discovered k-HyperEdge Medoids. From Figure \ref{WingNut} and Figure \ref{2d2k} to Figure \ref{Flame}, it can be seen that the operation process conforms to the expectations of each stage.


\begin{figure}[t]
\captionsetup[subfigure]{labelformat=empty}
      \centering
       \subfloat[\scriptsize{\textbf{(a)} $\mathcal{E}i$}]{\includegraphics[width=0.15\textwidth]{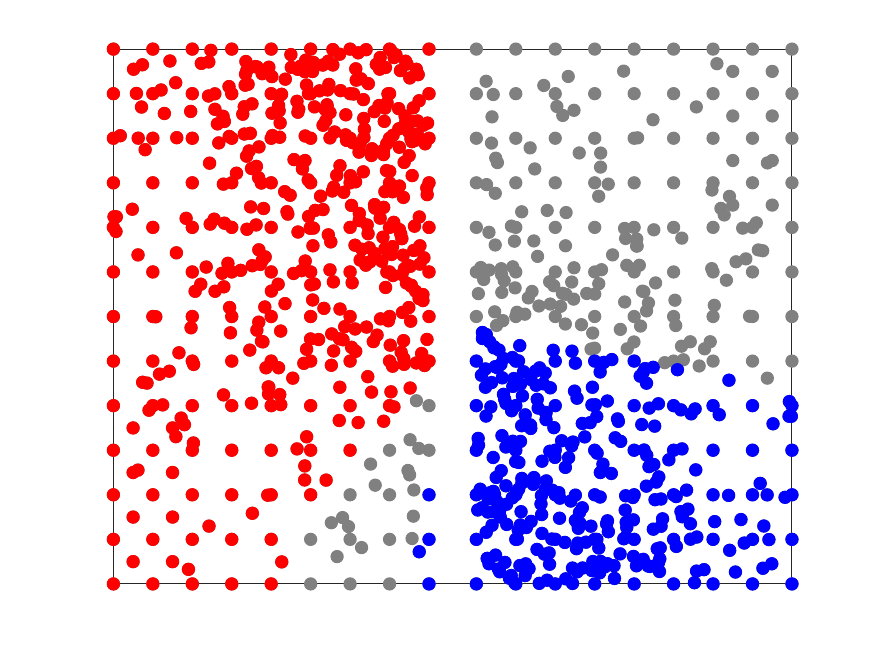}}\
       \subfloat[\scriptsize{\textbf{(b)} $\mathcal{E}d(1)$}]{\includegraphics[width=0.15\textwidth]{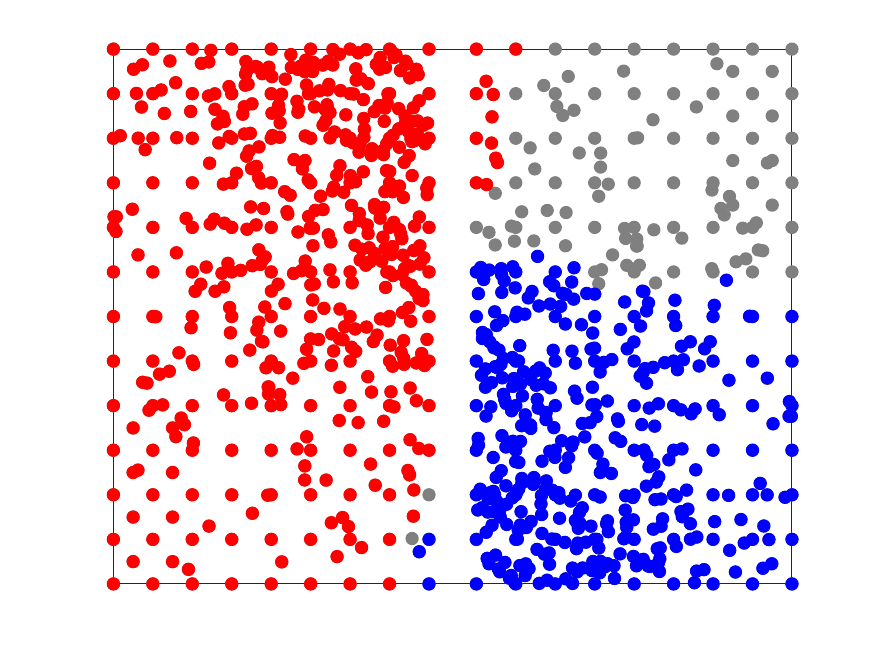}}\
       \subfloat[\scriptsize{\textbf{(c)} $\mathcal{E}d(2)$}]{\includegraphics[width=0.15\textwidth]{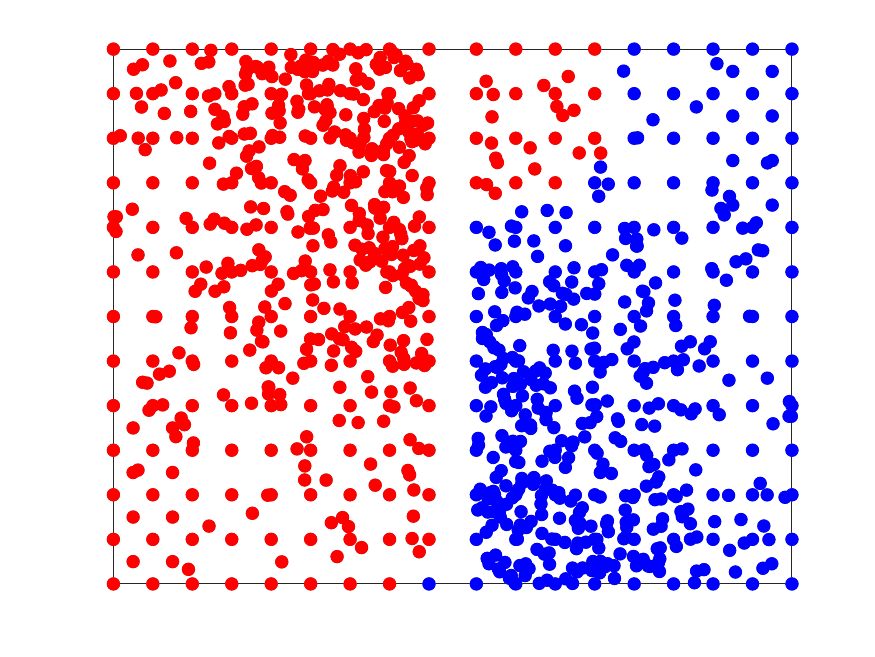}}\\
       \subfloat[\scriptsize{\textbf{(d)} $\mathcal{E}a(1)$}]{\includegraphics[width=0.15\textwidth]{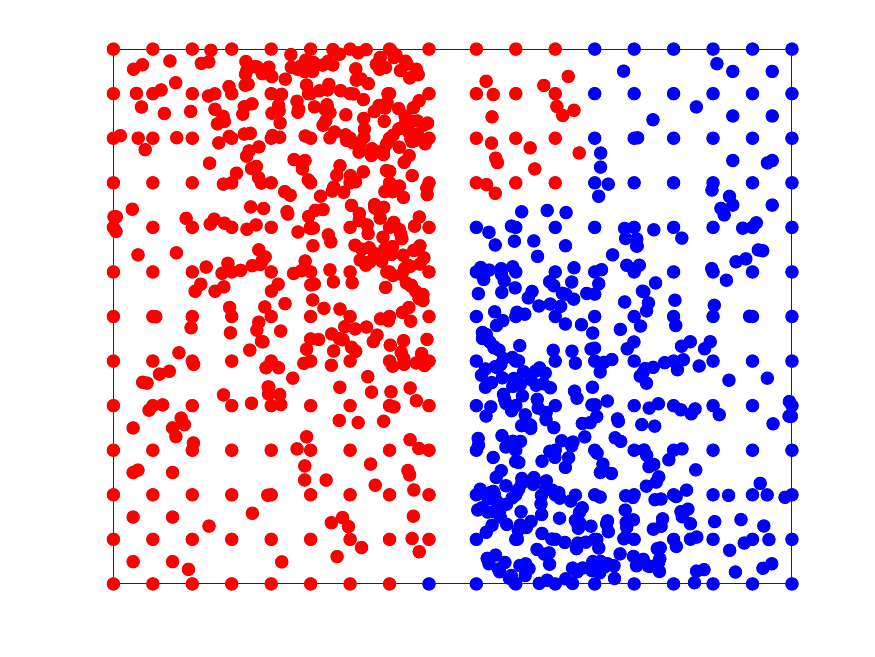}}\
       \subfloat[\scriptsize{\textbf{(e)} $\mathcal{E}a(2)$}]{\includegraphics[width=0.15\textwidth]{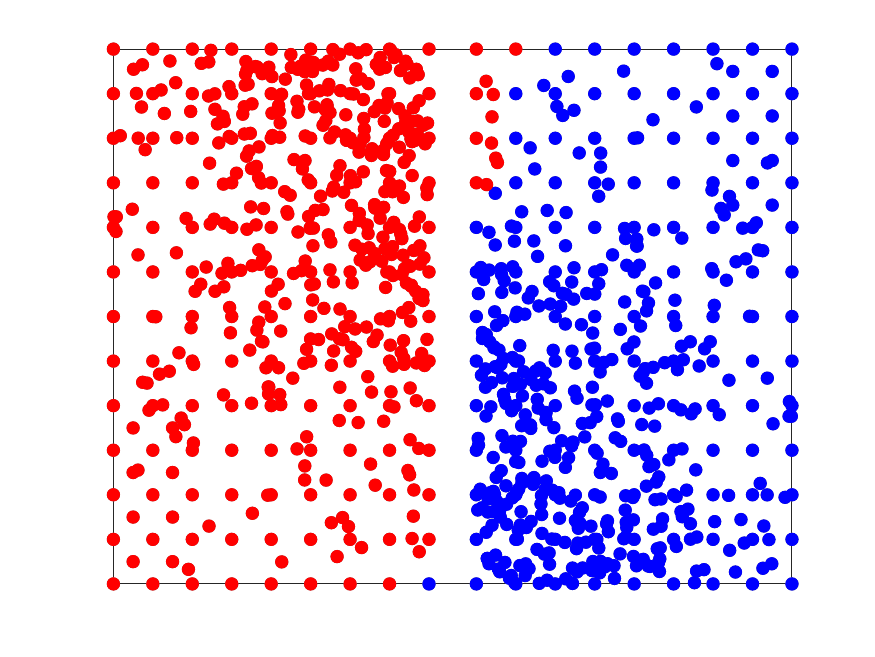}}\
       \subfloat[\scriptsize{\textbf{(f)} $\mathcal{E}_m$}]{\includegraphics[width=0.15\textwidth]{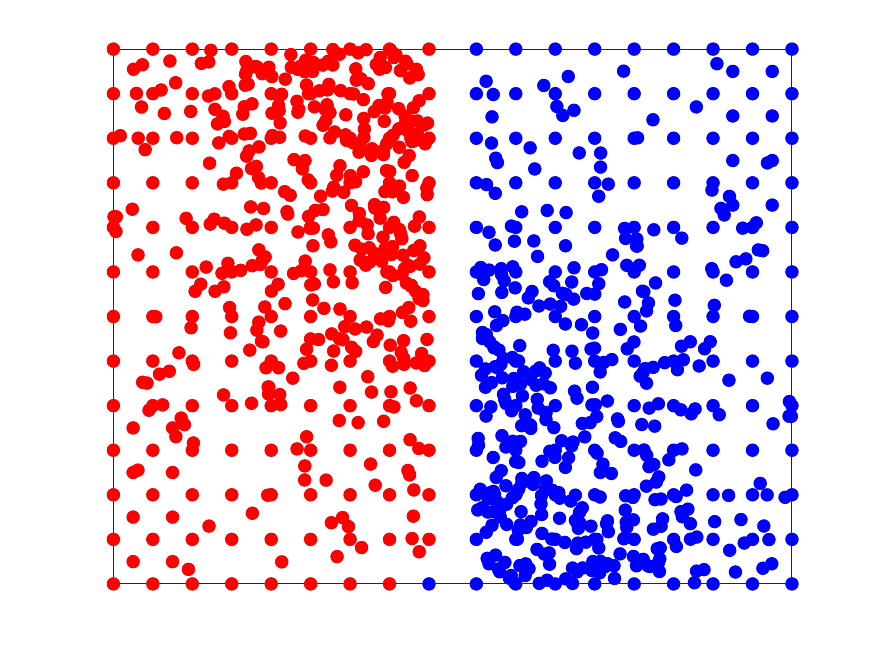}}
\caption{Working mechanism of CEHM on the WingNut data set}
\label{WingNut}
\end{figure}

\subsection{Convergence}

The convergence of CEHM is analyzed on twenty widely used benchmark data sets. The information about these data sets is shown in Table \ref{ucidata}. 

For each data set, we generate $30$ base clustering result sets and then show the loss of the hyperedge that is sequentially generated during the running of CEHM on each base clustering result set. The hyperedge loss is calculated by Equation (\ref{lka}). Figure \ref{conve} shows the results of the convergence analysis on the first 5 data sets. The rest convergence results are shown in Figure \ref{conve_whole} in Appendix A.6. From Figure \ref{conve} and Figure \ref{conve_whole}, it can be seen that the loss of the hyperedge set gradually reduces for each test. The CEHM algorithm can converge quickly. On the test on these data sets, the maximum number of iterations is not greater than $30$.

\begin{figure*}[t]
\captionsetup[subfigure]{labelformat=empty}
      \centering
       \subfloat[\scriptsize{\textbf{(a)} Data 1}]{\includegraphics[width=0.17\textwidth]{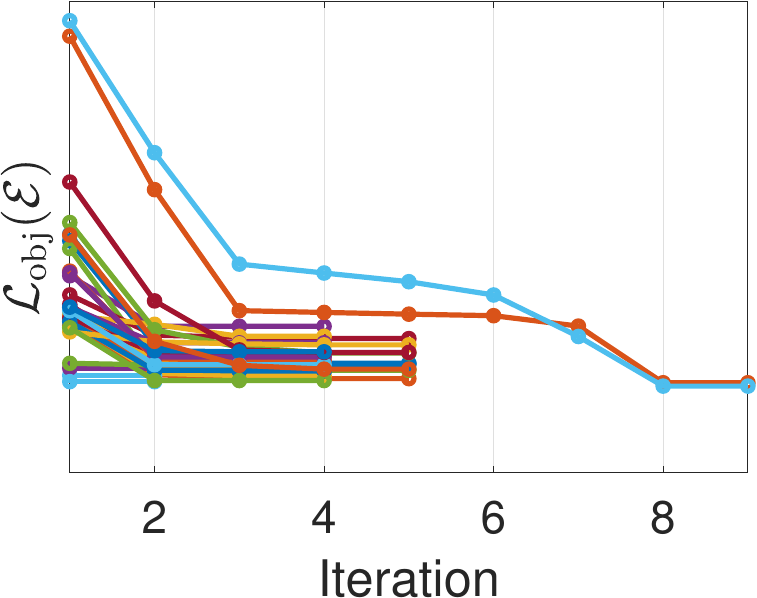}}\
       \subfloat[\scriptsize{\textbf{(b)} Data 2}]{\includegraphics[width=0.17\textwidth]{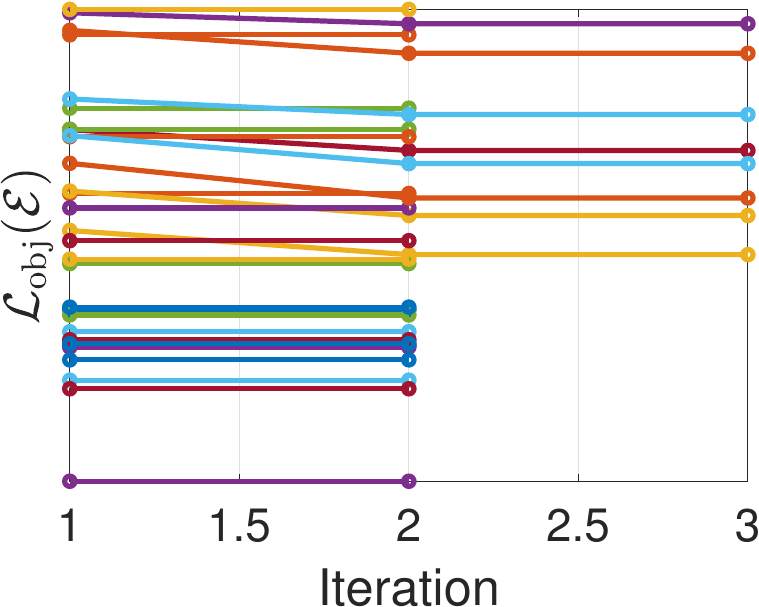}}\
       \subfloat[\scriptsize{\textbf{(c)} Data 3}]{\includegraphics[width=0.17\textwidth]{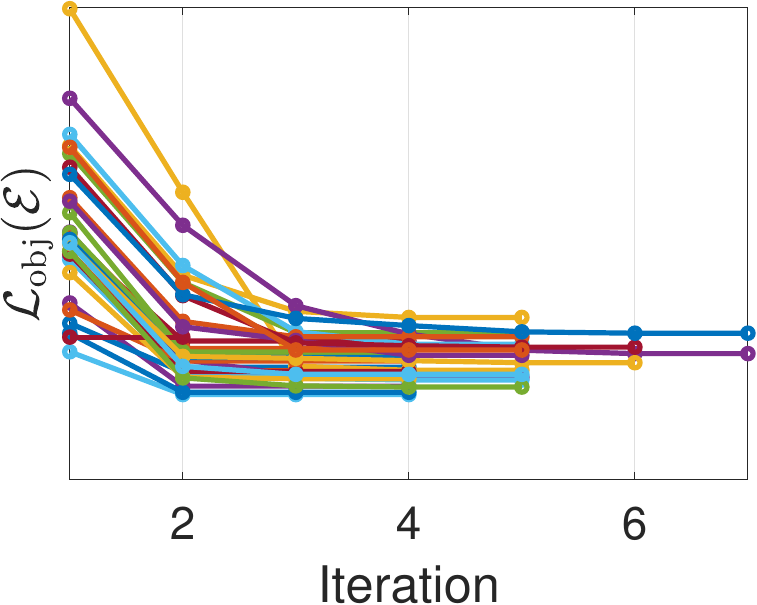}}\
       \subfloat[\scriptsize{\textbf{(d)} Data 4}]{\includegraphics[width=0.17\textwidth]{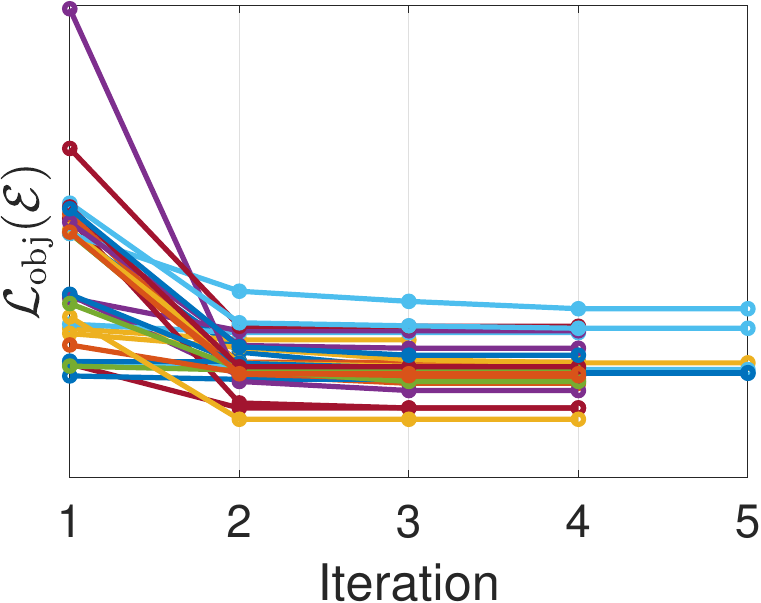}}\
       \subfloat[\scriptsize{\textbf{(e)} Data 5}]{\includegraphics[width=0.17\textwidth]{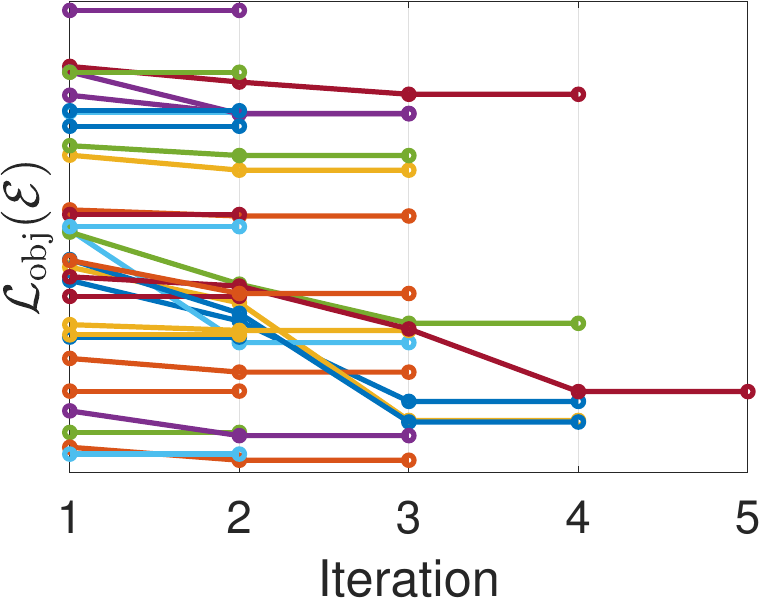}}
\caption{The convergence demonstration of CEHM on the first five data sets}
\label{conve}
\end{figure*}

\begin{figure}[!h] %
\begin{center}
\includegraphics[width=0.43\textwidth, height=0.25\textheight]{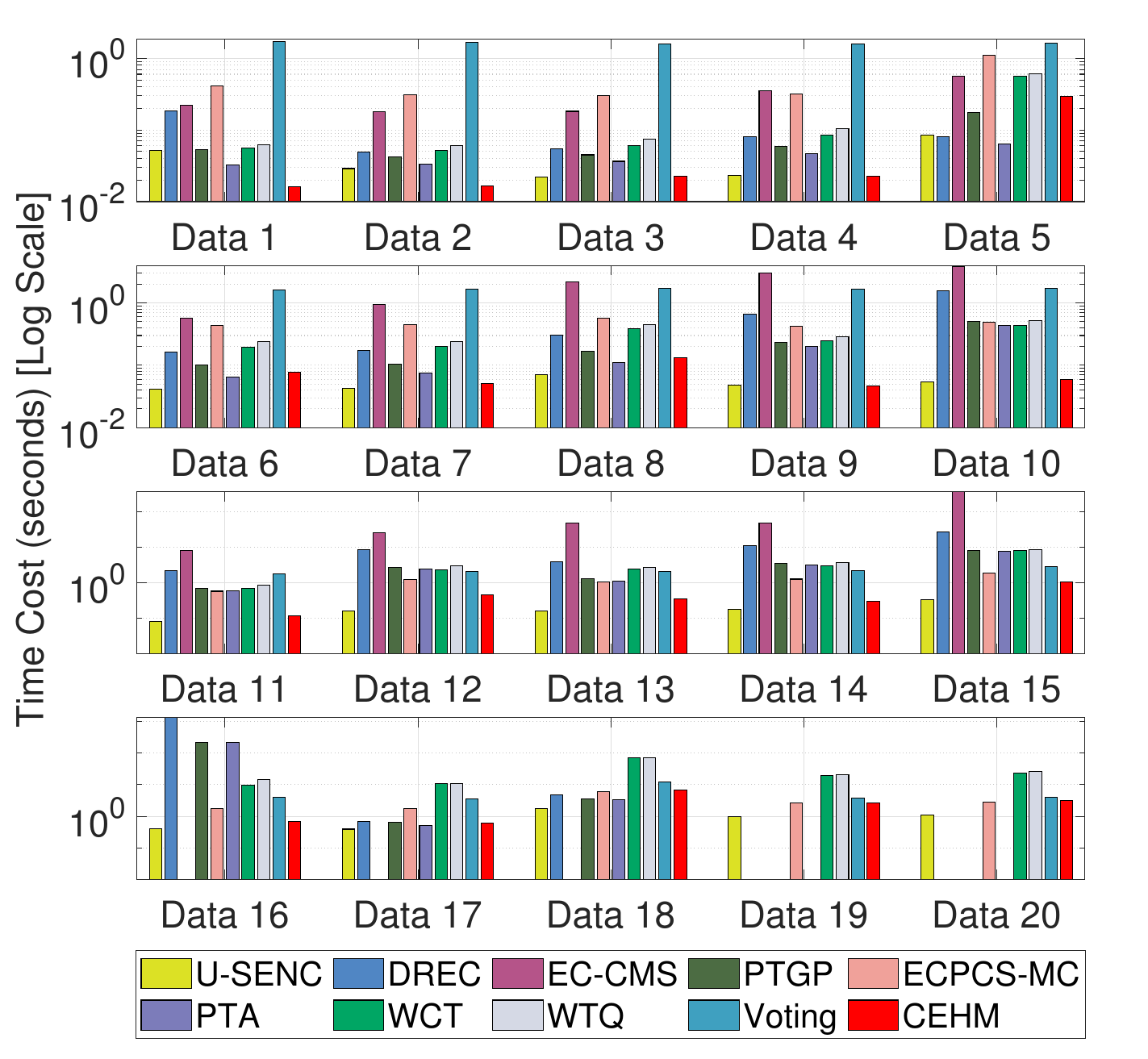}
\caption{The time cost of the compared methods on the twenty data sets}
\label{time}
\end{center}
\end{figure}

\subsection{Ensemble Performance}

To test the clustering ensemble performance of CEHM, nine representative clustering ensemble methods are utilized as reference methods, which are U-SENC \cite{Huang2020}, DREC \cite{Zhou2019Ensemble}, EC-CMS \cite{Jia2023}, PTGP \cite{huang2016robust}, ECPCS-MC \cite{huang2021}, PTA \cite{huang2016robust},  WCT \cite{Iam2011}, WTQ \cite{Iam2011} and Voting \cite{Li2017}. The details about the methods and their settings are in the Appendix A.7.

To evaluate the clustering ensemble result of each method, two of the most widely used external indices are utilized, which are Normalized Mutual Information (NMI) \cite{Strehl} and Adjusted Rand Index (ARI) \cite{hubert1985comparing}. 

\begin{table*}[!ht]
\begin{center}
\caption{The NMI from the compared methods on the twenty data sets}
\renewcommand{\arraystretch}{1}
\small
\begin{tabular*}{\hsize}{@{\extracolsep{\fill}}ccccccccccc}
\hline
Data& U-SENC & DREC & EC-CMS & PTGP & ECPCS-MC & PTA & WCT & WTQ & Voting & CEHM\\
\hline
1&0.6782&0.7123&0.7123&0.6958&0.6907&0.5706&0.7128&0.7080&0.6309&\textbf{\underline{0.7456}}\\
2&0.8961&0.8871&0.8442&0.8833&0.8783&0.8685&0.8947&0.8562&0.8276&\textbf{\underline{0.9039}}\\
3&0.7316&0.6290&0.6770&0.6426&0.6960&0.6022&0.6915&0.6788&0.7297&\textbf{\underline{0.7320}}\\
4&0.3054&0.2644&0.3033&0.1422&0.2835&0.1179&0.2790&0.2106&0.1797&\textbf{\underline{0.3072}}\\
5&0.7373&0.7277&0.7262&0.6916&0.7366&0.6957&0.7135&0.7160&0.6952&\textbf{\underline{0.7420}}\\
6&0.6178&0.6048&0.6482&0.5757&0.6493&0.6063&0.6000&0.5924&0.5371&\textbf{\underline{0.7007}}\\
7&0.8910&0.8757&0.8572&0.8838&0.8952&0.8873&0.8532&0.8336&0.8109&\textbf{\underline{0.9109}}\\
8&0.4278&0.4228&0.4633&0.4269&0.4619&0.4346&0.4045&0.4069&0.3967&\textbf{\underline{0.4674}}\\
9&0.6074&0.6315&0.2299&0.5885&0.5570&0.5398&0.5710&0.5841&0.4522&\textbf{\underline{0.6511}}\\
10&\textbf{\underline{0.5853}}&\textbf{\underline{0.5853}}&\textbf{\underline{0.5853}}&\textbf{\underline{0.5853}}&0.5853&\textbf{\underline{0.5853}}&0.5580&0.5612&0.3762&\textbf{\underline{0.5853}}\\
11&0.3850&0.4118&0.3156&0.3210&0.3234&0.3768&0.3605&0.3539&0.3400&\textbf{\underline{0.4718}}\\
12&0.6424&0.6573&0.6369&0.6463&0.6457&0.6608&0.6265&0.5987&0.6008&\textbf{\underline{0.6665}}\\
13&0.6175&0.6142&0.5348&0.5998&0.6190&0.5864&0.6119&0.5748&0.5621&\textbf{\underline{0.6316}}\\
14&0.2035&0.1365&0.1352&0.1044&0.1865&0.1137&0.1166&0.1694&0.1088&\textbf{\underline{0.2391}}\\
15&0.6299&0.6479&0.6071&0.6376&0.6510&0.6630&0.6134&0.5642&0.5322&\textbf{\underline{0.6687}}\\
16&0.8400&0.8384&OM&0.8378&0.8426&0.8204&0.8411&0.8400&0.8432&\textbf{\underline{0.8444}}\\
17&0.1907&0.2196&OM&0.2214&0.1902&0.0925&0.1909&0.2486&0.2272&\textbf{\underline{0.3359}}\\
18&0.4931&0.3352&OM&0.4019&0.1002&0.4253&0.4024&0.3353&0.3043&\textbf{\underline{0.5994}}\\
19&0.7526&OM&OM&OM&0.7550&OM&0.7273&0.6911&0.6752&\textbf{\underline{0.7567}}\\
20&0.5434&OM&OM&OM&0.5603&OM&0.5613&0.4983&0.5025&\textbf{\underline{0.5818}}\\
\hline
\end{tabular*}
\label{index_nmi}
\end{center}
\end{table*}

The ensemble performance results are shown in Table \ref{index_nmi} and Table \ref{index_ari} (in Appendix A.8). The tables respectively show the average NMI and AIR of the $30$ times for each method on each data set. The maximum index value of each comparison is marked in bold type with underlining. The OM means the corresponding method out of memory during handling the data. From the tables, it is easy to see that the CEHM is marked on most of the data sets, which show the effectiveness of CEHM in handling the clustering ensemble problem on these data sets. . 


We statistically analyze the experimental results that are shown in the Table \ref{index_nmi} and Table \ref{index_ari} (in Appendix A.8). The analysis methods and results are included in Appendix A.9.

\subsection{Time Cost}

The time complexity of CEHM consists of three parts, which correspond to the three steps. The time complexity of the k-HyperEdge initialization step is $O({n_c}^2)$ \cite{Tiwari2020}. The time complexity of the k-HyperEdge diffusion step and the k-HyperEdge adjustion are both $O(tnn_c)$. Then the total time the k-HyperEdge adjustion is $O({n_c}^2+2tnn_c)$. The time costs of these methods on the twenty data sets are also compared. Figure \ref{time} shows the average running time of each method on the twenty data sets. From Figure \ref{time}, it can be seen that CEHM consumes the least running time except for U-SENC, a method specifically designed for large-scale data with the downsampling techniques. The results illustrate the efficiency of CEHM.
%

\section{Conclusion}

In this paper, we proposed k-HyperEdge Medoids for clustering ensemble. The k-HyperEdge Medoids were defined as nonoverlapping hyperedges, and each hyperedge in the hypergraph could find a similar hyperedge medoid. We solved the clustering ensemble problem by discovering a set of k-HyperEdge Medoids. The proposed discovery method contains k-HyperEdge initialization, k-HyperEdge diffusion, and k-HyperEdge adjustion. Concretely, a set of initial sub-hyperedges is selected in the k-HyperEdge initialization step, and the quality of the chosen hyperedges is improved by diffusion and adjustment guided by a hyperedge loss function that was reduced by assigning samples to the hyperedge with the highest degree of belonging. The rationality of the solution, the assignment method, and the belonging degree estimation method are illustrated by theoretical analysis. The working mechanism, convergence, effectiveness, and efficiency were illustrated by experimental analysis. In the future, it is interesting to develop novel learning methods to construct the k-HyperEdge Medoids.

\bibliography{CECH.bib}

\begin{thebibliography}{33}
\providecommand{\natexlab}[1]{#1}

\bibitem[{Ayad and Kamel(2008)}]{ayad2008cumulative}
Ayad, H.; and Kamel, M.~S. 2008.
\newblock Cumulative Voting Consensus Method for Partitions with Variable
  Number of Clusters.
\newblock \emph{IEEE Transactions on Pattern Analysis and Machine
  Intelligence}, 30(1): 160--173.

\bibitem[{Ayad and Kamel(2010)}]{ayad2010on}
Ayad, H.; and Kamel, M.~S. 2010.
\newblock On voting-based consensus of cluster ensembles.
\newblock \emph{Pattern Recognition}, 43(5): 1943--1953.

\bibitem[{Carpineto and Romano(2012)}]{carpineto2012consensus}
Carpineto, C.; and Romano, G. 2012.
\newblock Consensus clustering based on a new probabilistic rand index with
  application to subtopic retrieval.
\newblock \emph{IEEE Transactions on Pattern Analysis and Machine
  Intelligence}, 34(12): 2315--2326.

\bibitem[{Cheng et~al.(2024)Cheng, Li, Wang, and Qian}]{cheng2024deep}
Cheng, Z.; Li, F.; Wang, J.; and Qian, Y. 2024.
\newblock Deep embedding clustering driven by sample stability.
\newblock In \emph{Proceedings of the Thirty-Third International Joint
  Conference on Artificial Intelligence}, 3854--3862.

\bibitem[{Fern and Brodley(2004)}]{fern2004solving}
Fern, X.~Z.; and Brodley, C.~E. 2004.
\newblock Solving cluster ensemble problems by bipartite graph partitioning.
\newblock In \emph{Proceedings of the twenty-first international conference on
  Machine learning}, 36.

\bibitem[{Fred and Jain(2005)}]{Fred}
Fred, A.~L.; and Jain, A.~K. 2005.
\newblock Combining multiple clusterings using evidence accumulation.
\newblock \emph{IEEE Transactions on Pattern Analysis and Machine
  Intelligence}, 27(6): 835--850.

\bibitem[{Fu et~al.(2022)Fu, Yue, Liu, and Denoeux}]{Fu2022}
Fu, H.; Yue, X.; Liu, W.; and Denoeux, T. 2022.
\newblock Stable Clustering Ensemble Based on Evidence Theory.
\newblock In \emph{2022 IEEE International Conference on Image Processing
  (ICIP)}, 2046--2050.

\bibitem[{Guilbert et~al.(2022)Guilbert, Vrain, Dao, and
  de~Souto}]{Guilbert2022}
Guilbert, M.; Vrain, C.; Dao, T.-B.-H.; and de~Souto, M. C.~P. 2022.
\newblock Anchored Constrained Clustering Ensemble.
\newblock In \emph{2022 International Joint Conference on Neural Networks
  (IJCNN)}, 1--8.

\bibitem[{Hao et~al.(2024)Hao, Lu, Li, Nie, Wang, and Li}]{Hao10173506}
Hao, Z.; Lu, Z.; Li, G.; Nie, F.; Wang, R.; and Li, X. 2024.
\newblock Ensemble Clustering With Attentional Representation.
\newblock \emph{IEEE Transactions on Knowledge and Data Engineering}, 36(2):
  581--593.

\bibitem[{Huang, Lai, and Wang(2016)}]{huang2016robust}
Huang, D.; Lai, J.-H.; and Wang, C.-D. 2016.
\newblock Robust ensemble clustering using probability trajectories.
\newblock \emph{IEEE Transactions on Knowledge and Data Engineering}, 28(5):
  1312--1326.

\bibitem[{Huang et~al.(2021)Huang, Wang, Peng, Lai, and Kwoh}]{huang2021}
Huang, D.; Wang, C.-D.; Peng, H.; Lai, J.; and Kwoh, C.-K. 2021.
\newblock Enhanced ensemble clustering via fast propagation of cluster-wise
  similarities.
\newblock \emph{IEEE Transactions on Systems, Man, and Cybernetics: Systems},
  51(1): 508--520.

\bibitem[{Huang et~al.(2020)Huang, Wang, Wu, Lai, and Kwoh}]{Huang2020}
Huang, D.; Wang, C.-D.; Wu, J.-S.; Lai, J.-H.; and Kwoh, C.-K. 2020.
\newblock Ultra-Scalable Spectral Clustering and Ensemble Clustering.
\newblock \emph{IEEE Transactions on Knowledge and Data Engineering}, 32(6):
  1212--1226.

\bibitem[{Huang, Gao, and Akhavan(2023)}]{HUANG2023109255}
Huang, Q.; Gao, R.; and Akhavan, H. 2023.
\newblock An ensemble hierarchical clustering algorithm based on merits at
  cluster and partition levels.
\newblock \emph{Pattern Recognition}, 136: 109255.

\bibitem[{Hubert and Arabie(1985)}]{hubert1985comparing}
Hubert, L.; and Arabie, P. 1985.
\newblock Comparing partitions.
\newblock \emph{Journal of Classification}, 2(1): 193--218.

\bibitem[{Iam-On et~al.(2011)Iam-On, Boongoen, Garrett, and Price}]{Iam2011}
Iam-On, N.; Boongoen, T.; Garrett, S.; and Price, C. 2011.
\newblock A link-based approach to the cluster ensemble problem.
\newblock \emph{IEEE Transactions on Pattern Analysis and Machine
  Intelligence}, 33(12): 2396--2409.

\bibitem[{Jia et~al.(2023)Jia, Tao, Wang, and Wang}]{Jia2023}
Jia, Y.; Tao, S.; Wang, R.; and Wang, Y. 2023.
\newblock Ensemble Clustering via Co-Association Matrix Self-Enhancement.
\newblock \emph{IEEE Transactions on Neural Networks and Learning Systems},
  1--12.

\bibitem[{Li, Qian, and Wang(2021)}]{li2021}
Li, F.; Qian, Y.; and Wang, J. 2021.
\newblock GoT: a Growing Tree Model for Clustering Ensemble.
\newblock In \emph{Proceedings of the AAAI Conference on Artificial
  Intelligence}, volume~35, 8349--8356.

\bibitem[{Li et~al.(2019)Li, Qian, Wang, Dang, and Jing}]{li2019clustering}
Li, F.; Qian, Y.; Wang, J.; Dang, C.; and Jing, L. 2019.
\newblock Clustering ensemble based on sample's stability.
\newblock \emph{Artificial Intelligence}, 273: 37--55.

\bibitem[{Li et~al.(2017)Li, Qian, Wang, and Liang}]{Li2017}
Li, F.; Qian, Y.; Wang, J.; and Liang, J. 2017.
\newblock Multigranulation information fusion: A Dempster-Shafer evidence
  theory-based clustering ensemble method.
\newblock \emph{Information Sciences}, 378: 389--409.

\bibitem[{Li et~al.(2020)Li, Qian, Wang, Liang, and Wang}]{li2020}
Li, F.; Qian, Y.; Wang, J.; Liang, J.; and Wang, W. 2020.
\newblock Clustering method based on sample's stability.
\newblock \emph{SCIENTIA SINICA Informationis}, 50: 1239--1254.

\bibitem[{Li et~al.(2023)Li, Wang, Qian, Liu, and Wang}]{li2023fuzzy}
Li, F.; Wang, J.; Qian, Y.; Liu, G.; and Wang, K. 2023.
\newblock Fuzzy ensemble clustering based on self-coassociation and prototype
  propagation.
\newblock \emph{IEEE Transactions on Fuzzy Systems}, 31(10): 3610--3623.

\bibitem[{Liu et~al.(2017)Liu, Wu, Liu, Tao, and Fu}]{liu2017spectral}
Liu, H.; Wu, J.; Liu, T.; Tao, D.; and Fu, Y. 2017.
\newblock Spectral ensemble clustering via weighted k-means: theoretical and
  practical evidence.
\newblock \emph{IEEE Transactions on Knowledge and Data Engineering}, 29(5):
  1129--1143.

\bibitem[{Pilling(2017)}]{10.1111/rssa.12292}
Pilling, M. 2017.
\newblock {Handbook of Statistical Distributions with Applications}.
\newblock \emph{Journal of the Royal Statistical Society Series A: Statistics
  in Society}, 180(3): 941--941.

\bibitem[{Strehl and Ghosh(2002)}]{Strehl}
Strehl, A.; and Ghosh, J. 2002.
\newblock Cluster ensembles---a knowledge reuse framework for combining
  multiple partitions.
\newblock \emph{Journal of Machine Learning Research}, 3(Dec): 583--617.

\bibitem[{Tiwari et~al.(2020)Tiwari, Zhang, Mayclin, Thrun, Piech, and
  Shomorony}]{Tiwari2020}
Tiwari, M.; Zhang, M.~J.; Mayclin, J.; Thrun, S.; Piech, C.; and Shomorony, I.
  2020.
\newblock BanditPAM: almost linear time k-medoids clustering via multi-armed
  bandits.
\newblock In \emph{Proceedings of the 34th International Conference on Neural
  Information Processing Systems}. Curran Associates Inc.

\bibitem[{Vega-Pons and Ruiz-Shulcloper(2011)}]{vega2011survey}
Vega-Pons, S.; and Ruiz-Shulcloper, J. 2011.
\newblock A survey of clustering ensemble algorithms.
\newblock \emph{International Journal of Pattern Recognition and Artificial
  Intelligence}, 25(03): 337--372.

\bibitem[{Wang et~al.(2023)Wang, Li, Li, Hou, Qian, and Liang}]{wang2023rss}
Wang, J.; Li, F.; Li, J.; Hou, C.; Qian, Y.; and Liang, J. 2023.
\newblock RSS-Bagging: Improving Generalization Through the Fisher Information
  of Training Data.
\newblock \emph{IEEE Transactions on Neural Networks and Learning Systems},
  36(2): 1974--1988.

\bibitem[{Wang, Qian, and Li(2020)}]{wang2020learning}
Wang, J.; Qian, Y.; and Li, F. 2020.
\newblock Learning with mitigating random consistency from the accuracy
  measure.
\newblock \emph{Machine Learning}, 109(12): 2247--2281.

\bibitem[{Wang et~al.(2022)Wang, Qian, Li, Liang, and
  Zhang}]{wang2022generalization}
Wang, J.; Qian, Y.; Li, F.; Liang, J.; and Zhang, Q. 2022.
\newblock Generalization performance of pure accuracy and its application in
  selective ensemble learning.
\newblock \emph{IEEE Transactions on Pattern Analysis and Machine
  Intelligence}, 45(2): 1798--1816.

\bibitem[{Zhou, Zheng, and Pan(2019)}]{Zhou2019Ensemble}
Zhou, J.; Zheng, H.; and Pan, L. 2019.
\newblock Ensemble clustering based on dense representation.
\newblock \emph{Neurocomputing}, 357(SEP.10): 66--76.

\bibitem[{Zhou, Du, and Li(2020)}]{zhou2020p295}
Zhou, P.; Du, L.; and Li, X. 2020.
\newblock Self-paced Consensus Clustering with Bipartite Graph.
\newblock In \emph{Proceedings of the Twenty-Ninth International Joint
  Conference on Artificial Intelligence, {IJCAI-20}}, 2133--2139.

\bibitem[{Zhou et~al.(2022)Zhou, Wang, Du, and Li}]{ZHOU2022171}
Zhou, P.; Wang, X.; Du, L.; and Li, X. 2022.
\newblock Clustering ensemble via structured hypergraph learning.
\newblock \emph{Information Fusion}, 78: 171--179.

\bibitem[{Zhou and Tang(2006)}]{zhou2006clusterer}
Zhou, Z.-H.; and Tang, W. 2006.
\newblock Clusterer ensemble.
\newblock \emph{Knowledge-Based Systems}, 19(1): 77--83.

\end{thebibliography}

\clearpage

\appendix

\section{Appendix}

The Appendix includes:

\begin{itemize}
  \item A.1 The algorithms of k-HyperEdge initialization, k-HyperEdge diffusion, k-HyperEdge adjustion, and CEHM
  \item A.2 The proofs of Theorem \ref{the_bound}
  \item A.3 The proofs of Theorem \ref{the_re}
  \item A.4 The proofs of Theorem \ref{G-E}
  \item A.5 The working mechanism of CEHM on the artificial data sets
  \item A.6 The convergence results on the last fifteen data sets
  \item A.7 The compared methods and their settings
  \item A.8 The ARI from the compared methods on the twenty data sets
  \item A.9 The Statistical analysis about the experimental results
\end{itemize}

\subsection{A.1 The algorithms of k-HyperEdge initialization, k-HyperEdge diffusion, k-HyperEdge adjustion, and CEHM} \label{ce:re}

The processes of the k-HyperEdge initialization step are shown as Algorithm \ref{KHI}.

\begin{algorithm}[!ht]
\caption{k-HyperEdge initialization}
\label{KHI}
 \begin{algorithmic}
\STATE \textbf{INPUT:} $\Pi=\{\pi^1, \pi^2,\ldots, \pi^l\}$ and $k$
\STATE \textbf{OUTPUT:} $\mathcal{E}i=\{ei_1, ei_2, \ldots ei_k\}$
\STATE \textbf{Process:}
 \end{algorithmic}
\begin{algorithmic}[1]
\STATE Representing $\Pi$ as $\mathcal{E}(\Pi_c)=\{c_1, c_2, \ldots, c_{n_c}\}$
\STATE Running k-medoids algorithm on $\mathcal{E}(\Pi_c)$, obtaining $\mathcal{E}i=\{ei_1, ei_2, \ldots ei_k\}$
\FOR {$ei_i \in \mathcal{E}i$, $ei_j\in\mathcal{E}i$, $i\neq j$}
\STATE $ei_i = ei_i \setminus (ei_i\cap ei_j)$
\STATE $ei_j = ei_j \setminus (ei_i\cap ei_j)$
\ENDFOR
 \end{algorithmic}
\end{algorithm}

The processes of the k-HyperEdge diffusion step are shown as Algorithm \ref{KHD}.

\begin{algorithm}[!h]
\caption{k-HyperEdge diffusion}
\label{KHD}
 \begin{algorithmic}
\STATE \textbf{INPUT:} $\mathcal{E}(\Pi_c)=\{c_1, c_2, \ldots, c_{n_c}\}$, and $\mathcal{E}i=\{ei_1$, $ei_2$, $\ldots$, $ei_k\}$.
\STATE \textbf{OUTPUT:} $\mathcal{E}d=\{ed_1, ed_2, \ldots ed_k\}$.
\STATE \textbf{Process:}
 \end{algorithmic}
\begin{algorithmic}[1]
\STATE Initializing $\mathcal{E}d = \mathcal{E}i$
\STATE Obtaining $n_a$ based on Formula (\ref{n_a})
\WHILE {$n_a\neq n$}
\FOR {$i=1$ to $n$}
\STATE Calculating $ea^*(x_i)$ based on Formula (\ref{hx}).
\STATE Calculating $m(x_i)$ based on Formula (\ref{mx}).
\ENDFOR
\STATE Obtaining $n_s$ based on Formula (\ref{na})
\FOR {$i=1$ to $k$}
\STATE Updating $ed_i$ based on Formula (\ref{hdif}).
\ENDFOR
\STATE Obtaining $n_a$ based on Formula (\ref{n_a})
\ENDWHILE
 \end{algorithmic}
\end{algorithm}

The processes of the k-HyperEdge adjustion step are shown as Algorithm \ref{KHA}.

\begin{algorithm}[!h]
\caption{k-HyperEdge adjustion}
\label{KHA}
 \begin{algorithmic}
\STATE \textbf{INPUT:} $\mathcal{E}(\Pi_c)=\{c_1, c_2, \ldots, c_{n_c}\}$, $\mathcal{E}d=\{ed_1$, $ed_2$, $\ldots$, $ed_k\}$.
\STATE \textbf{OUTPUT:} $\mathcal{E}a=\{ea_1, ea_2, \ldots ea_k\}$.
\STATE \textbf{Process:}
 \end{algorithmic}
\begin{algorithmic}[1]
\STATE Initializing $t=1$, $\mathcal{E}a^t = \mathcal{E}d$, and $\mathcal{L}_{\text{adj}}(\mathcal{E}a^0)=n$
\STATE Calculating  $\mathcal{L}_{\text{adj}}(\mathcal{E}a^t)$ based on Formula (\ref{lka})
\WHILE {$\mathcal{L}_{\text{adj}}(\mathcal{E}a^t) \neq \mathcal{L}_{\text{adj}}(\mathcal{E}a^{t-1})$}
\STATE Setting $t=t+1$
\FOR {$i=1$ to $n$}
\STATE Calculating $ea^{t*}(x_i)$ based on Formula (\ref{btad}).
\ENDFOR
\FOR {$i=1$ to $k$}
\STATE Updating ${ea^t}_i$ based on Formula (\ref{upedge}).
\ENDFOR
\STATE Calculating $\mathcal{L}_{\text{adj}}(\mathcal{E}a^t)$ based on Formula (\ref{lka})
\ENDWHILE
 \end{algorithmic}
\end{algorithm}

The processes of CEHM are shown as Algorithm \ref{CEHM}.

\begin{algorithm}[!h]
\caption{CEHM}
\label{CEHM}
 \begin{algorithmic}
\STATE \textbf{INPUT:} $\Pi=\{\pi^1, \pi^2,\ldots, \pi^l\}$ and $k$
\STATE \textbf{OUTPUT:} $\pi^*=\{c_1, c_2, \ldots, c_k\}$
\STATE \textbf{Process:}
\end{algorithmic}
\begin{algorithmic}[1]
\STATE $\mathcal{E}i \leftarrow$ Algorithm 1 $(\Pi, k)$
\STATE $\mathcal{E}d \leftarrow$ Algorithm 2 $(\Pi, \mathcal{E}i)$
\STATE $\mathcal{E}a \leftarrow$ Algorithm 3 $(\Pi, \mathcal{E}d)$
\STATE $\mathcal{E}m = \mathcal{E}a$
\STATE Obtaining $\pi^*$ based on Formula (\ref{piedge})
 \end{algorithmic}
\end{algorithm}

\subsection{A.2 The proofs of Theorem \ref{the_bound}} \label{sec:pr1}

Before proving Theorem \ref{the_bound}, we first introduce the following lemma.

\begin{lemma} \label{le_in}
Assuming a base cluster set $\Pi_c$ and three heyperedges $ea_i$, $ea_j$, and $ea_r$. Let \begin{align}
dist(ea_i, ea_j) = n- |ea_i \cap ea_j|,
\end{align}


\begin{align}
\mathcal{\phi}(ea_i; ea_j) = \sum_{x\in ea_j} \left(n- \frac{1}{l}\sum_{x\in c_x} |c_x \cap ea_i| \right).
\end{align}

We can get the following three inequalities
\begin{align}\label{triangle1}
dist (ea_i,ea_j)+dist (ea_j,ea_r)&\ge dist (ea_i,ea_r);
\end{align}

\begin{align}\label{triangle2}
dist (ea_i,ea_j)-dist (ea_j,ea_r)&\le dist(ea_i,ea_r);
\end{align}

\begin{align}\label{triangle3}
\mathcal{\phi}(ea_i; ea_j) \ge |ea_j| dist (ea_i, ea_j)-\mathcal{\phi}(ea_j; ea_j).
\end{align}
\end{lemma}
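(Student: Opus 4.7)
The plan is to handle the three inequalities separately. Parts (\ref{triangle1}) and (\ref{triangle2}) are pure set-algebra statements about the overlap function $|ea_i \cap ea_j|$, while part (\ref{triangle3}) reduces to a pointwise estimate on $b(x, \cdot)$ that is then summed over $x \in ea_j$.

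For (\ref{triangle1}), I would start from the inclusion-exclusion identity inside $ea_j$,
\begin{equation*}
|ea_i \cap ea_j| + |ea_j \cap ea_r| = |(ea_i \cup ea_r) \cap ea_j| + |ea_i \cap ea_j \cap ea_r|,
\end{equation*}
and bound the right-hand side by $n + |ea_i \cap ea_r|$, using $|(ea_i \cup ea_r) \cap ea_j| \le n$ and $|ea_i \cap ea_j \cap ea_r| \le |ea_i \cap ea_r|$. Plugging into $dist(ea_i,ea_j) + dist(ea_j,ea_r) = 2n - (|ea_i \cap ea_j| + |ea_j \cap ea_r|)$ gives (\ref{triangle1}) after one line of arithmetic. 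Inequality (\ref{triangle2}) is then an immediate rearrangement of (\ref{triangle1}) applied with the roles of $ea_j$ and $ea_r$ permuted.

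For (\ref{triangle3}) the key intermediate claim is the pointwise bound
\begin{equation*}
b(x, ea_i) + b(x, ea_j) \le n + |ea_i \cap ea_j| \quad\text{for every } x \in ea_j.
\end{equation*}
I would prove this by applying the same inclusion-exclusion identity inside each base cluster $c_x$ containing $x$,
\begin{equation*}
|c_x \cap ea_i| + |c_x \cap ea_j| = |c_x \cap (ea_i \cup ea_j)| + |c_x \cap ea_i \cap ea_j|,
\end{equation*}
then summing over the $l$ clusters $c_x$ that contain $x$ (exactly one per partition) and dividing by $l$. The first term averages to at most $n$ since $|c_x \cap (ea_i \cup ea_j)| \le n$. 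For the second, swapping the order of summation yields
\begin{equation*}
\sum_{c_x \ni x} |c_x \cap ea_i \cap ea_j| = \sum_{y \in ea_i \cap ea_j} \bigl|\{k : \pi^k(y) = \pi^k(x)\}\bigr| \le l\,|ea_i \cap ea_j|,
\end{equation*}
so its average over the $l$ partitions is at most $|ea_i \cap ea_j|$. Rearranging the pointwise bound gives $(n-b(x,ea_i)) + (n-b(x,ea_j)) \ge n - |ea_i\cap ea_j| = dist(ea_i,ea_j)$; summing over $x \in ea_j$ and moving $\phi(ea_j;ea_j)$ to the other side produces (\ref{triangle3}).

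The main obstacle I anticipate is bookkeeping the double sum $\sum_{c_x \ni x}|c_x \cap ea_i \cap ea_j|$ correctly, i.e., invoking the fact that each partition contributes exactly one cluster through $x$, so that each $y \in ea_i \cap ea_j$ can only be counted once per partition. Once that interchange of summation is in place, the rest of the argument is elementary set algebra.
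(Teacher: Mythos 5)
Your proposal is correct and follows essentially the same route as the paper: the first inequality via the inclusion--exclusion identity $|ea_i\cap ea_j|+|ea_j\cap ea_r|=|ea_j\cap(ea_i\cup ea_r)|+|ea_j\cap ea_i\cap ea_r|$, the second as a rearrangement of the first, and the third by summing a per-sample bound over $x\in ea_j$. The only cosmetic difference is in the third part, where the paper rewrites $\phi$ as $\frac{1}{l}\sum_{x\in ea_j}\sum_{c_x\ni x} dist(c_x,ea_\cdot)$ and invokes the already-proved reverse triangle inequality termwise for each base cluster $c_x$, whereas you re-derive the equivalent bound $|c_x\cap ea_i|+|c_x\cap ea_j|\le n+|ea_i\cap ea_j|$ in averaged form with an (unnecessary but valid) double-counting swap; both arguments are sound.
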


\begin{proof}

For the first inequality, we have

\begin{align}\label{triangle_1}
&dist(ea_i, ea_j)+dist(ea_j, ea_r) - dist (ea_i,ea_r) \\ \notag
= & n + |ea_i \cap ea_r| - \left(|ea_i \cap ea_j| + |ea_j \cap ea_r|\right) \\ \notag
= & n + |ea_i \cap ea_r|  \\ 
& - \left(|ea_j \cap (ea_i \cup ea_r)| + |ea_j \cap ea_i \cap ea_r| \right) \\ \notag
\geq & n + |ea_i \cap ea_r| -  \left( |ea_i \cup ea_r| + |ea_i \cap ea_r| \right)\\ \notag
= & n - |ea_i \cup ea_r|\\ \notag
\geq & 0
\end{align}

Then, the first inequality holds.

The second inequality can be express as

\begin{align}\label{triangle_2}
& dist (ea_i,ea_j)-dist (ea_j,ea_r) \le dist(ea_i,ea_r) \\ \notag
\Leftrightarrow & dist(ea_i,ea_r) + dist (ea_j,ea_r)\geq dist (ea_i,ea_j)
\end{align}

According to Inequality (\ref{triangle1}), Inequality (\ref{triangle2}) holds true.

For Inequality (\ref{triangle3}), we have

\begin{align}\label{t3_1}
& \mathcal{\phi}(ea_i; ea_j)  \\ \notag
= & \sum_{x\in ea_j} \left( n- \frac{1}{l}\sum_{x\in c_x} |c_x \cap ea_i| \right) \\ \notag
= & \frac{1}{l} \sum_{x\in ea_j} \sum_{x\in c_x} \left( n- |c_x \cap ea_i|\right)\\ \notag
= & \frac{1}{l} \sum_{x\in ea_j} \sum_{x\in c_x} dist(c_x, ea_i).
\end{align}

In a similar way

\begin{align}\label{t3_2}
\mathcal{\phi}(ea_j; ea_j) = \frac{1}{l} \sum_{x\in ea_j} \sum_{x\in c_x} dist(c_x, ea_j).
\end{align}

In addition

\begin{align}\label{t3_3}
 & |ea_j| dist (ea_i, ea_j) \\ \notag
= & \sum_{x\in ea_j} dist (ea_i, ea_j) \\ \notag
= & \frac{1}{l} \sum_{x\in ea_j} \sum_{x\in c_x} dist (ea_i, ea_j).
\end{align}

Combining Equation (\ref{t3_1}), (\ref{t3_2}) and (\ref{t3_3}), the Inequality (\ref{triangle3}) is transformed to proof

\begin{align} \label{t3_4}
dist(c_x, ea_i) \geq dist (ea_i, ea_j)- dist(c_x, ea_j).
\end{align}

According to Inequality (\ref{triangle2}), Inequality (\ref{t3_4}), then Inequality (\ref{triangle3}) holds.
\end{proof}

Based on Lemma \ref{le_in}, we then prove the Theorem \ref{the_bound}.

\noindent\textbf{Theorem \ref{the_bound}.}
\emph{
let $\mathcal{E}a^0=\{{ea^0}_1, {ea^0}_2,\ldots, {ea^0}_k\}$ be a set of hyperedges produced by a g-approximate algorithm based on a base cluster set $\Pi_c$, and Let 
\begin{align}
dist(ea_i, ea_j) = n- |ea_i \cap ea_j|,
\end{align}
\begin{align}
\mathcal{\phi}(ea_i; ea_j) & = \sum_{x\in ea_j} \left(n- b(x, ea_i) \right) \\ \notag
& = \sum_{x\in ea_j} \left(n- \frac{1}{l}\sum_{x\in c_x} |c_x \cap ea_i| \right),
\end{align}
then for any hyperedge set, denoted by $\mathcal{E}a=\{{ea}_1, {ea}_2,\ldots, {ea}_k\}$,  we have $\forall ea_i$, $\exists {ed^0}_i$ s.t. $dist(ea_i, {ea^0}_i) \le \frac{(g+1) \phi (ea_i; *)}{|ea_i|}$.
}

\begin{proof}

Suppose $\forall ea_i$, $\exists {ed^0}_i$ s.t. $dist(ea_i, {ea^0}_i) > \frac{(g+1) \phi (ea_i)}{|ea_i|}$.

Based on Inequality (\ref{triangle3}), we draw into the following inequality and deduction

\begin{align} \label{theorem_r1}
\mathcal{\phi}({ea^0}_i; ea_i) & \ge |ea_i| dist ({ea^0}_i, ea_i)-\mathcal{\phi}(ea_i; ea_i) \\ \notag
&\ge|ea_i| \frac{(g+1) \phi (ea_i; *)}{|ea_i|}-\mathcal{\phi}(ea_i; ea_i) \\ \notag
&\ge(g+1)\phi (ea_i; *)-\phi (ea_i; *) \\ \notag
&\ge g \phi (ea_i; *),
\end{align}
where $\phi (ea_i; *)$ denotes the optimal cost. The Inequality (\ref{theorem_r1}) contradicting the g-approximate solution.
\end{proof}


\subsection{A.3 The proofs of Theorem \ref{the_re}} \label{sec:pr2}

\noindent\textbf{Theorem \ref{the_re}.}
\emph{
Assuming the optimal hyperedge of sample $x$ is the $i$-th hyperedge: $ea^*(x)=ea_i$; assuming a 
hyperedge sets $\mathcal{E}a^1=\{{ea^1}_1, {ea^1}_2,\ldots, {ea^1}_k\}$ and $x$ is in the $i$-th hyperedge in $\mathcal{E}a^1$: $x\in {ea^1}_i$; assuming another hyperedge set $\mathcal{E}a^2$ based on $\mathcal{E}a^1$, and in $\mathcal{E}a^2$, $x$ is moved from the $i$-th hyperedge into the $j$-th hyperedge: for $\forall p \in \{1,2,..,k\}/\{i,j\}$, ${ea^2}_i={ea^1}_i/x$, ${ea^2}_j={ea^1}_j\cup x$. Then, we have $\mathcal{L}_{\text{adj}}(\mathcal{E}a^1)\leq \mathcal{L}_{\text{adj}}(\mathcal{E}a^2)$.
}

\begin{proof}

We first categorize and discuss the values of $b\left(y, {ea^2}(y)\right)$.

When $y\notin {ea^2}_i$, and $y\notin {ea^2}_j$, then based on the assumption $\forall p\neq i \neq j$, ${ea^2}_p={ea^1}_p$, and according to Equation (\ref{bx}), it is easy to obtain

\begin{align}
b\left(y, {ea^2}(y)\right)=b\left(y, {ea^1}(y)\right).
\end{align}

When $y\in {ea^2}_i$, we have

\begin{align}
  & b\left(y, {ea^2}_i\right) \notag\\
= & \frac{1}{l}\sum_{y\in c_y} |c_y \cap {ea^2}_i| \notag\\
= & \frac{1}{l}\sum_{y\in c_y} |c_y \cap ({ea^1}_i/x)| \notag\\
= & \frac{1}{l}\sum_{y\in c_y} \left(|c_y \cap {ea^1}_i|- |c_y \cap x|\right) \notag\\
= & b\left(y, {ea^1}_i\right) - \frac{1}{l}\sum_{y\in c_y} |c_y \cap x| \notag\\
= & b\left(y, {ea^1}_i\right) - \frac{1}{l}\sum_{x, y\in c_x} 1.
\end{align} 

When $y\in {ea^2}_j$, we have

\begin{align}
  & b\left(y, {ea^2}_j\right) \notag\\
= & \frac{1}{l}\sum_{y\in c_y} |c_y \cap ({ea^1}_j\cup x)| \notag\\
= & b\left(y, {ea^1}_j\right) + \frac{1}{l}\sum_{y\in c_y} |c_y \cap x| \notag\\
= & b\left(y, {ea^1}_i\right) + \frac{1}{l}\sum_{x, y\in c_x} 1.
\end{align} 

Then, based on Equation (\ref{lka}), 

\begin{align} \label{adj12}
  & \mathcal{L}_{\text{adj}}(\mathcal{E}a^2) \\ \notag
= & \sum_{y \in \mathcal{E}a^2} \left(1- b(y, ea^2(y))\right) \\ \notag
= & \mathcal{L}_{\text{adj}}(\mathcal{E}a^1) + \frac{1}{l}\sum_{y\in {ea^2}_i, x, y\in c_y} 1 - \frac{1}{l}\sum_{y\in {ea^2}_j, x, y\in c_y} 1 
\end{align}

Based on the assumption $ea^*(x)=ea_i$ and Equation (\ref{bx}), we have

\begin{align}\label{bx12}
  & b(x, {ea^1}_i) \geq b(x, {ea^1}_j) \\ \notag
\Rightarrow & \sum_{x \in c_x} |c_x \cap {ea^1}_i| \geq \sum_{x \in c_x} |c_x \cap {ea^1}_j| \\ \notag
\Rightarrow & \sum_{y\in {ea^1}_i, x, y\in c_x} 1 \geq \sum_{y\in {ea^2}_j, x, y\in c_x} 1
\end{align}

Then, combining Formula (\ref{adj12}) and Formula (\ref{bx12}), we have $\mathcal{L}_{\text{adj}}(\mathcal{E}a^2) \geq \mathcal{L}_{\text{adj}}(\mathcal{E}a^1)$.
\end{proof}

\subsection{A.4 The proofs of Theorem \ref{G-E}}

\noindent\textbf{Theorem \ref{G-E}.}
\emph{
Consider an estimator $\hat{\eta}_N(X)$ of $\eta(X)$ as defined above. 
Suppose that the second moment of $\eta(X)$ of $ea$ exists: 
$\mathbb{E}_{x\in ea}  \eta(X) ^2 <+\infty $ and the difference
in probability density between set $ea$ and the whole set $\mathcal{X}$ is bounded:
that is, there exists $C$, such that 
$\mathbb{E}_{x\in ea}
\left(\frac{\textbf{\textit{m}}(x|x\in\mathcal{X})}{\textbf{\textit{m}}(x|x\in ea)}\right)^2<C$.
Then $\mathbb{E}|\hat{\eta}_N(X)-\eta(X)|\rightarrow 0$ if
\begin{align}
&N_{c_x}\rightarrow\infty , \notag\\
&\mathbb{E}_{x\in{ea}} \bigg( \frac{\textbf{\textit{m}}(x|x\in{c_x})}{\textbf{\textit{m}}(x|x\in{ea})}-1\bigg)^2\rightarrow0, \notag\\
&\mathbb{V}_{x\in{ea}} (\eta (X))\rightarrow0,
\end{align}
where $\textbf{\textit{m}}(\cdot)$ is the probability density function and $\mathbb{V}$ is the variance operator.
}

We prove Theorem \ref{G-E} by providing an upper bound on the approximation error in estimating probability through nearest neighbors, i.e. Theorem \ref{G-E1}. The Theorem \ref{G-E} is an application of Theorem \ref{G-E1} in a specific scenario when the $A$ in Theorem \ref{G-E1} is $c_x$ and the $B$ is $ea$.

\subsubsection{A.4.1 Approximation error in estimating probability through nearest neighbors}

Many methods estimate the properties of a sample based on the properties of its neighbors \cite{wang2020learning}. Suppose there is a cell $\mathcal{B} \in \mathcal{R}^d$ equipped with the properties of $B$, what we are interested in is the probability that a given sample $X=x$ has property $B$, i.e.
\begin{equation}
 \eta (x)= \mathbb{P}\{x\in \mathcal{B}|X=x\}.
\end{equation}

However, because the underlying distribution of the samples is generally unknown, it is not feasible to estimate the probability that a single sample point with a certain property. As a compromise, one can count ﻿the number of samples which have the property $B$ in another cell $\mathcal{A} \in \mathcal{R}^d$ to approximate $\eta (x)$. Suppose there are $N$ samples $\{X_1,...,X_N\}$, the approximation $\hat{\eta}_N(x)$ based on $\mathcal{A}$ is
\begin{equation}
\hat{\eta}_N(x)= \frac{1}{N_\mathcal{A}}\sum_{i:X_i\in\mathcal{A}} \mathbb{I}\{X_i\in \mathcal{B}\},
\end{equation}
where $N_\mathcal{A}$ denote the number of samples falling in $\mathcal{A}$, i.e.
\begin{equation}
N_\mathcal{A} =\sum_{i=1}^N \mathbb{I}\{X_i\in \mathcal{A}\},
\end{equation}
and $\mathbb{I}$ is the indicator function.

Intuitively, the efficiency of the above estimator depends on the number of available samples and difficulty degree of estimating property $B$.
Besides, to make the method of estimating $B$ via $A$ feasible, there may require a certain similarity between property $A$ and property $B$.
In this section, we theoretically provide some criteria to judge the quality of the estimator.
These criteria provide us novelty insights into evaluating or designing estimators.

To evaluate of the estimator $\hat{\eta}_N(x)$, we explore the upper bound of the estimation error $\mathbb{E}|\hat{\eta}_N(X)-\eta(X)|$, which is reflected by the following Theorem \ref{G-E1}. From Theorem \ref{G-E1}, we can obtain that if the number of samples in $\mathcal{A}$ is large enough, the probability density distributions of sets $\mathcal{A}$ and $\mathcal{B}$ are close and the variance of posterior probability over $\mathcal{B}$ is small, then the $\hat{\eta}_N(x)$ is a good estimator. This conclusion is consistent with the intuitive idea.


\begin{theorem}
\label{G-E1} 
Consider an estimator $\hat{\eta}_N(X)$ of $\eta(X)$ as defined above. 
Suppose that the second moment of $\eta(X)$ of $\mathcal{B}$ exists: 
$\mathbb{E}_{x\in\mathcal{B}}  \eta(X) ^2 <+\infty $ and the difference
in probability density between set $\mathcal{B}$ and the whole set $\mathcal{X}$ is bounded:
that is, there exists $C$, such that 
$\mathbb{E}_{x\in\mathcal{B}}
\left(\frac{\textbf{\textit{m}}(x|x\in\mathcal{X})}{\textbf{\textit{m}}(x|x\in\mathcal{B})}\right)^2<C$.
Then $\mathbb{E}|\hat{\eta}_N(X)-\eta(X)|\rightarrow 0$ if
\begin{align}
&N_\mathcal{A}\rightarrow\infty , \notag\\
&\mathbb{E}_{x\in\mathcal{B}} \bigg( \frac{\textbf{\textit{m}}(x|x\in\mathcal{A})}{\textbf{\textit{m}}(x|x\in\mathcal{B})}-1\bigg)^2\rightarrow0, \notag\\
&\mathbb{V}_{x\in\mathcal{B}} (\eta (X))\rightarrow0,
\end{align}
where $\textbf{\textit{m}}(\cdot)$ is the probability density function and $\mathbb{V}$ is the variance operator.
\end{theorem}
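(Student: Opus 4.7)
The plan is to attack the problem with the classical bias--variance split, writing
\[
|\hat{\eta}_N(X) - \eta(X)| \;\le\; |\hat{\eta}_N(X) - p_{\mathcal{A}}| \;+\; |p_{\mathcal{A}} - \eta(X)|,
\]
where $p_{\mathcal{A}} := \mathbb{P}(Y\in\mathcal{B}\mid Y\in\mathcal{A}) = \int \eta(y)\,\textbf{\textit{m}}(y|\mathcal{A})\,dy$ is the conditional expectation of the estimator given the cell $\mathcal{A}$. The first piece is the stochastic error, the second the approximation error. I will further insert the reference value $p_{\mathcal{B}}:=\int \eta(y)\,\textbf{\textit{m}}(y|\mathcal{B})\,dy$ and split the approximation error by the triangle inequality as $|p_{\mathcal{A}}-\eta(X)| \le |p_{\mathcal{A}}-p_{\mathcal{B}}| + |p_{\mathcal{B}}-\eta(X)|$, so that each of the three hypotheses kills exactly one summand.

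The stochastic error is the easy part: conditional on the event that $N_{\mathcal{A}}$ samples fall into $\mathcal{A}$, the indicators $\{\mathbb{I}\{X_i\in\mathcal{B}\}:X_i\in\mathcal{A}\}$ are i.i.d.\ Bernoulli$(p_{\mathcal{A}})$, so Jensen's inequality together with the trivial bound $p_{\mathcal{A}}(1-p_{\mathcal{A}})\le 1/4$ yields $\mathbb{E}[|\hat{\eta}_N(X) - p_{\mathcal{A}}| \mid N_{\mathcal{A}}] \le (2\sqrt{N_{\mathcal{A}}})^{-1}$, which vanishes under the first hypothesis $N_{\mathcal{A}}\to\infty$. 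The first piece of the approximation error, $|p_{\mathcal{A}}-p_{\mathcal{B}}|$, is deterministic: rewriting
\[
p_{\mathcal{A}}-p_{\mathcal{B}} \;=\; \int \eta(y)\!\left[\frac{\textbf{\textit{m}}(y|\mathcal{A})}{\textbf{\textit{m}}(y|\mathcal{B})}-1\right]\!\textbf{\textit{m}}(y|\mathcal{B})\,dy
\]
and applying Cauchy--Schwarz produces $|p_{\mathcal{A}}-p_{\mathcal{B}}|^2 \le \mathbb{E}_{y\in\mathcal{B}}[\eta(y)^2]\cdot \mathbb{E}_{y\in\mathcal{B}}[(\textbf{\textit{m}}(y|\mathcal{A})/\textbf{\textit{m}}(y|\mathcal{B})-1)^2]$, where the first factor is finite by the second-moment hypothesis on $\eta$ and the second factor vanishes by the second hypothesis.

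For the remaining summand $\mathbb{E}_X|p_{\mathcal{B}}-\eta(X)|$ with $X\sim\textbf{\textit{m}}(\cdot|\mathcal{X})$, the plan is to change measure to $\textbf{\textit{m}}(\cdot|\mathcal{B})$ via the density ratio $\textbf{\textit{m}}(x|\mathcal{X})/\textbf{\textit{m}}(x|\mathcal{B})$ and apply Cauchy--Schwarz a second time; since $p_{\mathcal{B}} = \mathbb{E}_{x\in\mathcal{B}}[\eta(X)]$, the resulting second-moment factor is exactly $\mathbb{V}_{x\in\mathcal{B}}(\eta(X))$, yielding the bound $\sqrt{C}\cdot\sqrt{\mathbb{V}_{x\in\mathcal{B}}(\eta(X))}$, which vanishes by combining the bounded-density-ratio assumption with the third hypothesis. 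The subtle step is this second change of measure: it is precisely the assumption $\mathbb{E}_{x\in\mathcal{B}}(\textbf{\textit{m}}(x|\mathcal{X})/\textbf{\textit{m}}(x|\mathcal{B}))^2 < C$ that allows an expectation over $X\sim\mathcal{X}$ to be controlled by a quantity computed over the cell $\mathcal{B}$, and it implicitly absorbs the support condition that the ratio be well defined---points where $\textbf{\textit{m}}(x|\mathcal{B})=0$ but $\textbf{\textit{m}}(x|\mathcal{X})>0$ would otherwise blow up the integral, which is why the $L^2$ (rather than pointwise) formulation is used. Once these two moves are in place, the rest is routine bookkeeping.
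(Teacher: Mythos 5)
Your proposal is correct and follows essentially the same route as the paper: your three-term decomposition through $p_{\mathcal{A}}$ and $p_{\mathcal{B}}$ is exactly the paper's split through $\bar{\eta}(X)=\mathbb{E}\{\eta(X)\mid X\in\mathcal{A}\}$ and $\bar{\eta}_B(X)=\mathbb{E}\{\eta(X)\mid X\in\mathcal{B}\}$, and your handling of the approximation-error term (change of measure from $\mathcal{A}$ to $\mathcal{B}$ plus Cauchy--Schwarz against $\mathbb{E}_{x\in\mathcal{B}}\eta(X)^2$) and of the deviation term (change of measure from $\mathcal{X}$ to $\mathcal{B}$ plus Cauchy--Schwarz, producing $\sqrt{C}\sqrt{\mathbb{V}_{x\in\mathcal{B}}(\eta(X))}$) matches the paper's Terms 2 and 3 line for line. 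The one real difference is the stochastic term: the paper derives the exact mean absolute deviation of a binomial variable, $\mathbb{E}|X-Np|\le\sqrt{2Np(1-p)/\pi}$, via two auxiliary lemmas and a Stirling estimate, and under the side condition that $N_{\mathcal{A}}\bar{\eta}(X)$ be an integer, arriving at the bound $\sqrt{1/(2\pi N_{\mathcal{A}})}$; you instead invoke Jensen's inequality $\mathbb{E}|Z-\mathbb{E}Z|\le\sqrt{\mathbb{V}(Z)}$ to get $1/(2\sqrt{N_{\mathcal{A}}})$. Your version is weaker by the constant factor $\sqrt{\pi/2}$ but dispenses entirely with the binomial machinery of Appendix A.4.2 and with the integrality assumption; since the theorem is purely asymptotic, that simplification costs nothing.
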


\begin{proof}
To gain insight into the estimation error, we firstly introduce two terms.
Let
 \begin{align}
\bar{\eta}_B(X)&=\mathbb{E}\{\eta (X)|X\in\mathcal{B}\}, \notag \\
\bar{\eta} (X)&= \mathbb{E}\{\eta(X)|X\in\mathcal{A}\}, 
\end{align}
s.t. $N_\mathcal{A}\bar{\eta} (X)$ is an integer. 

By the triangle inequality, we know the error can be bounded by:
 \begin{align}
\label{errorbound_A}
&\mathbb{E}|\hat{\eta}_N(X)-\eta(X)| \notag\\
\leq &\mathbb{E}|\hat{\eta}_N(X)-\bar{\eta}(X)|+ \mathbb{E}|\bar{\eta}(X)- \eta (X)|\notag \\
\leq & \underbrace{\mathbb{E}|\hat{\eta}_N(X)-\bar{\eta}(X)|}_{\text{Term 1}} +\underbrace{\mathbb{E}|\bar{\eta}(X)-\bar{\eta}_B(X)|}_{\text{Term 2}} \notag \\
& +\underbrace{\mathbb{E}|\bar{\eta}_B(X)- \eta (X)|}_{\text{Term 3}}.
\end{align}
Obviously, the terms in the above bound are the estimation error of samples, the approximation error the set $\mathcal{A}$ and
the absolute deviation the probability and its expectation in set $\mathcal{B}$ in sequence. Next, we provide upper bounds on the these three terms.

{\bf{Term 1. Bound on the estimation error of the samples.}}

By simply calculation, we find that $\mathbb{E} \hat{\eta}_N(X) =  \bar{\eta} (X)$.
That is, $N_\mathcal{A}\hat{\eta}_N(X)$ follows a binomial distribution with parameters and $N_\mathcal{A}$ and $\bar{\eta} (X)$.
According to Theorem \ref{binomial} in Appendix A.4.2, we have:
\begin{equation}
\mathbb{E}|\hat{\eta}_N(X)-\bar{\eta} (X)|\leq \sqrt{\frac{ 1}{2\pi   N_\mathcal{A}}} .
\end{equation}
When $ N_\mathcal{A}\rightarrow\infty$, we obtain $\mathbb{E}|\hat{\eta}_N(X)-\bar{\eta} (X)|\rightarrow0$.
This means that if the number of samples in set $\mathcal{A}$ are large enough, the estimation error of samples tends to zero.

{\bf{Term 2. Bound on the approximation error of set $\mathcal{A}$.}}
Let $\textbf{\textit{m}}(x|x\in\mathcal{B})$ and $\textbf{\textit{m}}(x|x\in\mathcal{A})$ be the probability density function of samples in
$\mathcal{A}$ and $\mathcal{B}$, respectively.
According to the definition of expectation,

\begin{align}
&\mathbb{E}_{x\in\mathcal{A}}\mathbb{P}(x\in\mathcal{B}|X=x)\notag\\
=&\int\mathbb{P}(x\in\mathcal{B}|X=x) d\textbf{\textit{m}}(x|x\in\mathcal{A})\notag\\
=&\mathbb{E}_{x\in\mathcal{B}}\mathbb{P}(x\in\mathcal{B}|X=x)\frac{\textbf{\textit{m}}(x|x\in\mathcal{A})}{\textbf{\textit{m}}(x|x\in\mathcal{B})}.
\end{align}

Then, the difference between the probability's expectation over $\mathcal{A}$ and $\mathcal{B}$ is:
\begin{align}\label{term2}
&|\bar{\eta}(X)-\bar{\eta}_B(X)| \notag \\ 
=& |\mathbb{E}_{x\in\mathcal{A}}\mathbb{P}(x\in\mathcal{B}|X=x) \notag \\ 
 & - \mathbb{E}_{x\in\mathcal{B}}\mathbb{P}(x\in\mathcal{B}|X=x)|  \notag \\ 
= &\bigg|\mathbb{E}_{x\in\mathcal{B}}\mathbb{P}(x\in\mathcal{B}|X=x)
\bigg( \frac{\textbf{\textit{m}}(x|x\in\mathcal{A})}{\textbf{\textit{m}}(x|x\in\mathcal{B})}-1 \bigg)
\bigg| \notag\\ 
\leq &
\sqrt{
\mathbb{E}_{x\in\mathcal{B}} \big( \mathbb{P}(x\in\mathcal{B}|X=x)\big)^2
} \notag \\
& \times \sqrt{
\mathbb{E}_{x\in\mathcal{B}}  \bigg( \frac{\textbf{\textit{m}}(x|x\in\mathcal{A})}{\textbf{\textit{m}}(x|x\in\mathcal{B})}-1 \bigg)^2
} 
\end{align}
where the last inequality is according to the Cauchy-Schwarz inequality. That is, for any random variables $X,Y$, there is $\mathbb{E}|XY|\le\sqrt{\mathbb{E}X^2\mathbb{E}Y^2}$.

Finally, by taking the logarithm and exponent on the right side of the inequality, we obtain
\begin{align}
&|\bar{\eta}(X)-\bar{\eta}_B(X)|\notag\\
\leq & \exp \bigg\{
\frac{1}{2}\log \mathbb{E}_{x\in\mathcal{B}} \big( \mathbb{P}(x\in\mathcal{B}|X=x)\big)^2  \notag\\
&\quad +\frac{1}{2} \log
\mathbb{E}_{x\in\mathcal{B}}
\bigg( \frac{\textbf{\textit{m}}(x|x\in\mathcal{A})}{\textbf{\textit{m}}(x|x\in\mathcal{B})}-1\bigg)^2
\bigg\}.
\end{align}

From the above inequality, 
with the assumption that the second moment of $\eta(X)$ of $\mathcal{B}$ exists, 
we obtain that when $ \mathbb{E}_{x\in\mathcal{B}}
\big( \frac{\textbf{\textit{m}}(x|x\in\mathcal{A})}{\textbf{\textit{m}}(x|x\in\mathcal{B})}-1\big)^2\rightarrow0$, the difference $|\bar{\eta}(X)-\bar{\eta}_B(X)|\rightarrow0$.
This means that if the probability distribution functions of set $\mathcal{A}$ and set $\mathcal{B}$ approach the same, the approximation error of $\mathcal{A}$ to $\mathcal{B}$ tends to zero.

{\bf{Term 3. Bound on the absolute deviation.}}

For the third term in Formula (\ref{errorbound_A}), according to the Cauchy-Schwarz inequality, we have:
\begin{align}
&\mathbb{E}_ {x\in\mathcal{X}}|\bar{\eta}_B(X)- \eta (X)|\notag \\
=&\mathbb{E}_{x\in\mathcal{B}}  |\bar{\eta}_B(X)- \eta (X)|
\frac{\textbf{\textit{m}}(x|x\in\mathcal{X})}{\textbf{\textit{m}}(x|x\in\mathcal{B})}\notag \\
\le&
\sqrt{
\mathbb{E}_{x\in\mathcal{B}}  |\bar{\eta}_B(X)- \eta (X)|^2
}
\sqrt{
\mathbb{E}_{x\in\mathcal{B}}\bigg(\frac{\textbf{\textit{m}}(x|x\in\mathcal{X})}{\textbf{\textit{m}}(x|x\in\mathcal{B})}\bigg)^2
}\notag \\
=&\sqrt{
\mathbb{V}_{x\in\mathcal{B}} (\eta (X))
}
\sqrt{
\mathbb{E}_{x\in\mathcal{B}}\bigg(\frac{\textbf{\textit{m}}(x|x\in\mathcal{X})}{\textbf{\textit{m}}(x|x\in\mathcal{B})}\bigg)^2
},
\end{align}
where $\mathbb{V}$ is the variance operator.
From the above inequality, with the assumption that the difference
in probability density between set $\mathcal{B}$ and the whole set $\mathcal{X}$ is bounded,
we obtain that when $\mathbb{V}_{x\in\mathcal{B}} (\eta (X))\rightarrow0$, the difference
$\mathbb{E}_ {x\in\mathcal{X}}|\bar{\eta}_B(X)- \eta (X)|\rightarrow0$. This means that if the probabilities of the samples belonging to the set $\mathcal{B}$ are more concentrated around their expected value, it is more smaller that the expected absolute deviation between the posterior probability and the mean of the posterior probabilities over $\mathcal{B}$.
\end{proof}

\subsubsection{A.4.2 Some theoretical results on binomial distribution random variables}

In this section, we survey the binomial distribution random variables to give some theoretical results about its probability and expectation. Theses results are used in proving the Term 1 of Theorem \ref{G-E1}.

Specially, Lemma \ref{binomial0} gives a relation of the partial sum of the binomial variable with its probability. Lemma \ref{supP} gives the upper bound of the probability of a binomial random variable in terms of its value.

Based on Lemma \ref{binomial0} and Lemma \ref{supP}, Theorem  \ref{binomial} gives the upper bound on the difference between the binomial random variable and its expectation.

\begin{lemma}\cite{10.1111/rssa.12292}
\label{binomial0}
If $X$ follows the binomial distribution with parameter $(N,p)$, where $N$ is the number of trails and $p$ is the probability of success, then
\begin{align}
&\sum_{i=n}^NiC_N^ip^i(1-p)^{N-i} \\ \notag
=& Np\mathbb{P}(X\geq n)+n(1-p)\mathbb{P}(X= n).
\end{align}
\label{relation}
 \end{lemma}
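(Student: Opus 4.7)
The plan is to prove the identity by interpreting the left-hand side as an expectation and exploiting the fact that a Binomial$(N,p)$ variable is a sum of i.i.d.\ Bernoullis. Write $X = X_1 + \cdots + X_N$ with independent $X_k \sim \text{Bernoulli}(p)$, so that the left-hand side equals $\mathbb{E}[X\,\mathbb{I}\{X\ge n\}] = \sum_{k=1}^N \mathbb{E}[X_k\,\mathbb{I}\{X\ge n\}]$. By symmetry, all $N$ summands are equal, so this reduces to $N\,\mathbb{P}(X_1=1,\,X\ge n) = Np\,\mathbb{P}(X' \ge n-1)$, where $X' := X_2+\cdots+X_N$ is Binomial$(N-1,p)$ and independent of $X_1$.

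Next I would split the remaining tail as $\mathbb{P}(X'\ge n-1) = \mathbb{P}(X'\ge n)+\mathbb{P}(X'=n-1)$ and convert both pieces back to quantities involving the original $X$. For the point mass, a one-line ratio of probability mass functions gives $\mathbb{P}(X'=n-1)/\mathbb{P}(X=n) = n/(Np)$, so $Np\,\mathbb{P}(X'=n-1) = n\,\mathbb{P}(X=n)$. For the tail, conditioning $X$ on $X_1$ gives $\mathbb{P}(X\ge n) = (1-p)\,\mathbb{P}(X'\ge n) + p\,\mathbb{P}(X'\ge n-1)$; substituting the decomposition above and solving for $\mathbb{P}(X'\ge n)$ yields $\mathbb{P}(X'\ge n) = \mathbb{P}(X\ge n) - (n/N)\,\mathbb{P}(X=n)$.

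Assembling the two pieces, $Np\,\mathbb{P}(X'\ge n-1) = Np\,\mathbb{P}(X\ge n) - np\,\mathbb{P}(X=n) + n\,\mathbb{P}(X=n) = Np\,\mathbb{P}(X\ge n) + n(1-p)\,\mathbb{P}(X=n)$, which is the claimed equality. There is no substantive obstacle here; the coefficient $n(1-p)$ on the right is essentially the fingerprint of the extra $(1-p)$ factor that appears when relating a Binomial$(N-1,p)$ point mass at $n-1$ to a Binomial$(N,p)$ point mass at $n$. A purely algebraic alternative would use $i\binom{N}{i} = N\binom{N-1}{i-1}$ to rewrite the sum as $Np$ times a Binomial$(N-1,p)$ tail from $n-1$, then convert that tail back to a Binomial$(N,p)$ tail via Pascal's rule; the bookkeeping is slightly heavier but produces exactly the same two correction terms.
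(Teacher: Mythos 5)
Your argument is correct, and it takes a genuinely different route from the paper's. The paper proceeds purely algebraically: it subtracts $Np\,\mathbb{P}(X\geq n)$ from the sum, rewrites the weight as $i-Np = i(1-p)-(N-i)p$, applies the absorption identities $iC_N^i = NC_{N-1}^{i-1}$ and $(N-i)C_N^i = NC_{N-1}^{i}$, and observes that the resulting sum telescopes, leaving only the boundary term $NC_{N-1}^{n-1}p^n(1-p)^{N-n+1} = n(1-p)\mathbb{P}(X=n)$. You instead use the Bernoulli representation $X=X_1+\cdots+X_N$ and exchangeability to identify the truncated first moment $\mathbb{E}[X\,\mathbb{I}\{X\ge n\}]$ with $Np\,\mathbb{P}(X'\ge n-1)$ for $X'\sim\mathrm{Bin}(N-1,p)$, then convert that $(N-1)$-trial tail back to $N$-trial quantities by conditioning on $X_1$; the two correction terms you extract match the claimed right-hand side exactly, and I verified the ratio $\mathbb{P}(X'=n-1)/\mathbb{P}(X=n)=n/(Np)$ and the tail relation $\mathbb{P}(X'\ge n)=\mathbb{P}(X\ge n)-(n/N)\mathbb{P}(X=n)$ both hold (the degenerate case $n=0$ is harmless since the correction term carries a factor of $n$). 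The paper's telescoping computation is shorter and entirely self-contained; your probabilistic derivation is slightly longer in bookkeeping but explains conceptually why the coefficient $n(1-p)$ appears and would generalize more readily, e.g., to exchangeable rather than independent indicators. Your closing remark correctly identifies the paper's actual method as the ``purely algebraic alternative.''
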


\begin{proof} This is an existing lemma. For completeness, we provide the proof process as follows:
 \begin{align}
 &\sum_{i=n}^NiC_N^ip^i(1-p)^{N-i}-Np\mathbb{P}(X\geq n)\notag\\
 =&\sum_{i=n}^N (i-Np)C_N^ip^i(1-p)^{N-i}\notag\\
  =& \sum_{i=n}^N \big(i(1-p)-(N-i)p\big)C_N^ip^i(1-p)^{N-i} \notag\\
 =&  \sum_{i=n}^N  N  C_{N-1}^{i-1} p^i(1-p)^{N-i+1}-N  C_{N-1}^{i} p^{i+1}(1-p)^{N-i} \notag\\
  =& N C_{N-1}^{n-1} p^n(1-p)^{N-n+1} \notag\\
   = &  nC_{N}^{n} p^n(1-p)^{N-n+1}\notag\\
 =& n(1-p)\mathbb{P}(X= n).
 \end{align}
 \end{proof}

\begin{lemma}
If $X$ follows the binomial distribution with parameter $(N,p)$, then
\begin{equation}
 \mathbb{P}(X=i)\leq \sqrt{\frac{N}{2\pi i(N-i)}}, \quad 0<i<N.
\end{equation}
\label{supP}
 \end{lemma}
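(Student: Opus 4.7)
The plan is to reduce the bound to a $p$-free worst-case inequality and then invoke Stirling's formula to finish. First I would fix $i$ and $N$ and maximize $\mathbb{P}(X=i)=\binom{N}{i}p^i(1-p)^{N-i}$ over $p\in(0,1)$. Differentiating $\log(p^i(1-p)^{N-i})$ in $p$ and setting the result to zero gives the unique interior critical point $p^{*}=i/N$; since the log is strictly concave in $p$ this is the global maximizer on $(0,1)$. Substituting yields $\mathbb{P}(X=i)\le\binom{N}{i}(i/N)^i((N-i)/N)^{N-i}$, which removes $p$ from the problem and reduces the lemma to the pure factorial inequality $\binom{N}{i}(i/N)^i((N-i)/N)^{N-i}\le\sqrt{N/(2\pi i(N-i))}$.

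Next I would invoke Robbins' sharpening of Stirling's formula, namely $\sqrt{2\pi n}(n/e)^n e^{1/(12n+1)}<n!<\sqrt{2\pi n}(n/e)^n e^{1/(12n)}$. Applying the upper bound on $N!$ and the lower bounds on $i!$ and $(N-i)!$ in $\binom{N}{i}=N!/(i!(N-i)!)$, the $(n/e)^n$ factors combine with $(i/N)^i((N-i)/N)^{N-i}$ and cancel exactly. What remains is the desired prefactor $\tfrac{1}{\sqrt{2\pi}}\sqrt{N/(i(N-i))}$ multiplied by the exponential correction $\exp\!\bigl(1/(12N)-1/(12i+1)-1/(12(N-i)+1)\bigr)$.

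The only remaining task is to show this correction is at most $1$, i.e. $1/(12N)\le 1/(12i+1)+1/(12(N-i)+1)$. Since $1\le i\le N-1$, we have $12i+1\le 12N-11<12N$, so $1/(12i+1)>1/(12N)$ already, and adding the second nonnegative term only strengthens the inequality. Hence the correction factor is at most $1$ and the stated bound follows.

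The main obstacle I expect is the bookkeeping on the Stirling error terms: one must choose the correct direction of Robbins' inequality for each of the three factorials and then verify that the three reciprocal exponents line up the right way. Everything else is routine: a log-concavity argument for the maximization over $p$, followed by clean algebraic cancellation. An alternative route would use only the weaker one-sided bound $n!\ge\sqrt{2\pi n}(n/e)^n$ on the denominator and absorb the resulting slack elsewhere, but Robbins' version makes the constant $\sqrt{1/(2\pi)}$ appear exactly with no stray factors, so I would favor it.
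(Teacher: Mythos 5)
Your proposal is correct and follows essentially the same route as the paper: maximize $p^i(1-p)^{N-i}$ at $p=i/N$, then bound $\binom{N}{i}$ by $\frac{N^N}{i^i(N-i)^{N-i}}\cdot\frac{1}{\sqrt{2\pi}}\sqrt{N/(i(N-i))}$ via Stirling. The only difference is that you justify the Stirling step rigorously with Robbins' two-sided bounds and verify the exponential correction is at most one, whereas the paper simply asserts that inequality for $\binom{N}{i}$ without tracking the error terms.
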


\begin{proof}
When $p=\frac{i}{N}$, $\frac{\partial}{\partial p}  p^i(1-p)^{N-i}=0$, so
$p^i(1-p)^{N-i}$ reaches the maximum value at $p=\frac{i}{N}$. Thus, we have:
\begin{align}
 \mathbb{P}(X=i)&=  C_N^ip^i(1-p)^{N-i}\notag\\
 &\leq  C_N^i \bigg(\frac{i}{N}\bigg)^i\bigg(\frac{N-i}{N}\bigg)^{N-i} \notag\\
 &\leq \sqrt{\frac{N}{2\pi i(N-i)}}.
\end{align}
Then according to the Stirling formula,
\begin{equation}
C_N^i \leq \frac{N^N}{i^i(N-i)^{N-i} } \frac{1}{\sqrt{2\pi}}\sqrt {\frac{N}{i(N-i)}}.
\end{equation}
The lemma is proved.
 \end{proof}

 \begin{theorem}
 \label{binomial}
 If $X$ follows the binomial distribution with parameter $(N,p)$, suppose $Np$ is an integer, then
 \begin{equation}
\mathbb{E}|X-Np|
\leq \sqrt{\frac{2 Np(1-p)}{\pi }} \leq  \sqrt{\frac{N}{2\pi}}.
\end{equation}

 \end{theorem}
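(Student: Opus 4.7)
The plan is to reduce $\mathbb{E}|X-Np|$ to an exact closed form involving only the point probability $\mathbb{P}(X = Np)$, and then bound that single probability with Lemma \ref{supP}. The key observation is that when $n := Np$ is an integer, the absolute value $|i-n|$ splits the expectation symmetrically around the mean, and because $\mathbb{E}(X-n)=0$ the two half-sums must be equal. I would therefore first write
\begin{equation}
\mathbb{E}|X-Np| = 2\sum_{i=n}^{N}(i-n)\binom{N}{i}p^{i}(1-p)^{N-i},
\end{equation}
so the problem is reduced to bounding the upper tail moment.

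Next, I would apply Lemma \ref{binomial0} with this choice of $n=Np$. The lemma rewrites $\sum_{i=n}^{N} i\binom{N}{i}p^{i}(1-p)^{N-i}$ as $Np\,\mathbb{P}(X\geq n) + n(1-p)\mathbb{P}(X=n)$. Subtracting $n\,\mathbb{P}(X\geq n)$ from both sides cancels the tail probability $\mathbb{P}(X\geq n)$ exactly, because $n=Np$. What remains is the surprisingly clean identity
\begin{equation}
\sum_{i=n}^{N}(i-n)\binom{N}{i}p^{i}(1-p)^{N-i} = n(1-p)\,\mathbb{P}(X=n),
\end{equation}
so $\mathbb{E}|X-Np| = 2Np(1-p)\,\mathbb{P}(X=Np)$. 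This reduction is the conceptual heart of the proof.

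Finally, I would invoke Lemma \ref{supP} at $i=n=Np$, yielding $\mathbb{P}(X=Np) \leq \sqrt{N/(2\pi \, Np\,(N-Np))} = 1/\sqrt{2\pi N p(1-p)}$. Plugging this back in gives
\begin{equation}
\mathbb{E}|X-Np| \leq \frac{2Np(1-p)}{\sqrt{2\pi Np(1-p)}} = \sqrt{\frac{2Np(1-p)}{\pi}},
\end{equation}
which is the first inequality. For the second inequality, I would just use the elementary bound $p(1-p)\leq 1/4$, giving $\sqrt{2Np(1-p)/\pi} \leq \sqrt{N/(2\pi)}$.

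The only technical subtlety I anticipate is ensuring the integrality hypothesis $Np\in\mathbb{Z}$ is used correctly, since it is precisely what lets the cancellation in the Lemma \ref{binomial0} step go through cleanly; without it the equality $\mathbb{E}|X-Np| = 2Np(1-p)\mathbb{P}(X=Np)$ would pick up extra fractional boundary terms. Apart from that, every step is a direct application of an already-stated lemma followed by an elementary simplification, so I do not expect any genuine obstacle.
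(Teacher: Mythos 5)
Your proposal is correct and follows essentially the same route as the paper's own proof: split the expectation symmetrically using $\mathbb{E}(X-Np)=0$, apply Lemma \ref{binomial0} at $n=Np$ so the tail probability cancels and $\mathbb{E}|X-Np|=2Np(1-p)\mathbb{P}(X=Np)$, then bound the point probability with Lemma \ref{supP}. No gaps; the final step $p(1-p)\le 1/4$ is also how the second inequality is obtained.
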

\begin{proof}
Because $\mathbb{E} (X-Np) =0$, we have:
\begin{align}
 &\sum_{i=0}^{ Np-1}(i-Np)C_N^ip^i(1-p)^{N-i} \notag\\
=&-\sum_{i= Np}^{N}(i-Np)C_N^ip^i(1-p)^{N-i}.
\end{align}
Then, according to Lemma \ref{relation} and Lemma \ref{supP}:
\begin{align}
&\mathbb{E}|X-Np|\notag\\
=& \sum_{i=0}^N|i-Np|C_N^ip^i(1-p)^{N-i} \notag\\
=& \sum_{i=0}^{ Np-1}(Np-i)C_N^ip^i(1-p)^{N-i} \notag\\
&+ \sum_{i= Np}^{N}(i-Np)C_N^ip^i(1-p)^{N-i}\notag\\
=& 2\sum_{i= Np}^{N}(i-Np)C_N^ip^i(1-p)^{N-i}\notag\\
=& 2 Np(1-p)\mathbb{P}(X= Np)\notag\\
= & 2 Np(1-p) \frac{1}{\sqrt{2\pi Np(1-p)}} \notag\\
= &\sqrt{ \frac{ 2{Np(1-p)}}{\pi }} \leq \sqrt{ \frac{ N }{2\pi }}.
\end{align}
\end{proof}

\subsection{A.5 The working mechanism of CEHM on the artificial data sets}

The information about the artificial data sets used in the paper is shown in Table \ref{artdata}. 

The working mechanism of CEHM on the last three artificial data sets in Table \ref{artdata} (2d2k, EngyTime, and Flame) are shown in Figure \ref{2d2k} to Figure \ref{Flame}.

\begin{table}[!ht]
\begin{center}
\caption{Description of the artificial data sets}
\begin{tabular*}{\hsize}{@{\extracolsep{\fill}}cccc}
\hline
Number & Data name &  n & k\\
\hline
1	&	WingNut 	&	1016		&	2	\\
2	&	2d2k 	&	1000		&	2	\\
3	&	EngyTime 	&	4096		&	2	\\
4	&	Flame 	&	240	&	2	\\
\hline
\end{tabular*}
\label{artdata}
\end{center}
\end{table}

\begin{figure}[!ht]
\captionsetup[subfigure]{labelformat=empty}
      \centering
       \subfloat[\scriptsize{\textbf{(a)} $\mathcal{E}i$}]{\includegraphics[width=0.15\textwidth]{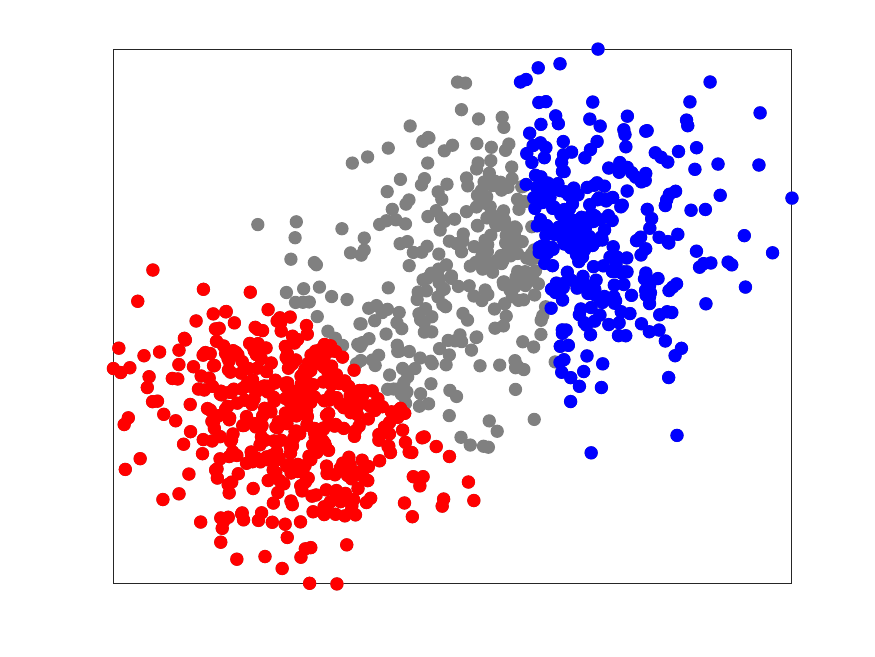}}\;
       \subfloat[\scriptsize{\textbf{(b)} $\mathcal{E}d(1)$}]{\includegraphics[width=0.15\textwidth]{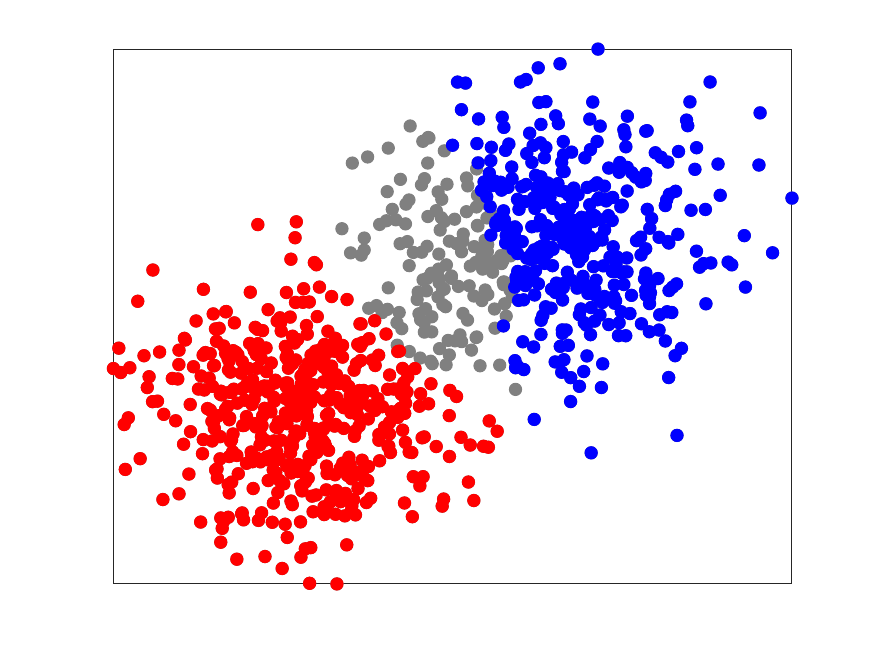}}\;
       \subfloat[\scriptsize{\textbf{(c)} $\mathcal{E}d(2)$}]{\includegraphics[width=0.15\textwidth]{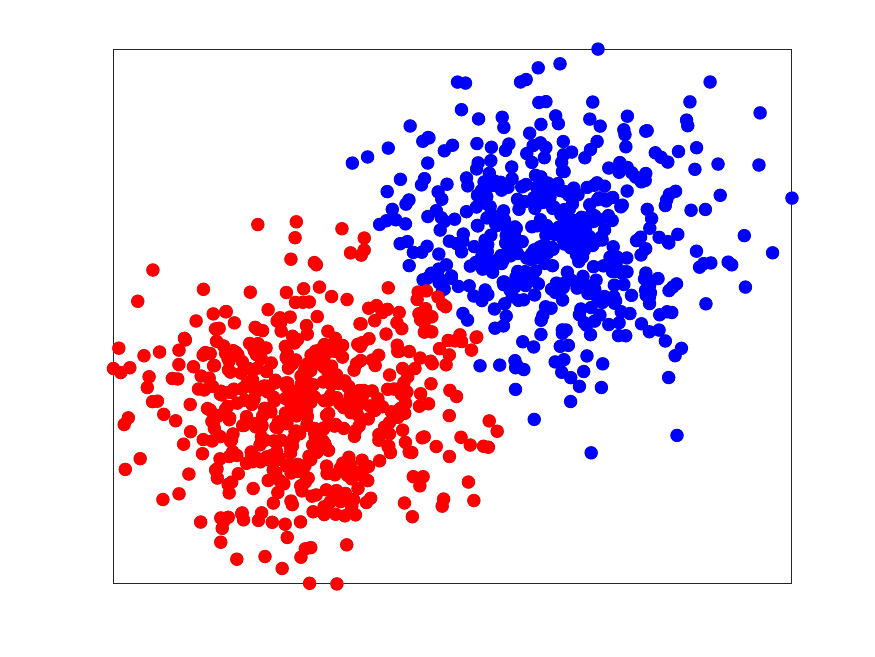}}\
       \subfloat[\scriptsize{\textbf{(d)} $\mathcal{E}a(1)$}]{\includegraphics[width=0.15\textwidth]{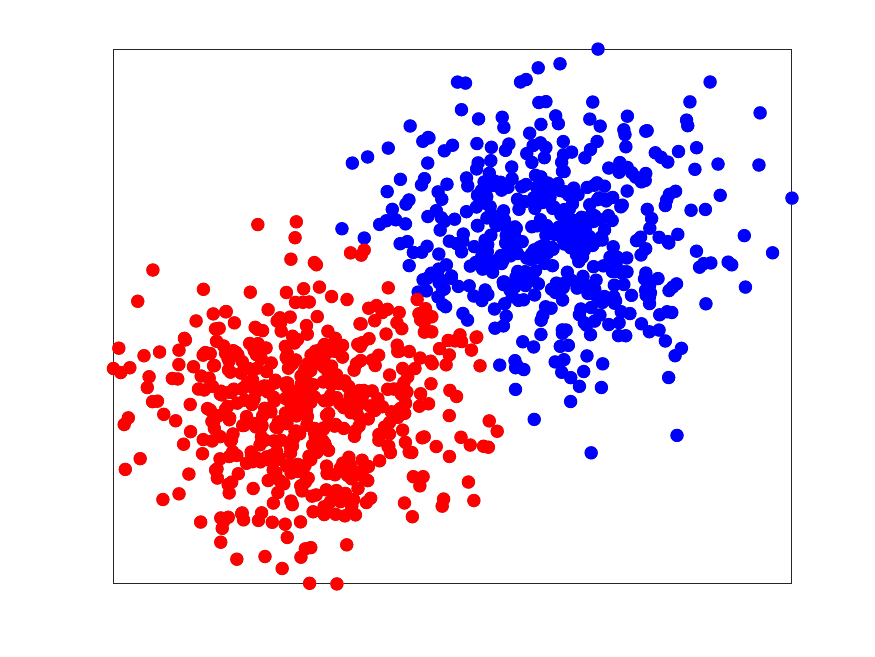}}\;
       \subfloat[\scriptsize{\textbf{(e)} $\mathcal{E}a(2)$}]{\includegraphics[width=0.15\textwidth]{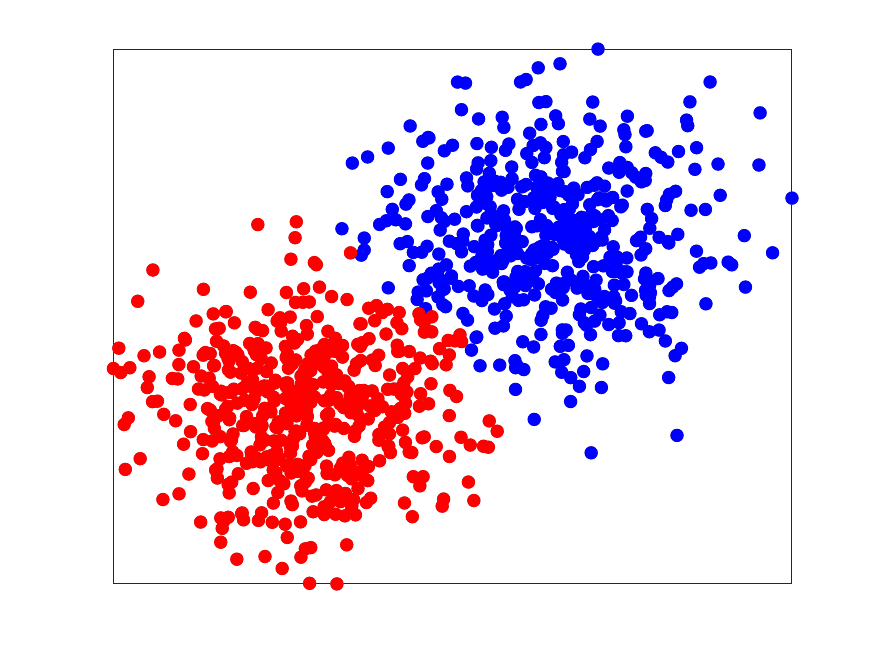}}\;
       \subfloat[\scriptsize{\textbf{(f)} $\mathcal{E}_m$}]{\includegraphics[width=0.15\textwidth]{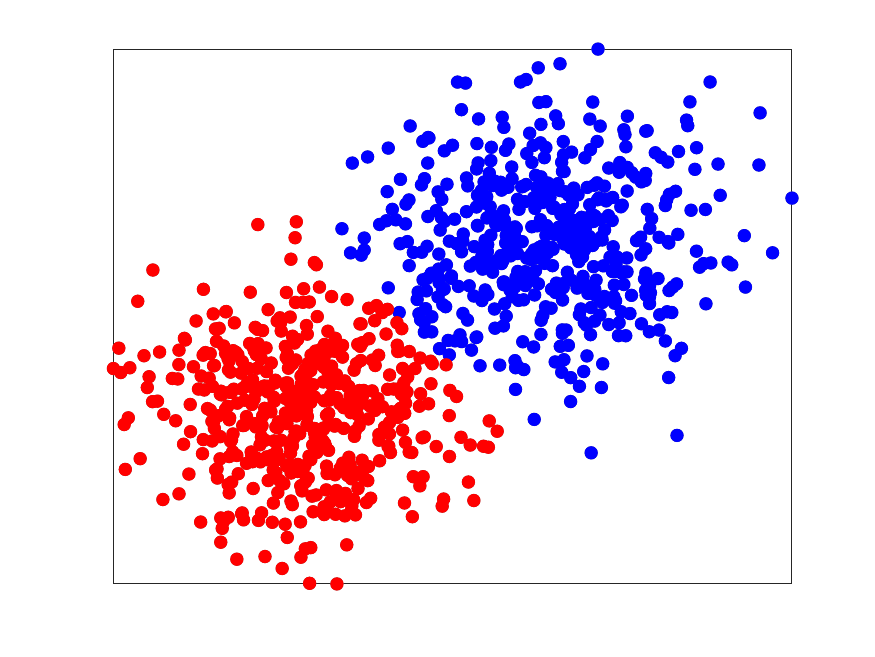}}
\caption{Working mechanism of CEHM on the 2d2k data set}
\label{2d2k}
\end{figure}

\begin{figure}[!ht]
\captionsetup[subfigure]{labelformat=empty}
      \centering
       \subfloat[\scriptsize{\textbf{(a)} $\mathcal{E}i$}]{\includegraphics[width=0.15\textwidth]{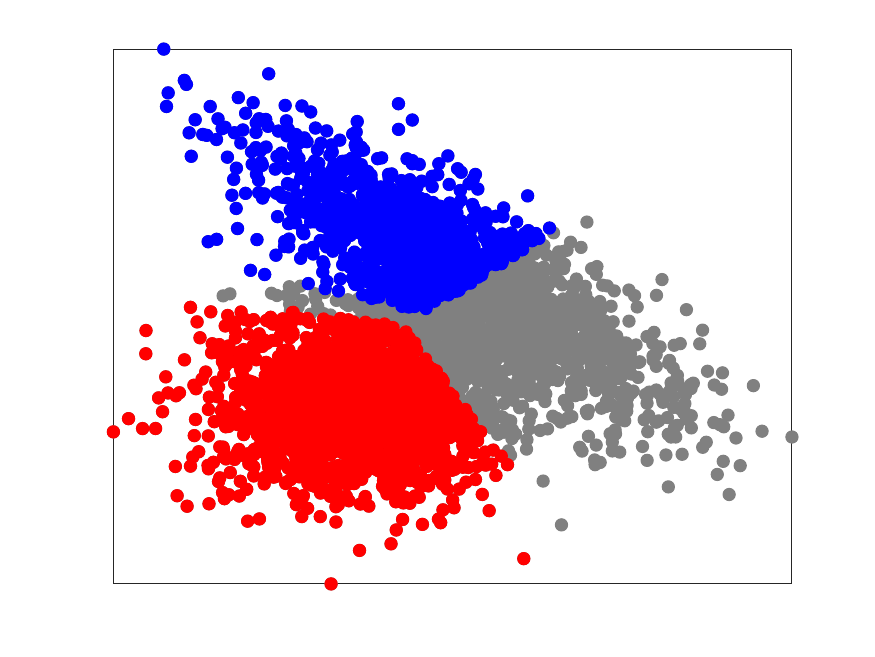}}\;
       \subfloat[\scriptsize{\textbf{(b)} $\mathcal{E}d(1)$}]{\includegraphics[width=0.15\textwidth]{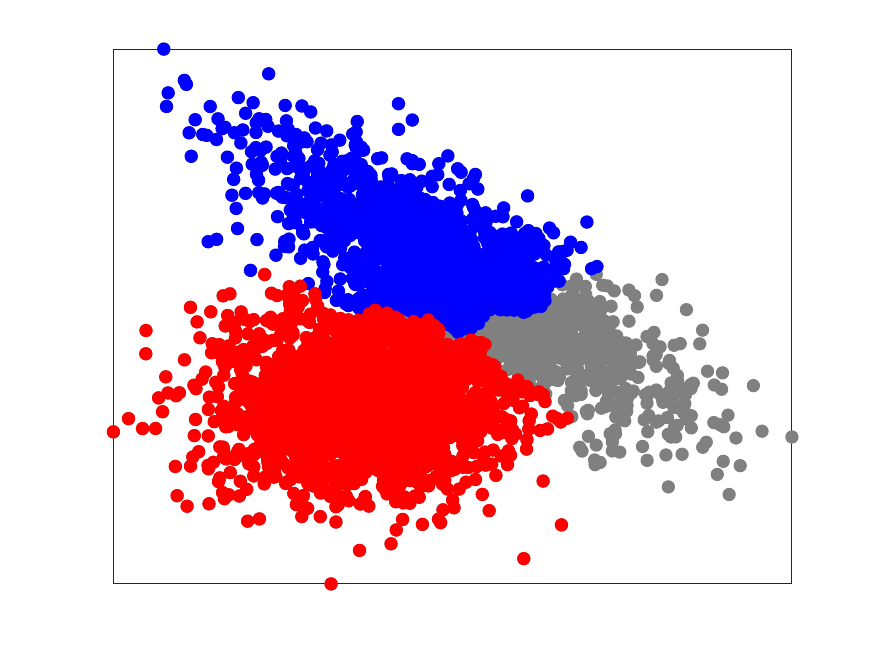}}\;
       \subfloat[\scriptsize{\textbf{(c)} $\mathcal{E}d(2)$}]{\includegraphics[width=0.15\textwidth]{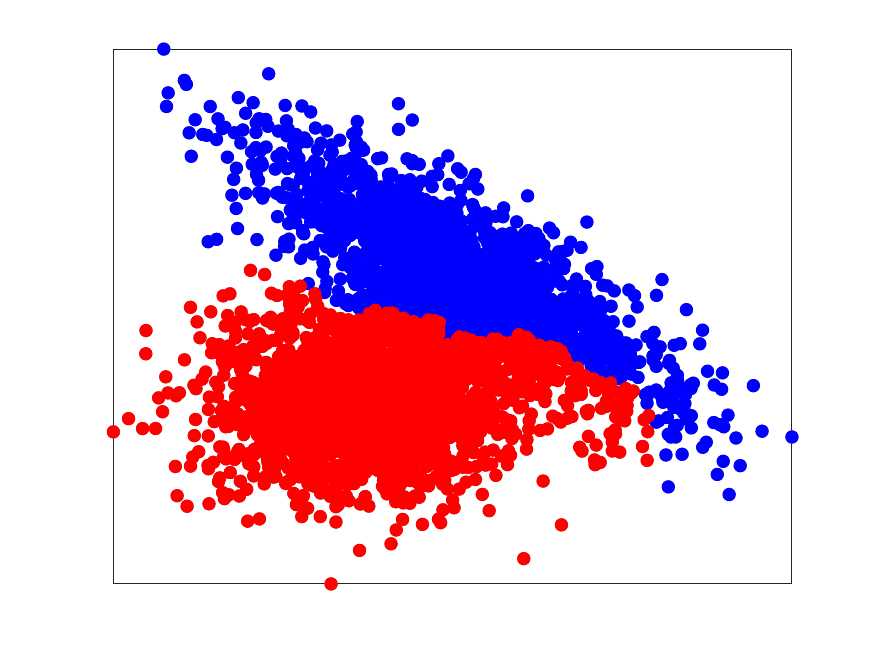}}\
       \subfloat[\scriptsize{\textbf{(d)} $\mathcal{E}a(1)$}]{\includegraphics[width=0.15\textwidth]{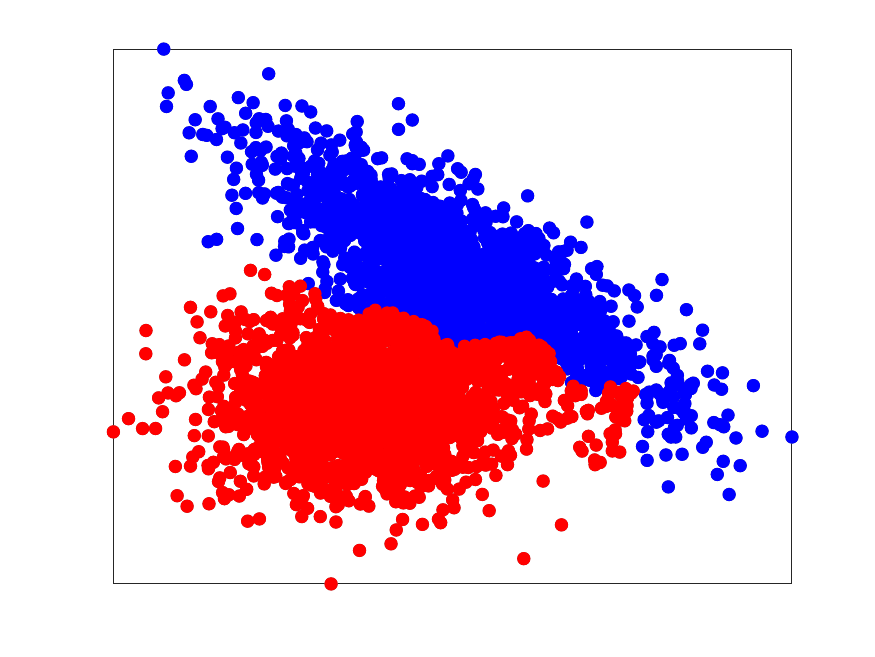}}\;
       \subfloat[\scriptsize{\textbf{(e)} $\mathcal{E}a(2)$}]{\includegraphics[width=0.15\textwidth]{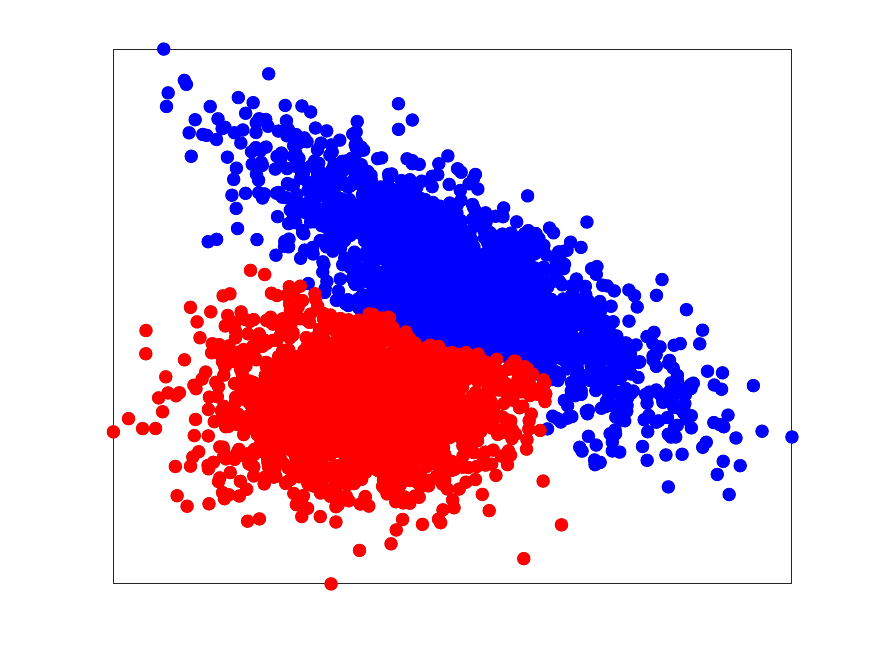}}\;
       \subfloat[\scriptsize{\textbf{(f)} $\mathcal{E}_m$}]{\includegraphics[width=0.15\textwidth]{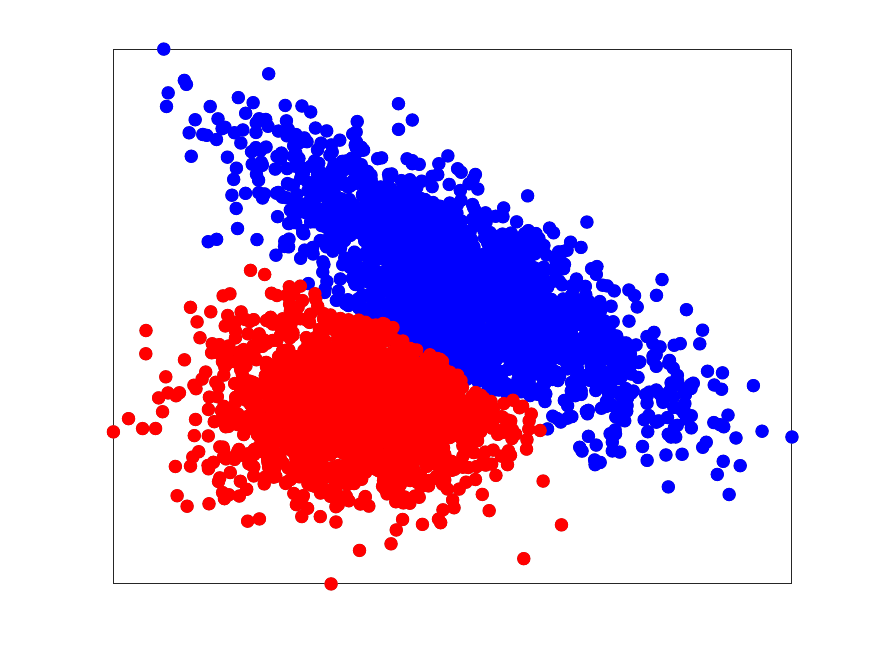}}
\caption{Working mechanism of CEHM on the EngyTime data set}
\label{EngyTime}
\end{figure}

\begin{figure}[!ht]
\captionsetup[subfigure]{labelformat=empty}
      \centering
       \subfloat[\scriptsize{\textbf{(a)} $\mathcal{E}i$}]{\includegraphics[width=0.15\textwidth]{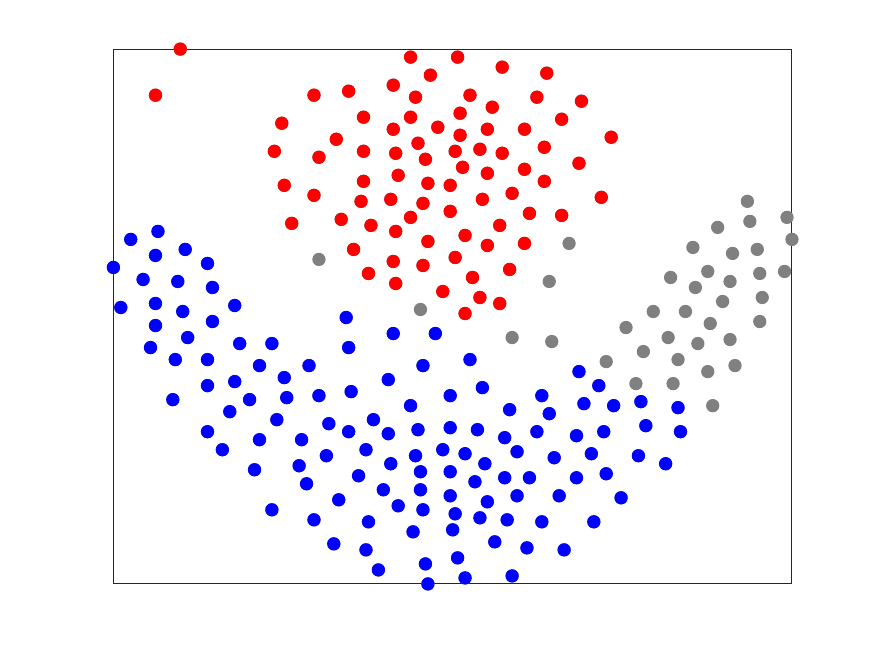}}\;
       \subfloat[\scriptsize{\textbf{(b)} $\mathcal{E}d(1)$}]{\includegraphics[width=0.15\textwidth]{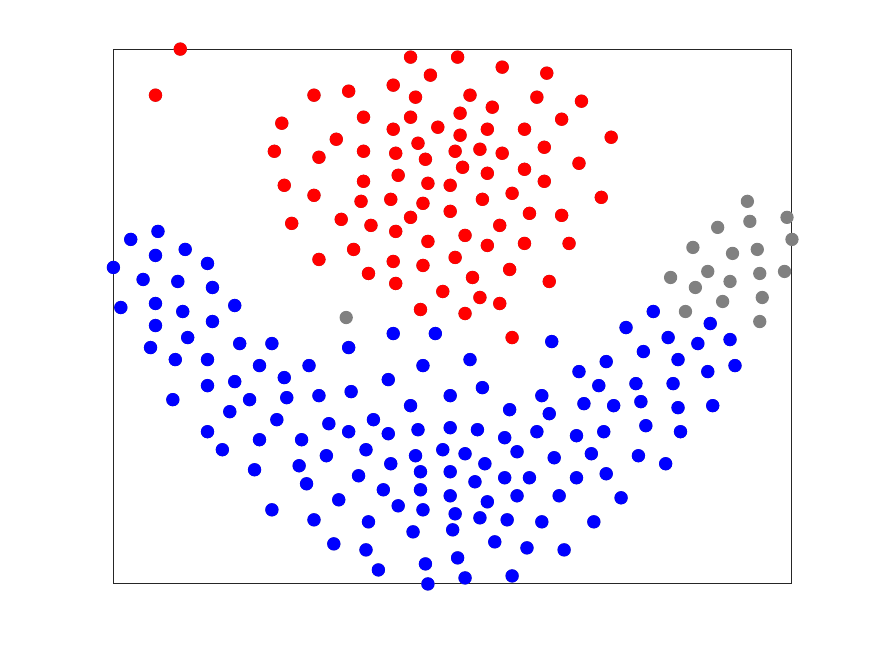}}\;
       \subfloat[\scriptsize{\textbf{(c)} $\mathcal{E}d(2)$}]{\includegraphics[width=0.15\textwidth]{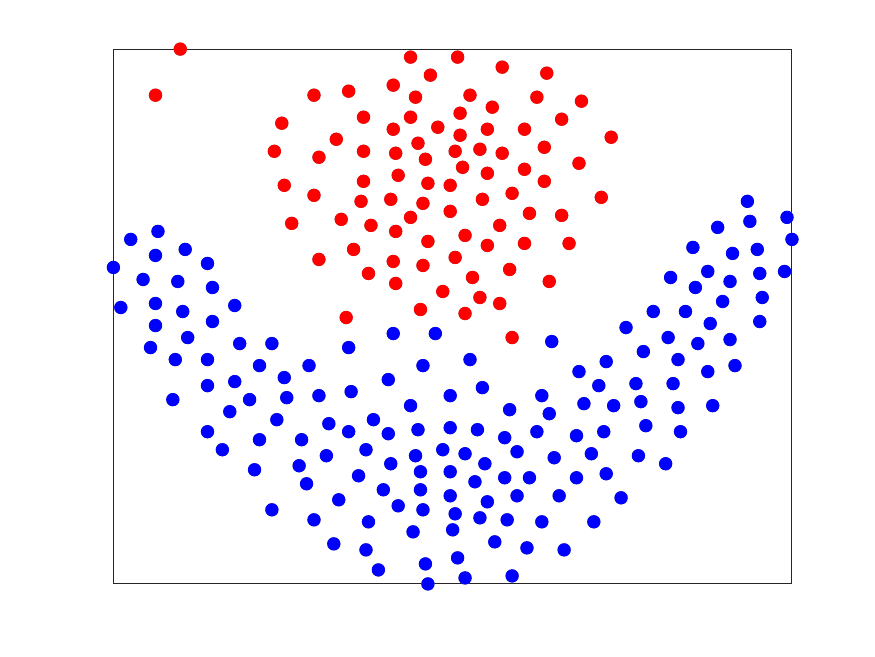}}\
       \subfloat[\scriptsize{\textbf{(d)} $\mathcal{E}a(1)$}]{\includegraphics[width=0.15\textwidth]{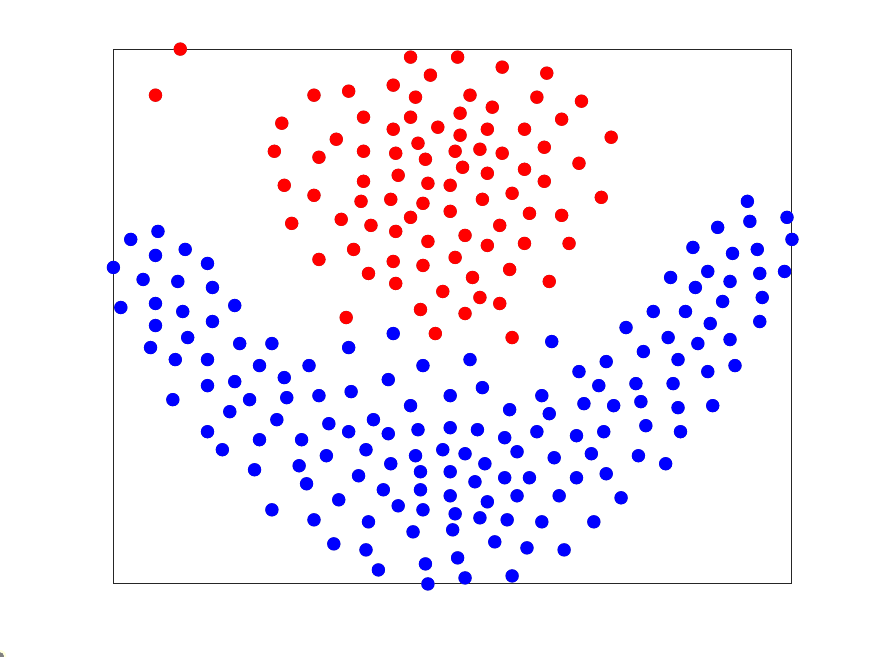}}\;
       \subfloat[\scriptsize{\textbf{(e)} $\mathcal{E}a(2)$}]{\includegraphics[width=0.15\textwidth]{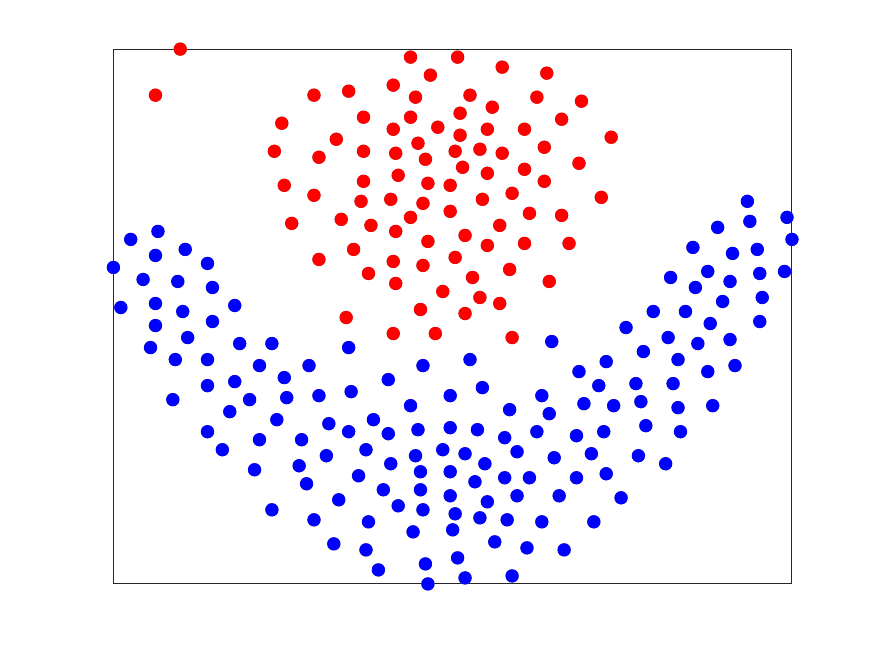}}\;
       \subfloat[\scriptsize{\textbf{(f)} $\mathcal{E}_m$}]{\includegraphics[width=0.15\textwidth]{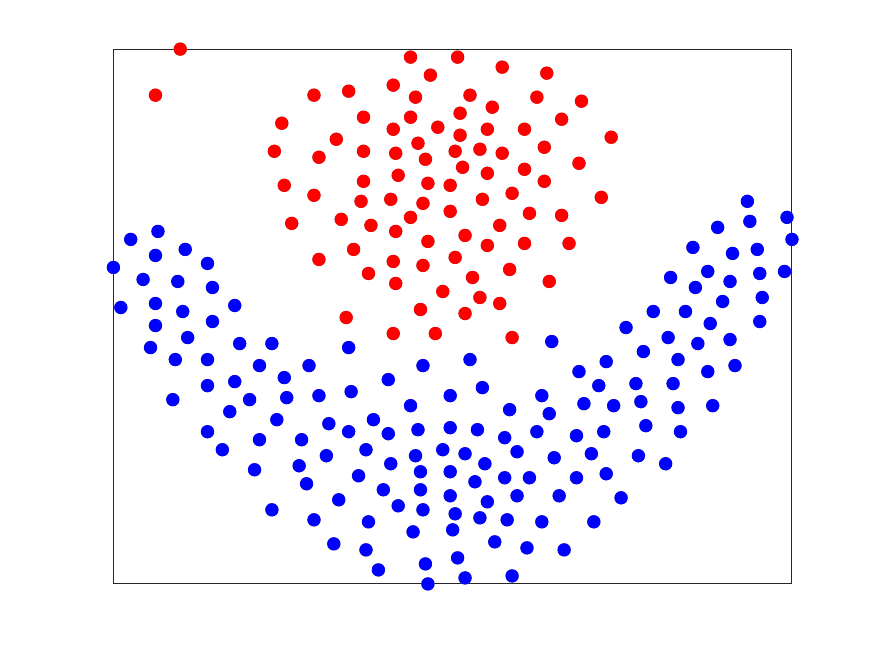}}
\caption{Working mechanism of CEHM on the Flame data set}
\label{Flame}
\end{figure}

\subsection{A.6 The convergence results on the last fifteen data sets}

The convergence analysis on the last fifteen data sets in Table \ref{ucidata} in the main paper is shown in Figure \ref{conve_whole}. From Figure \ref{conve_whole}, it is easy to see that the value of the loss function is gradually reduced. In addition, on these data sets, the maximum number of iterations is not greater than 30.

\begin{figure}[!ht]
\captionsetup[subfigure]{labelformat=empty}
      \centering
       \subfloat[\scriptsize{\textbf{(f)} data 6}]{\includegraphics[width=0.15\textwidth]{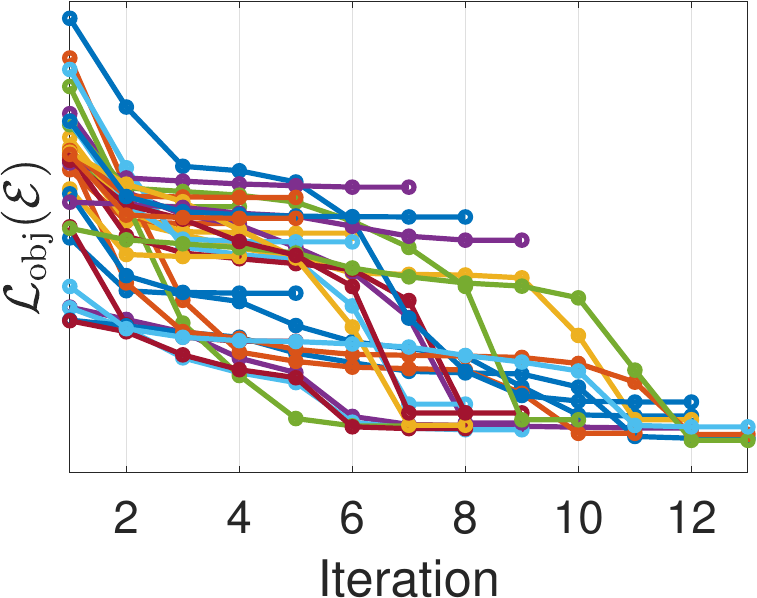}}\;
       \subfloat[\scriptsize{\textbf{(g)} data 7}]{\includegraphics[width=0.15\textwidth]{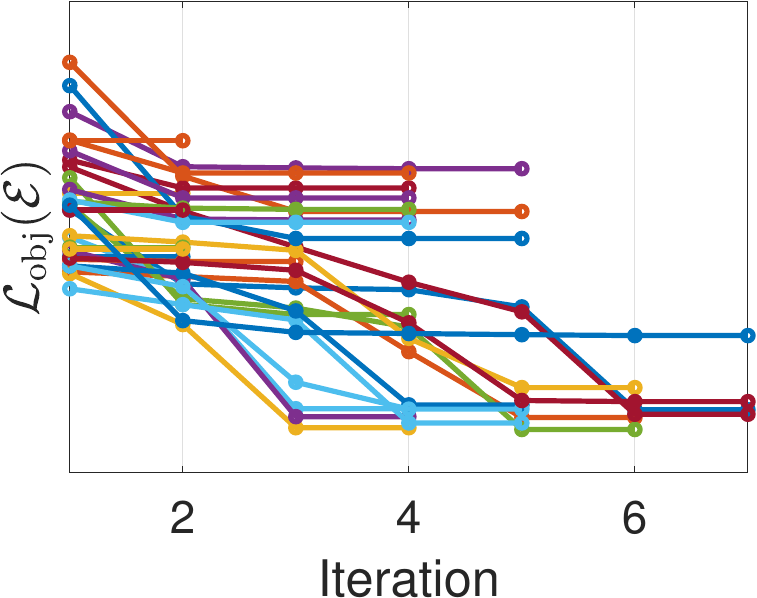}}\;
       \subfloat[\scriptsize{\textbf{(h)} data 8}]{\includegraphics[width=0.15\textwidth]{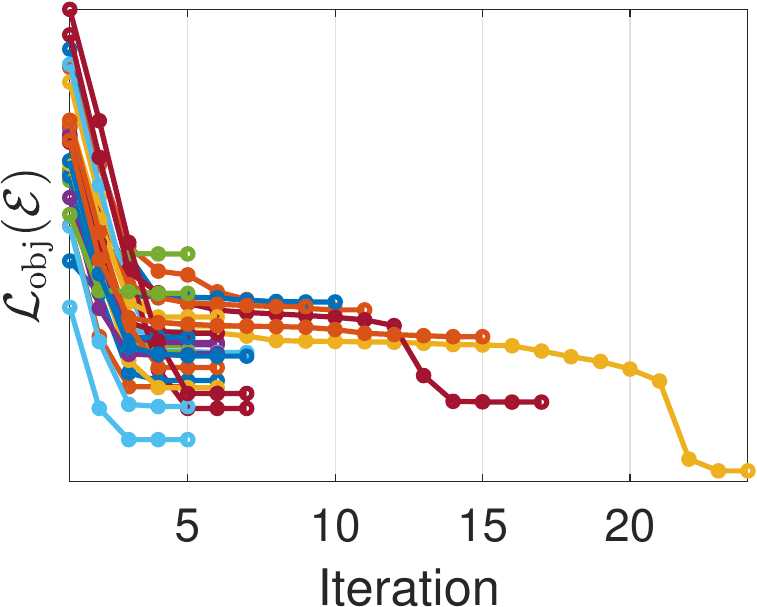}}\
       \subfloat[\scriptsize{\textbf{(i)} data 9}]{\includegraphics[width=0.15\textwidth]{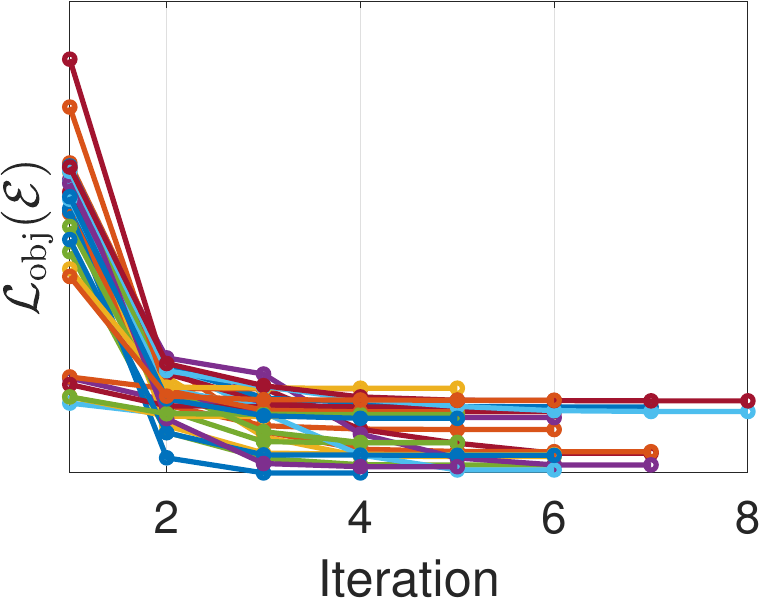}}\;
       \subfloat[\scriptsize{\textbf{(j)} data 10}]{\includegraphics[width=0.15\textwidth]{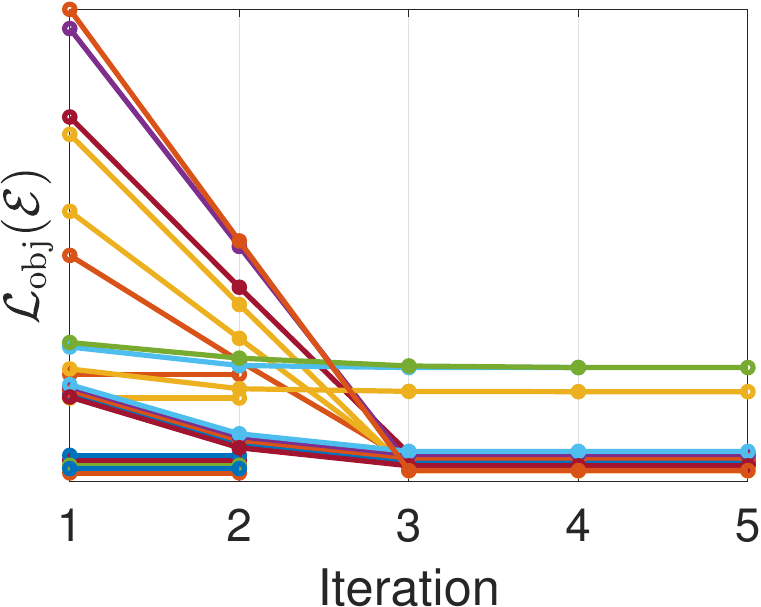}}\;
       \subfloat[\scriptsize{\textbf{(k)} data 11}]{\includegraphics[width=0.15\textwidth]{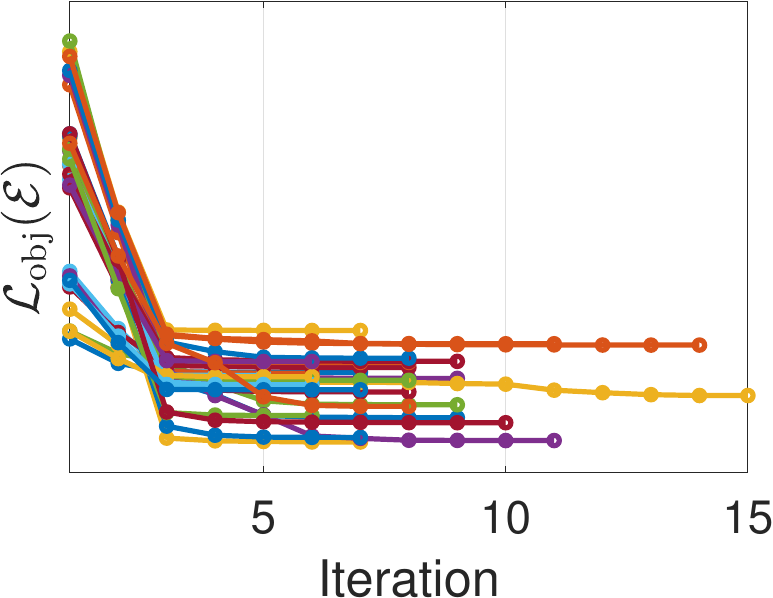}}\
       \subfloat[\scriptsize{\textbf{(l)} data 12}]{\includegraphics[width=0.15\textwidth]{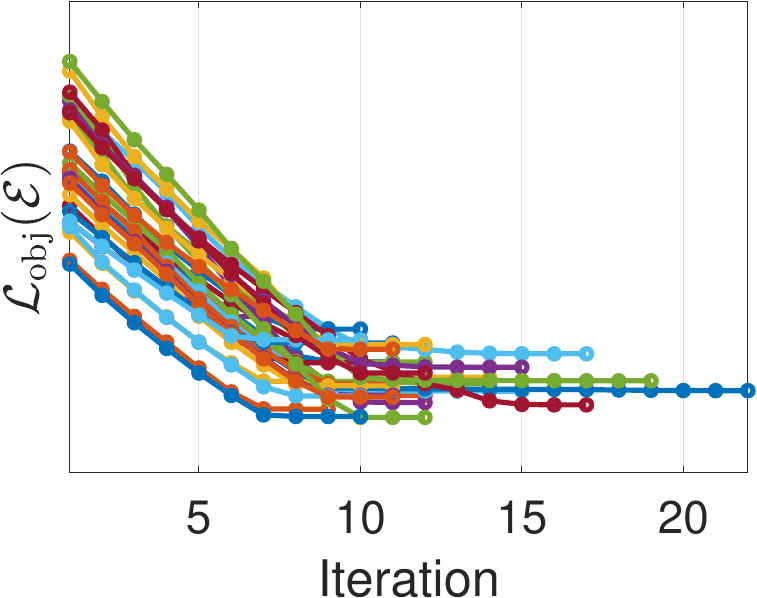}}\;
       \subfloat[\scriptsize{\textbf{(m)} data 13}]{\includegraphics[width=0.15\textwidth]{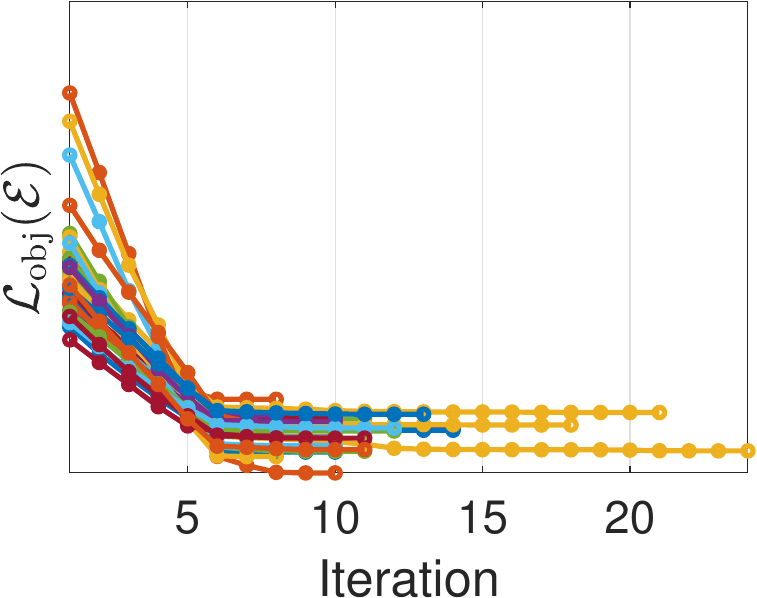}}\;
       \subfloat[\scriptsize{\textbf{(n)} data 14}]{\includegraphics[width=0.15\textwidth]{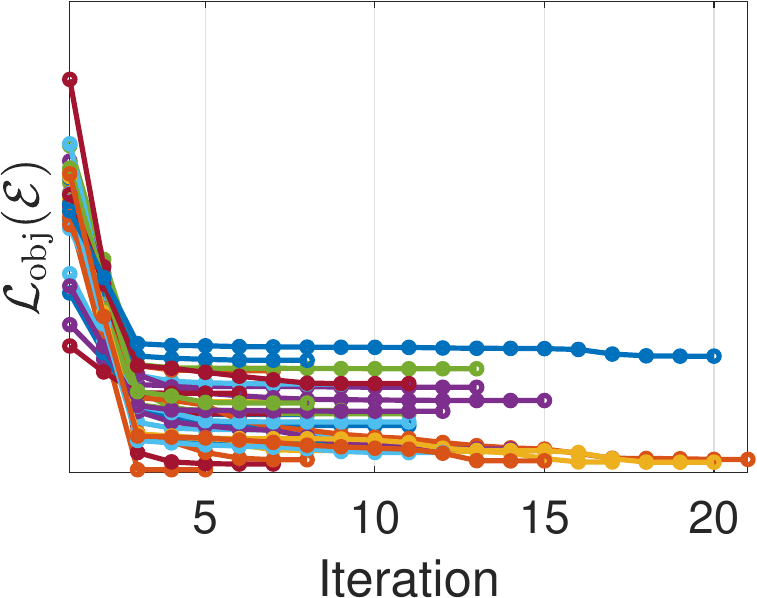}}\
       \subfloat[\scriptsize{\textbf{(o)} data 15}]{\includegraphics[width=0.15\textwidth]{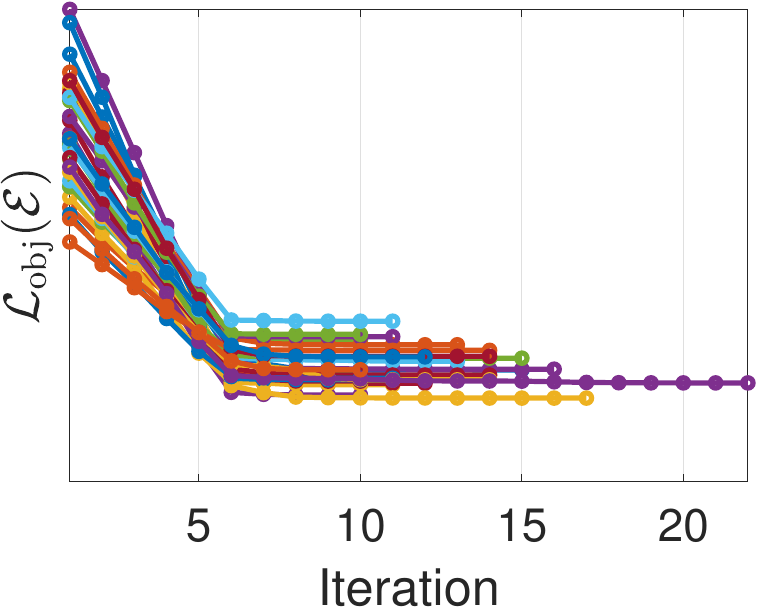}}\;
       \subfloat[\scriptsize{\textbf{(p)} data 16}]{\includegraphics[width=0.15\textwidth]{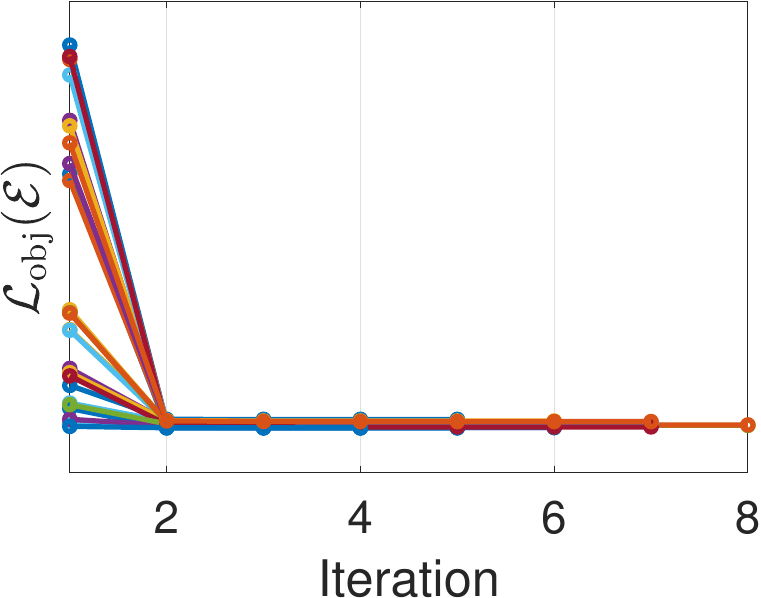}}\;
       \subfloat[\scriptsize{\textbf{(q)} data 17}]{\includegraphics[width=0.15\textwidth]{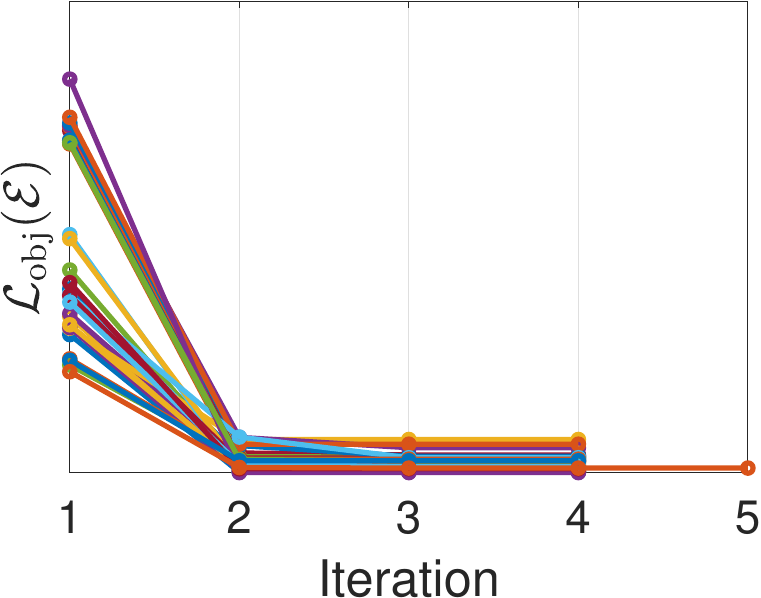}}\
       \subfloat[\scriptsize{\textbf{(r)} data 18}]{\includegraphics[width=0.15\textwidth]{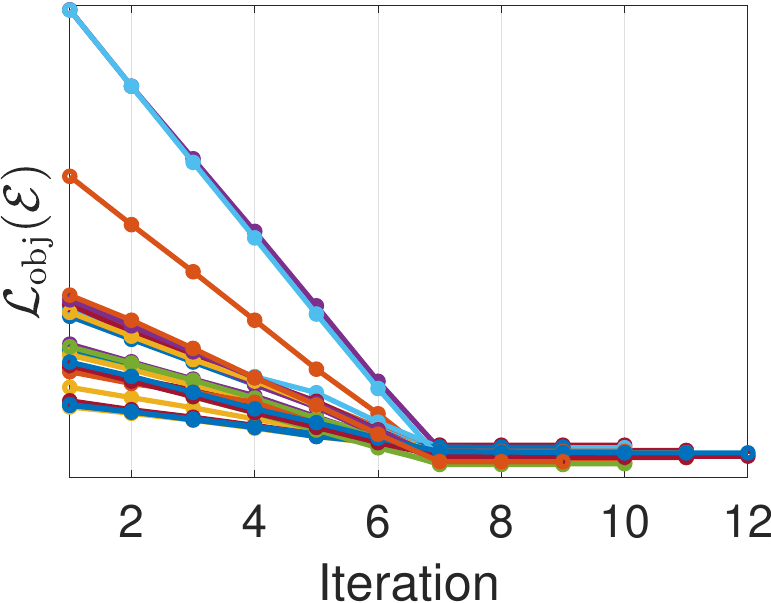}}\;
       \subfloat[\scriptsize{\textbf{(s)} data 19}]{\includegraphics[width=0.15\textwidth]{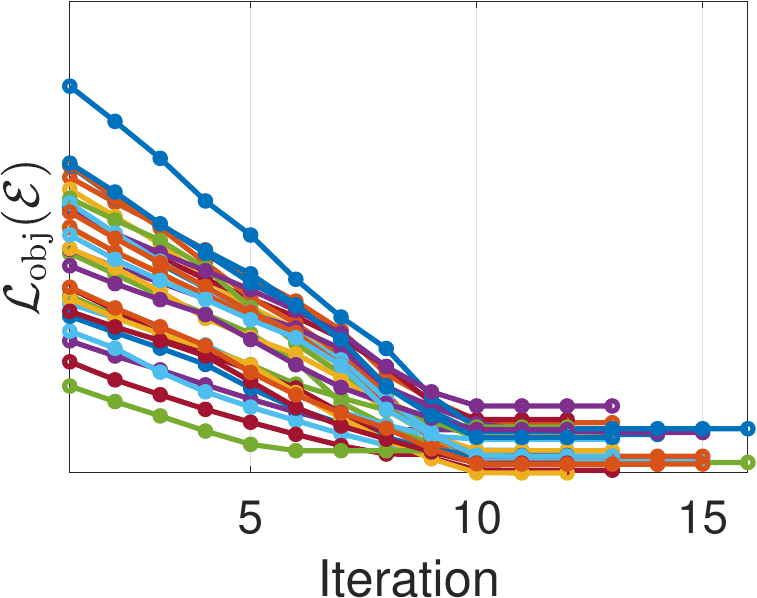}}\;
       \subfloat[\scriptsize{\textbf{(t)} data 20}]{\includegraphics[width=0.15\textwidth]{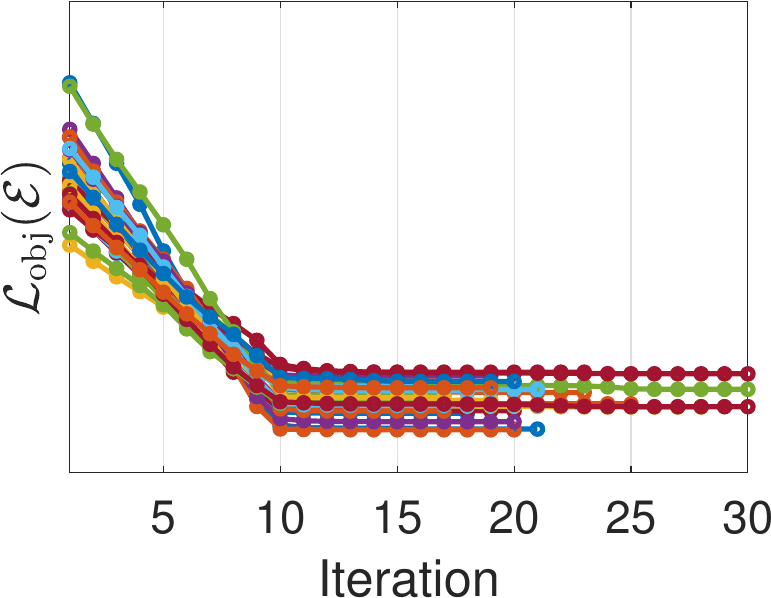}}
\caption{The convergence demonstration of CEHM on the last fifteen data sets}
\label{conve_whole}
\end{figure}

\subsection{A.7 The compared methods and their settings} \label{ce:re}

The compared clustering ensemble methods and their settings are as follows:
\begin{itemize}
  \item U-SENC \cite{Huang2020}: Ultra-Scalable Ensemble Clustering (2020).
  \item DREC \cite{Zhou2019Ensemble}: Ensemble clustering based on Dense Representation (2019). In DREC, the parameter setting is set as $\lambda=100$ as suggested.
  \item EC-CMS \cite{Jia2023}: Ensemble Clustering via Co-Association Matrix Self-Enhancement. In EC-CMS, as suggested, $\alpha=0.8$ and $\lambda=0.4$.
  \item PTGP \cite{huang2016robust}: Probability Trajectory based Graph Partitioning (2016). In PTGP, the parameters $T$ and $K$ take the settings in the published code.
  \item ECPCS-MC \cite{huang2021}: Ensemble Clustering by Propagating Cluster-wise Similarities with Meta-cluster-based Consensus function (2021). In ECPCS-MC, parameter $t=20$.
  \item PTA \cite{huang2016robust}: Probability Trajectory Accumulation (2016). In PTA, the parameters take the settings in the published code.
  \item WCT \cite{Iam2011}: Weighted Connected-Triple. In WCT, the parameter DC is set as 0.9 as suggested.
  \item WTQ \cite{Iam2011}: Weighted Triple-Quality. In WTQ, the parameter DC is set as 0.9 as suggested.
  \item Voting \cite{Li2017}: Voting based on Meta-CLustering Algorithm (2017).
\end{itemize}

\begin{table*}[!th]
\begin{center}
\caption{The ARI from the compared methods on the twenty data sets}
\small
\begin{tabular*}{\hsize}{@{\extracolsep{\fill}}ccccccccccc}
\hline
Data& U-SENC & DREC & EC-CMS & PTGP & ECPCS-MC & PTA & WCT & WTQ & Voting & CEHM\\
\hline
1&0.6256&0.6300&0.6176&0.6266&0.6372&0.4569&0.6204&0.6081&0.5757&\textbf{\underline{0.6553}}\\
2&0.9187&0.9076&0.8563&0.9046&0.8998&0.8886&0.9176&0.8677&0.8409&\textbf{\underline{0.9261}}\\
3&0.7750&0.6235&0.6958&0.6312&0.7221&0.5504&0.7053&0.6821&0.7765&\textbf{\underline{0.7783}}\\
4&0.4194&0.3638&0.4136&0.2014&0.3876&0.1514&0.3778&0.2705&0.2100&\textbf{\underline{0.4210}}\\
5&\textbf{\underline{0.4519}}&0.4330&0.4423&0.4115&0.4462&0.4191&0.4315&0.4181&0.4116&0.4496\\
6&0.4281&0.4573&0.6129&0.3717&0.5792&0.4675&0.4596&0.4582&0.3200&\textbf{\underline{0.7290}}\\
7&0.8359&0.8127&0.7318&0.8479&0.8687&0.8503&0.7463&0.7208&0.7021&\textbf{\underline{0.8734}}\\
8&0.2405&0.2303&0.3894&0.2262&0.3536&0.2324&0.2292&0.2196&0.1772&\textbf{\underline{0.3950}}\\
9&0.7242&0.6943&0.2756&0.6534&0.6513&0.5832&0.6854&0.7043&0.5103&\textbf{\underline{0.7400}}\\
10&\textbf{\underline{0.6220}}&\textbf{\underline{0.6220}}&\textbf{\underline{0.6220}}&\textbf{\underline{0.6220}}&0.6217&\textbf{\underline{0.6220}}&0.5254&0.5514&0.3399&\textbf{\underline{0.6220}}\\
11&0.3487&0.3753&0.2457&0.1931&0.2648&0.3134&0.3056&0.2931&0.2437&\textbf{\underline{0.5174}}\\
12&0.5110&\textbf{\underline{0.5294}}&0.4679&0.5078&0.5142&0.5277&0.4771&0.4296&0.4890&0.5253\\
13&0.4977&0.5336&0.3438&0.4884&0.5009&0.4655&0.5175&0.4632&0.4782&\textbf{\underline{0.5452}}\\
14&0.2011&0.0549&0.1109&0.0249&0.2215&0.0548&0.0609&0.1417&0.0406&\textbf{\underline{0.2651}}\\
15&0.5508&0.5905&0.4361&0.5693&0.6063&0.6089&0.5399&0.4623&0.4383&\textbf{\underline{0.6266}}\\
16&0.9079&0.9066&OM&0.9057&0.9098&0.8913&0.9086&0.9080&0.9105&\textbf{\underline{0.9111}}\\
17&0.1857&0.2864&OM&0.2837&0.2000&0.0329&0.1381&0.2747&0.2950&\textbf{\underline{0.3759}}\\
18&0.3773&0.2506&OM&0.1573&0.0503&0.1829&0.1550&0.1258&0.0832&\textbf{\underline{0.5831}}\\
19&0.6325&OM&OM&OM&0.6233&OM&0.5829&0.5238&0.5405&\textbf{\underline{0.6392}}\\
20&0.3779&OM&OM&OM&0.3974&OM&0.4064&0.3268&0.3453&\textbf{\underline{0.4092}}\\
\hline
\end{tabular*}
\label{index_ari}
\end{center}
\end{table*}

\subsection{A.8 The ARI from the compared methods on the twenty data sets}

The ensemble performance results evaluated by ARI is shown in Table \ref{index_ari}. As shown by Table \ref{index_ari}, the CEHM is also marked on most of the data sets.

\subsection{A.9 Statistical analysis about the experimental results}

We statistically analyze the experimental results of Table \ref{index_nmi} and Table \ref{index_ari}. The analysis results are shown in Figure \ref{pdata2} to Figure \ref{pdata20}. The analysis was performed by the genescloud tools, a free online platform for data analysis (https://www.genescloud.cn).

Firstly, the Kruskal-Wallis Non-Parametric Test is utilized to test whether there are statistically significant differences between the distributions of the index values of the compared methods. As for the Kruskal-Wallis test, a low P-value (typically $p\leq 0.05$) generally indicates sufficient evidence to suggest that there is a statistically significant difference between at least one pair of medians. In Figure \ref{pdata1} to Figure \ref{pdata20}, the P-vale $p <0.0001$ for all the data sets and the two indices, which means there exists a significant difference between the compared methods.

Secondly, we further utilize Dunn's test to identify significant differences between pairs of methods. In Dunn's test, the P-value indicates the probability that the observed difference between two groups is due to random chance. A small P-value suggests a significant difference in the group. In Figure \ref{pdata2} to Figure \ref{pdata20}, the two ends of a line corresponding to the two compared methods, $p<0.05$ is represented by $*$, $p<0.01$ is represented by $**$, $p<0.0001$ is represented by $***$. From Figure \ref{pdata1} to Figure \ref{pdata20}, it can be seen that CEHM obtains many times of significant differences from other methods.

Finally, the boxplot with points is utilized to show the distributions of the indices values obtained by each compared method. It can be seen from Figure \ref{pdata1} to Figure \ref{pdata20} that CEHM obtains stable performance on most of those data sets.

\begin{figure}[!ht]
\captionsetup[subfigure]{labelformat=empty}
{\includegraphics[width=0.45\textwidth,height=0.2\textheight]{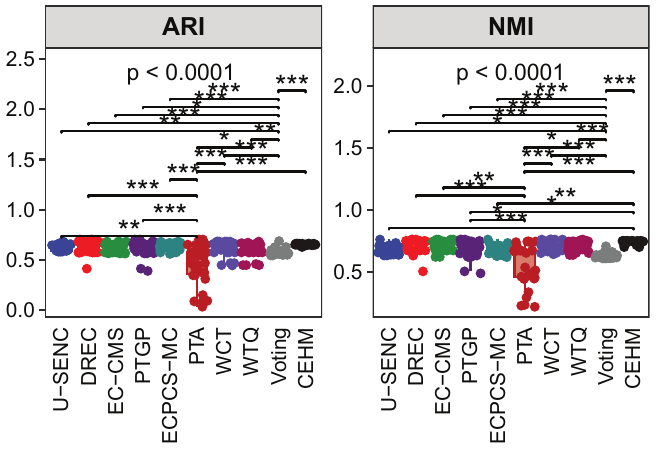}}\;
\caption{The Box Plot with Statistical Significance on data set 1}
\label{pdata1}
\end{figure}

\begin{figure}[!ht]
\captionsetup[subfigure]{labelformat=empty}
{\includegraphics[width=0.45\textwidth,height=0.2\textheight]{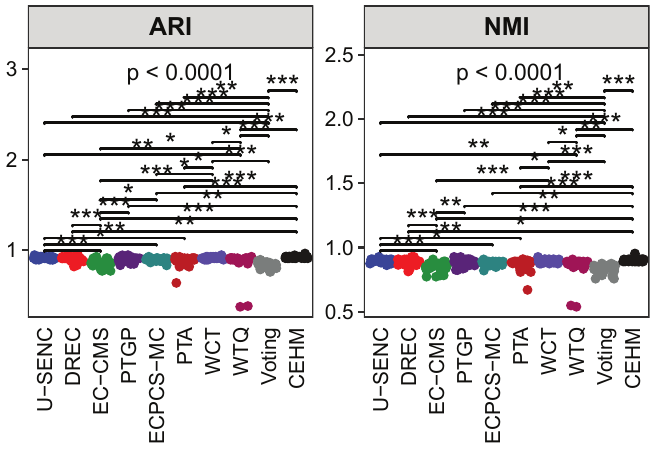}}\;
\caption{The Box Plot with Statistical Significance on data set 2}
\label{pdata2}
\end{figure}

\begin{figure}[!ht]
\captionsetup[subfigure]{labelformat=empty}
{\includegraphics[width=0.45\textwidth,height=0.2\textheight]{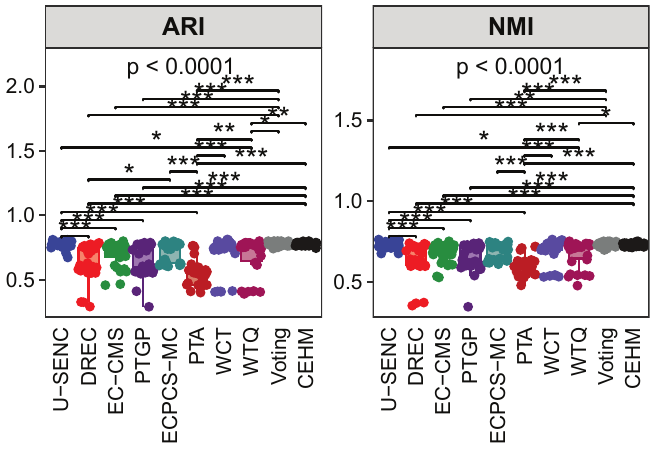}}\;
\caption{The Box Plot with Statistical Significance on data set 3}
\label{pdata3}
\end{figure}

\begin{figure}[!ht]
\captionsetup[subfigure]{labelformat=empty}
{\includegraphics[width=0.45\textwidth,height=0.2\textheight]{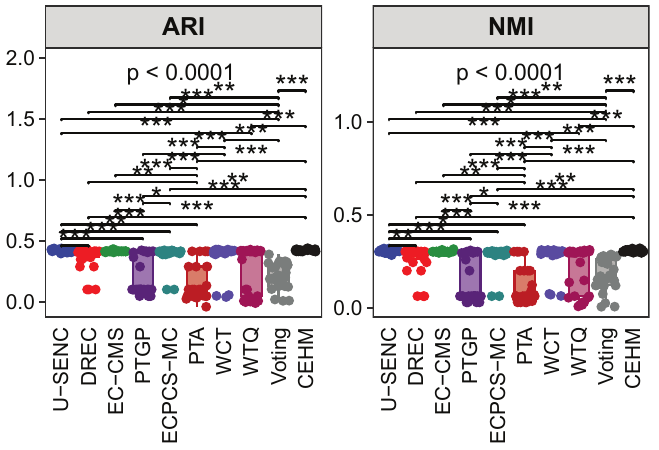}}\;
\caption{The Box Plot with Statistical Significance on data set 4}
\label{pdata4}
\end{figure}

\begin{figure}[!ht]
\captionsetup[subfigure]{labelformat=empty}
{\includegraphics[width=0.45\textwidth,height=0.2\textheight]{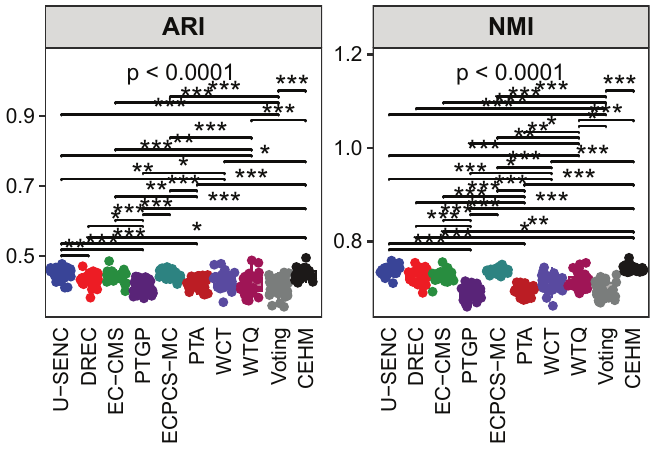}}\;
\caption{The Box Plot with Statistical Significance on data set 5}
\label{pdata5}
\end{figure}

\begin{figure}[!ht]
\captionsetup[subfigure]{labelformat=empty}
{\includegraphics[width=0.45\textwidth,height=0.2\textheight]{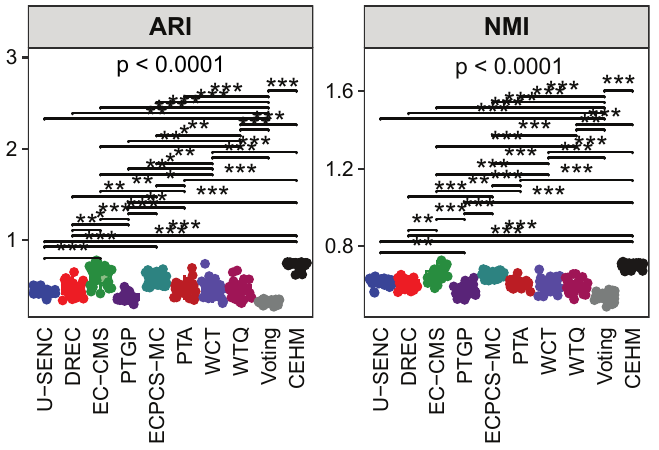}}\;
\caption{The Box Plot with Statistical Significance on data set 6}
\label{pdata6}
\end{figure}

\begin{figure}[!ht]
\captionsetup[subfigure]{labelformat=empty}
{\includegraphics[width=0.45\textwidth,height=0.2\textheight]{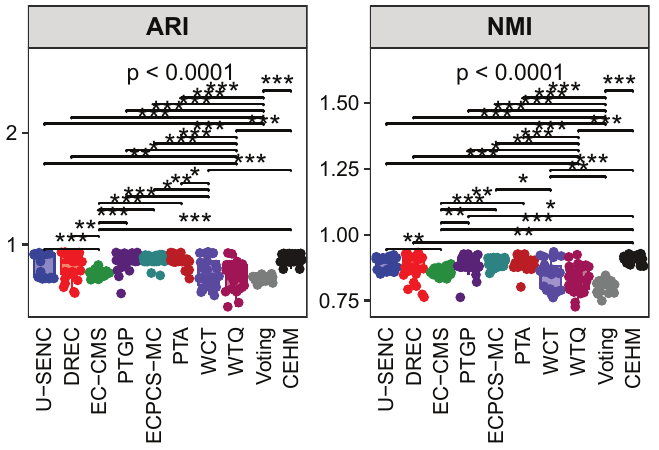}}\;
\caption{The Box Plot with Statistical Significance on data set 7}
\label{pdata7}
\end{figure}

\begin{figure}[!ht]
\captionsetup[subfigure]{labelformat=empty}
{\includegraphics[width=0.45\textwidth,height=0.2\textheight]{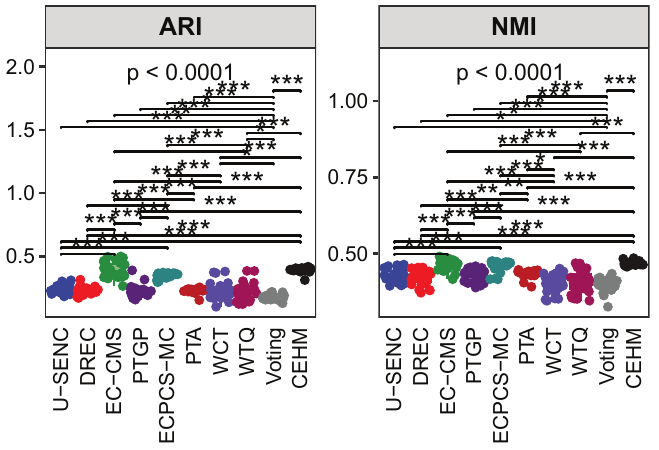}}\;
\caption{The Box Plot with Statistical Significance on data set 8}
\label{pdata8}
\end{figure}

\begin{figure}[!ht]
\captionsetup[subfigure]{labelformat=empty}
{\includegraphics[width=0.45\textwidth,height=0.2\textheight]{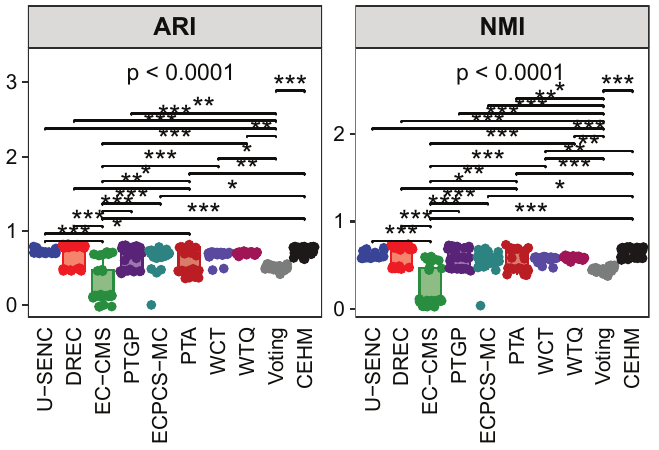}}\;
\caption{The Box Plot with Statistical Significance on data set 9}
\label{pdata9}
\end{figure}

\begin{figure}[!ht]
\captionsetup[subfigure]{labelformat=empty}
{\includegraphics[width=0.45\textwidth,height=0.2\textheight]{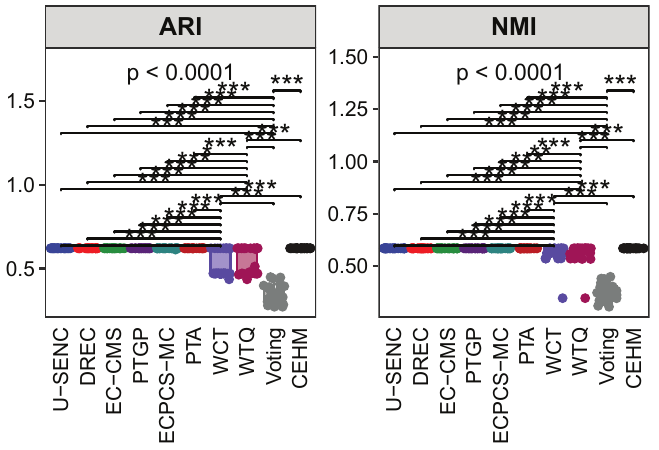}}\;
\caption{The Box Plot with Statistical Significance on data set 10}
\label{pdata10}
\end{figure}

\begin{figure}[!ht]
\captionsetup[subfigure]{labelformat=empty}
{\includegraphics[width=0.45\textwidth,height=0.2\textheight]{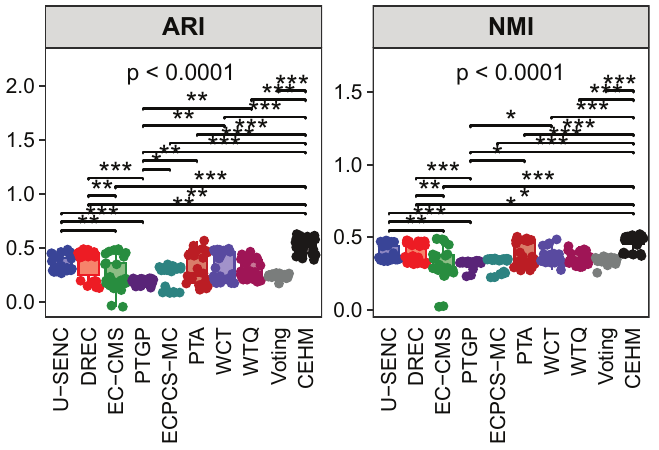}}\;
\caption{The Box Plot with Statistical Significance on data set 11}
\label{pdata11}
\end{figure}

\begin{figure}[!ht]
\captionsetup[subfigure]{labelformat=empty}
{\includegraphics[width=0.45\textwidth,height=0.2\textheight]{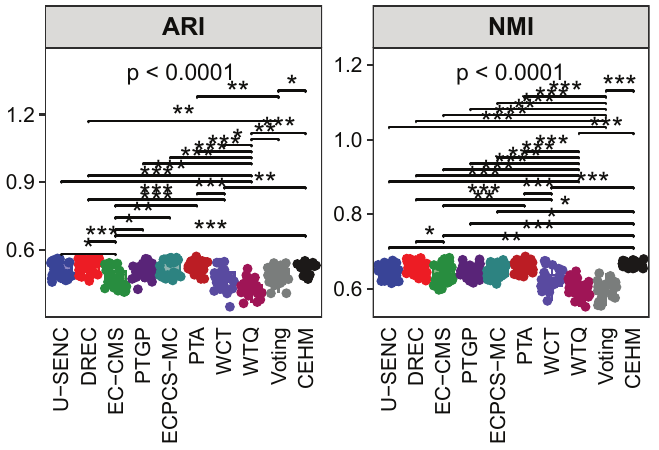}}\;
\caption{The Box Plot with Statistical Significance on data set 12}
\label{pdata12}
\end{figure}

\begin{figure}[!ht]
\captionsetup[subfigure]{labelformat=empty}
{\includegraphics[width=0.45\textwidth,height=0.2\textheight]{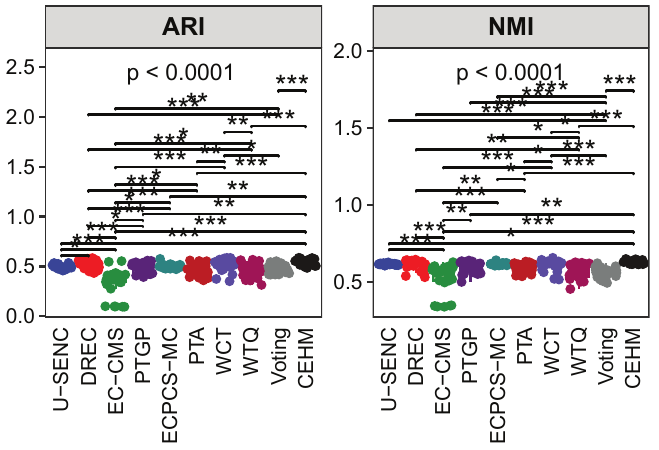}}\;
\caption{The Box Plot with Statistical Significance on data set 13}
\label{pdata13}
\end{figure}

\begin{figure}[!ht]
\captionsetup[subfigure]{labelformat=empty}
{\includegraphics[width=0.45\textwidth,height=0.2\textheight]{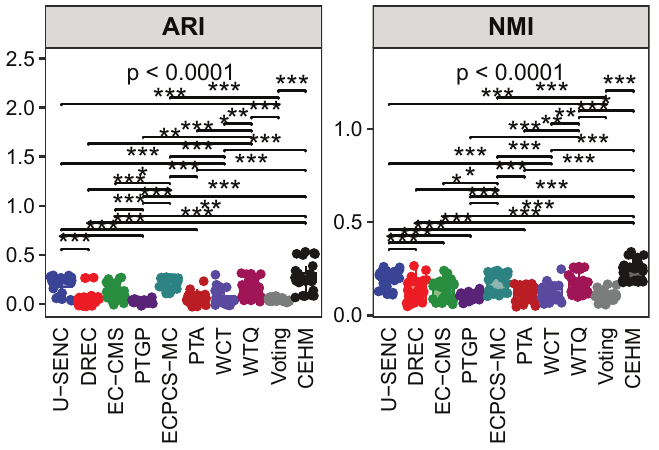}}\;
\caption{The Box Plot with Statistical Significance on data set 14}
\label{pdata14}
\end{figure}

\begin{figure}[!ht]
\captionsetup[subfigure]{labelformat=empty}
{\includegraphics[width=0.45\textwidth,height=0.2\textheight]{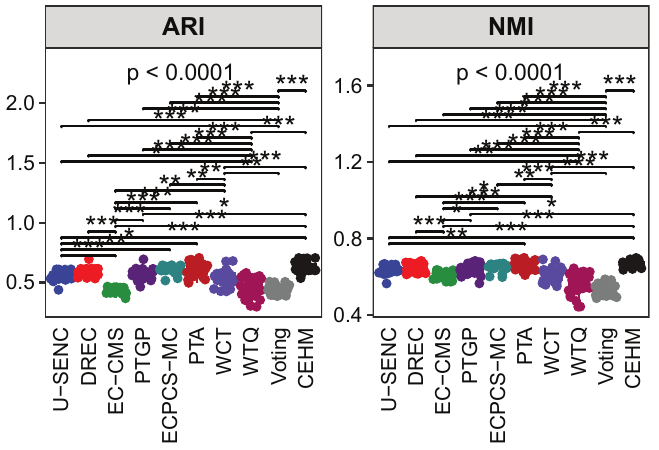}}\;
\caption{The Box Plot with Statistical Significance on data set 15}
\label{pdata15}
\end{figure}

\begin{figure}[!ht]
\captionsetup[subfigure]{labelformat=empty}
{\includegraphics[width=0.45\textwidth,height=0.2\textheight]{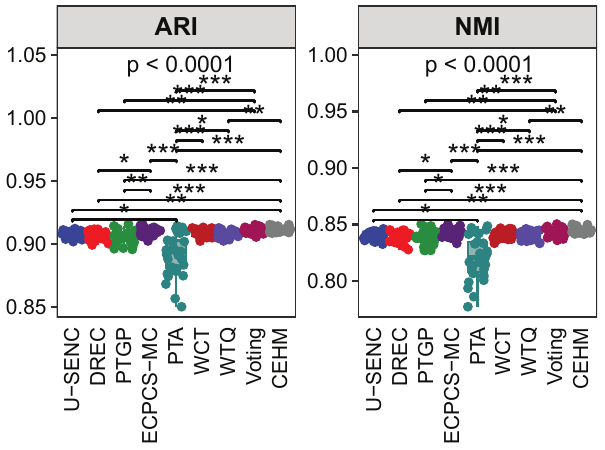}}\;
\caption{The Box Plot with Statistical Significance on data set 16}
\label{pdata16}
\end{figure}

\begin{figure}[!ht]
\captionsetup[subfigure]{labelformat=empty}
{\includegraphics[width=0.45\textwidth,height=0.2\textheight]{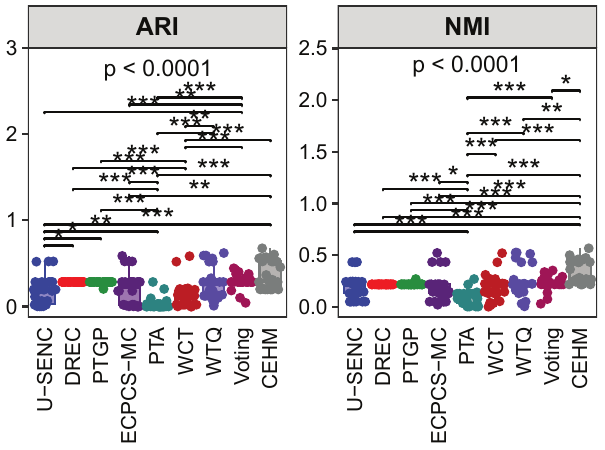}}\;
\caption{The Box Plot with Statistical Significance on data set 17}
\label{pdata17}
\end{figure}

\begin{figure}[!h]
\captionsetup[subfigure]{labelformat=empty}
{\includegraphics[width=0.45\textwidth,height=0.2\textheight]{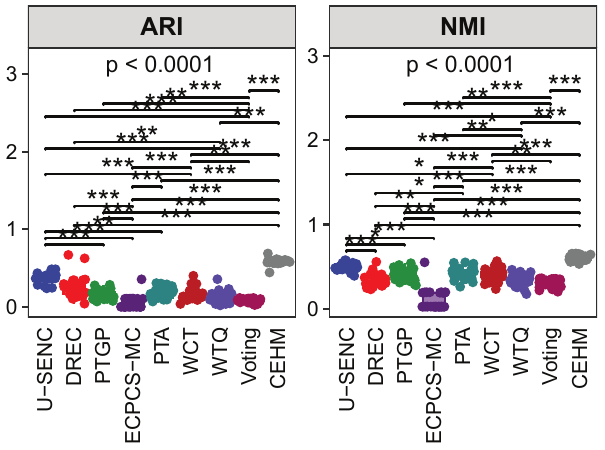}}\;
\caption{The Box Plot with Statistical Significance on data set 18}
\label{pdata18}
\end{figure}

\begin{figure}[!h]
\includegraphics[width=0.45\textwidth,height=0.2\textheight]{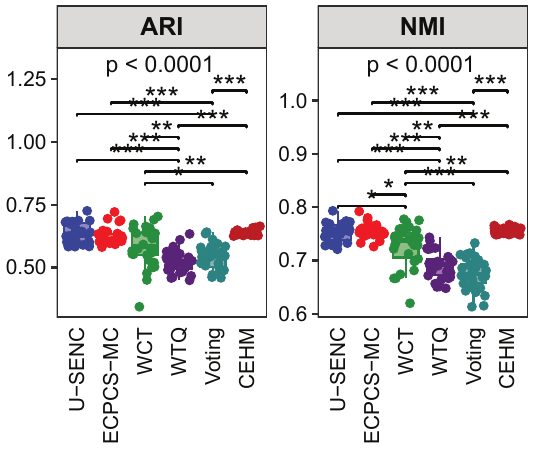}
\caption{The Box Plot with Statistical Significance on data set 19}
\label{pdata19}
\end{figure}

\begin{figure}[!h]
\includegraphics[width=0.45\textwidth,height=0.2\textheight]{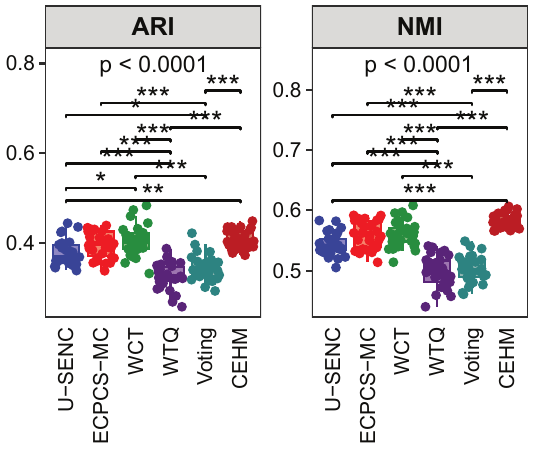}
\caption{The Box Plot with Statistical Significance on data set 20}
\label{pdata20}
\end{figure}

\end{document}